
\documentclass[a4paper,11pt]{article}


\usepackage[utf8]{inputenc} 
\usepackage[T1]{fontenc}    

\usepackage{graphicx}
\usepackage{subfig}
\usepackage{booktabs} 
\usepackage{nicefrac}       
\usepackage[margin=1in]{geometry}
\usepackage{url}
\usepackage{smile}
\usepackage{algorithm}
\usepackage{algorithmic}
\usepackage{todonotes}
\usepackage{epstopdf}
\usepackage{wrapfig}
\usepackage{enumitem}
\usepackage{mathtools}

\usepackage[colorlinks, linkcolor=blue, anchorcolor=blue, citecolor=blue]{hyperref}       

\usepackage{kpfonts}
\DeclareMathAlphabet{\mathsf}{OT1}{cmss}{m}{n}

\SetMathAlphabet{\mathsf}{bold}{OT1}{cmss}{bx}{n}


\usepackage{natbib}


\usepackage{pifont}
\newcommand{\cmark}{\ding{51}}%
\newcommand{\xmark}{\ding{55}}%

\usepackage{makecell}

\newcommand{\Bnorm}{B_{p,q}^s}
\newcommand{\cBnorm}{\mathcal{B}_{p,q}^s}
\newcommand{\bnorm}{b_{p,q}^s}

\newcommand{\mone}{\mathds{1}}

\newcommand{\mtoned}{\widetilde{\mathds{1}}_{\Delta}}

\newcommand{\ttimes}{\widetilde{\times}}

\newcommand{\hf}{\widehat{f}}

\newcommand{\tf}{\widetilde{f}}
\newcommand{\tM}{\widetilde{M}}

\newcommand{\thh}{\widetilde{h}}

\newcommand{\trho}{\widetilde{\rho}}

\newcommand{\tdi}{\widetilde{d}_i^2}
\newcommand{\supp}{\mathrm{supp}}

\newcommand{\ReLU}{\mathrm{ReLU}}
\newcommand{\Conv}{\mathrm{Conv}}
\newcommand{\id}{\mathrm{id}}

\newtheorem*{theorem*}{Theorem}


\begin{document}


\title{Besov Function Approximation and Binary Classification on  
          Low-Dimensional Manifolds Using Convolutional Residual Networks}
\author{Hao Liu $\quad$ Minshuo Chen $\quad$ Tuo Zhao $\quad$ Wenjing Liao \thanks{Hao Liu is affiliated with the Department of Mathematics at Hong Kong Baptist University; Wenjing Liao is affiliated with the School of Mathematics at Georgia Tech; Minshuo Chen and Tuo Zhao are affiliated with the ISYE department at Georgia Tech ; Email: \text{haoliu@hkbu.edu.hk, $\{$mchen393, wliao60, tzhao80$\}$@gatech.edu }.}}

\date{}

\maketitle

\begin{abstract}

Most of existing statistical theories on deep neural networks have sample complexities cursed by the data dimension and therefore cannot well explain the empirical success of deep learning on high-dimensional data. To bridge this gap, we propose to exploit low-dimensional geometric structures of the real world data sets. We establish theoretical guarantees of convolutional residual networks (ConvResNet) in terms of function approximation and statistical estimation for binary classification. Specifically, given the data lying on a $d$-dimensional manifold isometrically embedded in $\RR^D$, we prove that if the network architecture is properly chosen, ConvResNets can (1)  approximate {\it Besov functions} on manifolds with arbitrary accuracy, and (2) learn a classifier by minimizing the empirical logistic risk, which gives an {\it excess risk} in the order of $n^{-\frac{s}{2s+2(s\vee d)}}$, where $s$ is a smoothness parameter. This implies that the sample complexity depends on the intrinsic dimension $d$, instead of the data dimension $D$. Our results demonstrate that ConvResNets are adaptive to low-dimensional structures of data sets.
\end{abstract}


\section{Introduction}

Deep learning has achieved significant success in various practical applications with high-dimensional data set, such as computer vision \citep{krizhevsky2012imagenet}, natural language processing \citep{graves2013speech,young2018recent,wu2016google}, health care \citep{miotto2018deep,jiang2017artificial} and bioinformatics \citep{Alipanahi2015PredictingTS,zhou2015predicting}. 


The success of deep learning clearly demonstrates the great power of neural networks in representing complex data. In the past decades, the representation power of neural networks has been extensively studied. The most commonly studied architecture is the feedforward neural network (FNN), as it has a simple composition form.
The representation theory of FNNs has been developed
with smooth activation functions (e.g., sigmoid) in \citet{cybenko1989approximation,barron1993universal,mccaffrey1994convergence, hamers2006nonasymptotic,kohler2005adaptive,kohler2011analysis} or nonsmooth activations (e.g., ReLU) in \citet{lu2017expressive,yarotsky2017error,lee2017ability,suzuki2018adaptivity}. These works show that if the network architecture is properly chosen, FNNs can approximate uniformly smooth functions (e.g., H\"older or Sobolev) with arbitrary accuracy. 


\begin{figure}[htb!]
\vspace{-0.1in}
	\centering
	\subfloat[Convolution.]{\includegraphics[width=0.3\textwidth]{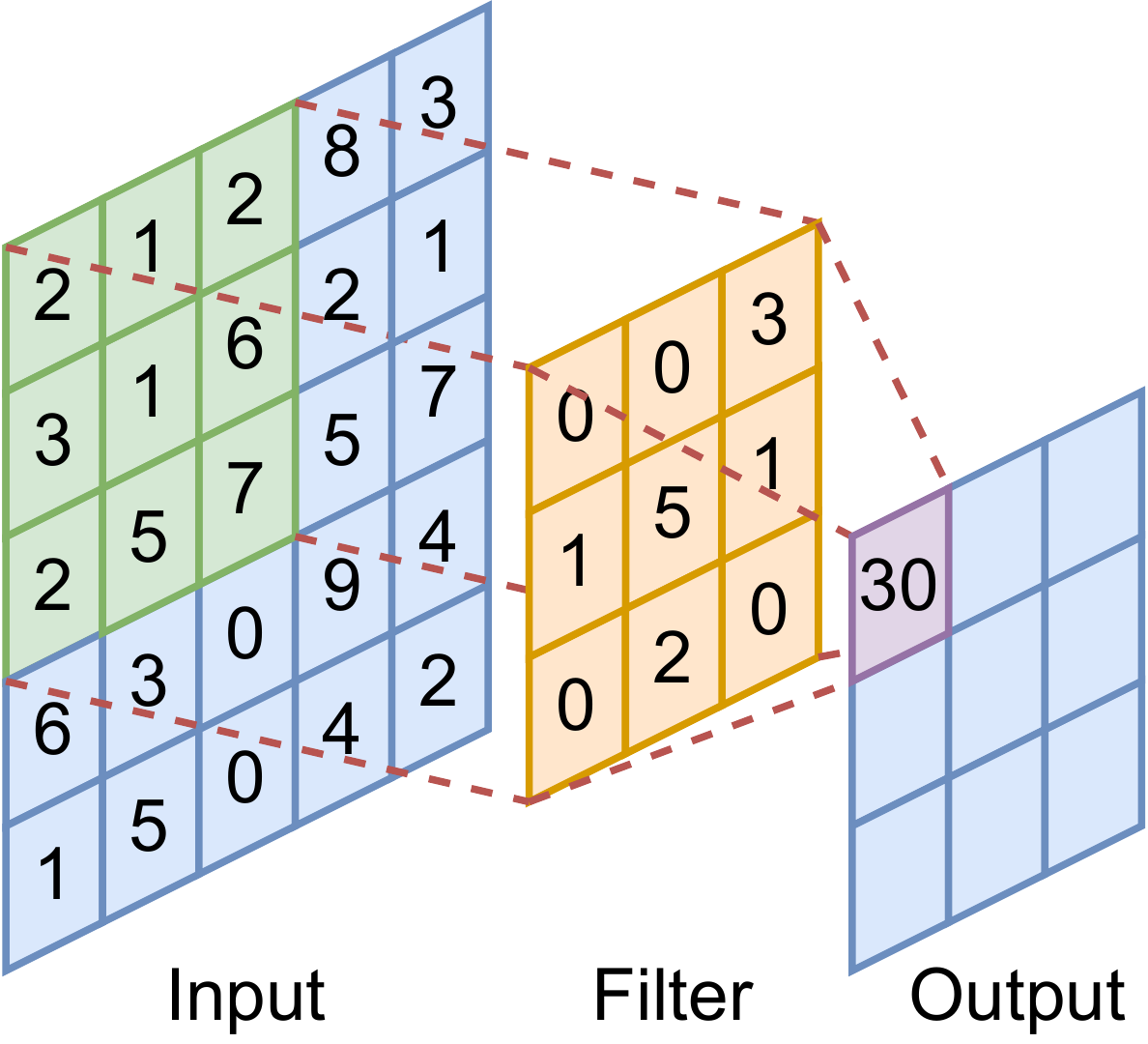}}\hspace{0.4in}
	\subfloat[Skip-layer connection.]{\includegraphics[width=0.3\textwidth]{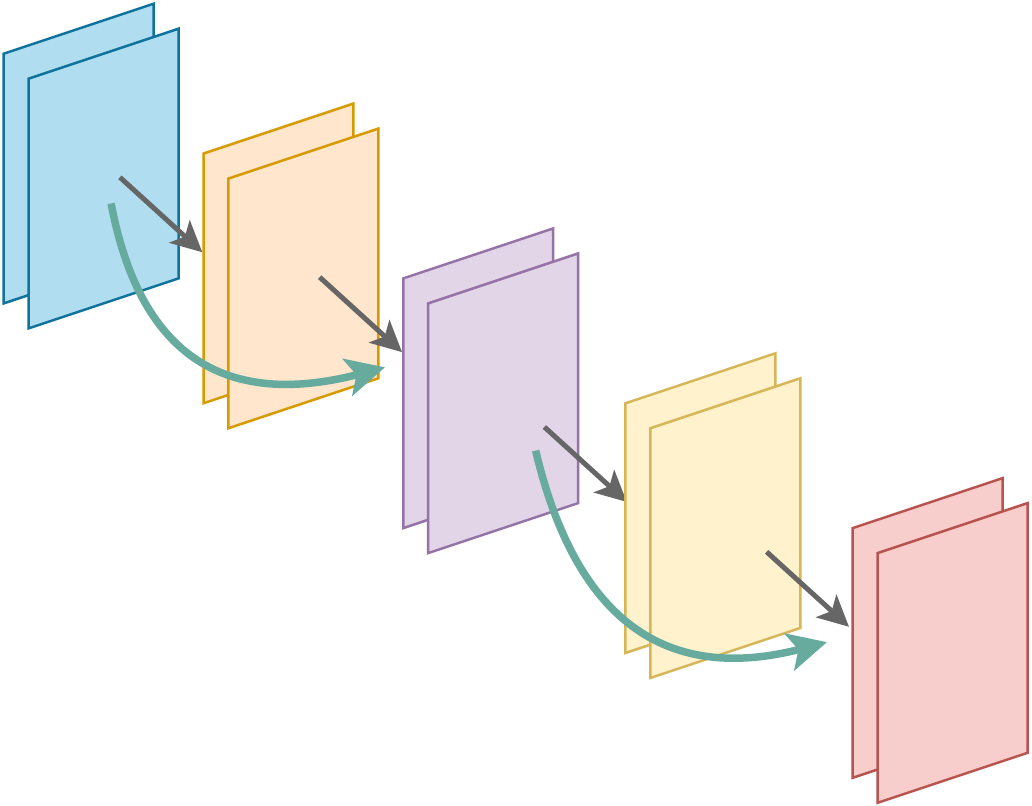}}
	\caption{Illustration of (a) convolution and (b) skip-layer connection.}
	\label{fig.cnnDemo}
\end{figure}

In real-world applications, convolutional neural networks (CNNs) are more popular than FNNs
 \citep{lecun1989backpropagation,krizhevsky2012imagenet,sermanet2013overfeat,he2016deep,chen2017deeplab,long2015fully,simonyan2014very,girshick2015fast}. In a CNN, each layer consists of several filters (channels) which are convolved with the input, as demonstrated in Figure \ref{fig.cnnDemo}(a). Due to such complexity in the CNN architecture, there are limited works on the representation theory of CNNs \citep{zhou2020universality,zhou2020theory,fang2020theory,petersen2020equivalence}. The constructed CNNs in these works become extremely wide (in terms of the size of each layer's output) as the approximation error goes to 0. In most real-life applications, the network width does not exceed 2048 \citep{zagoruyko2016wide,zhang2020resnest}.


 Convolutional residual networks (ConvResNet) is a special CNN architecture with skip-layer connections, as shown in Figure \ref{fig.cnnDemo}(b). Specifically, in addition to CNNs, ConvResNets have identity connections between inconsecutive layers. In many applications, ConvResNets outperform CNNs in terms of generalization performance and computational efficiency, and alleviate the vanishing gradient issue. Using this architecture, \citet{he2016deep} won the 1st place on the ImageNet classification task with a 3.57\% top 5 error in 2015.

\begin{table*}[htb!]\footnotesize
	\centering
\caption{\label{tab.results.compare}Comparison of our approximation theory and existing theoretical results.}
	\begin{tabular}{>{\centering\arraybackslash}p{4cm} cccc|c}
	\toprule\hline
		& \makecell{Network type} & \makecell{Function class} & \makecell{Low dim.\\ structure} & \makecell{Fixed\\width} & \makecell{Training }\\
		\hline
		\citet{yarotsky2017error} & FNN & Sobolev & \xmark & \xmark &   \multirow{4}{*}{\makecell{difficult to train \\due to the \\cardinality constraint}}\\
		 \cline{1-5}
		\citet{suzuki2018adaptivity} & FNN & Besov & \xmark & \xmark &  \\
		\cline{1-5}
	 \citet{chen1908nonparametric} & FNN & H\"{o}lder & \cmark & \xmark &   \\
	 \cline{1-5}
	 \citet{petersen2020equivalence}& CNN & FNN & \xmark & \xmark &  \\
	 \hline
	 \citet{zhou2020universality} & CNN & Sobolev & \xmark & \xmark & \multirow{3}{*}{\makecell{can be trained \\ without the \\cardinality constraint}}\\
	 \cline{1-5}
	 \citet{oono2019approximation} & ConvResNet & H\"{o}lder & \xmark&  \cmark &\\
	 \cline{1-5}
	 Ours& ConvResNet & Besov &\cmark & \cmark & \\
	 \hline\bottomrule
	\end{tabular}
\end{table*}

Recently, \citet{oono2019approximation} develops the only representation and statistical estimation theory of ConvResNets. \citet{oono2019approximation} proves that if the network architecture is properly set, ConvResNets with a fixed filter size and a fixed number of channels can universally approximate H\"{o}lder functions with arbitrary accuracy.
However, the sample complexity in \citet{oono2019approximation} grows exponentially with respect to the data dimension and therefore cannot well explain the empirical success of ConvResNets for high dimensional data. In order to estimate a $C^s$ function in $\RR^D$ with accuracy $\varepsilon$, the sample size required by \citet{oono2019approximation} scales as $\varepsilon^{-\frac{2s+D}{s}}$, which is far beyond the sample size used in practical applications. For example, the ImageNet data set consists of 1.2 million labeled images of size $224\times224\times3$. According to this theory, to achieve a 0.1 error, the sample size is required to be in the order of $10^{224\times224\times3}$ which greatly exceeds 1.2 million. Due to the curse of dimensionality, there is a huge gap between theory and practice.

We bridge such a gap by taking low-dimensional geometric structures of data sets into consideration. It is commonly believed that real world data sets exhibit low-dimensional structures due to rich local
regularities, global symmetries, or repetitive patterns \citep{hinton2006reducing,osher2017low,tenenbaum2000global}. For example, the ImageNet data set contains many images of the same object with certain transformations, such as rotation, translation, projection and skeletonization. As a result, the degree of freedom of the ImageNet data set is significantly smaller than the data dimension \citep{gong2019intrinsic}.

The function space considered in \citet{oono2019approximation} is the H\"{o}lder space in which functions are required to be differentiable everywhere up to certain order. In practice, the target function may not have high order derivatives. Function spaces with less restrictive conditions are more desirable. In this paper, we consider the Besov space $B^s_{p,q}$, which is more general than the H\"{o}lder space. In particular, the H\"{o}lder and Sobolev spaces are special cases of the Besov space:
$$
W^{s+\alpha,\infty}= \cH^{s,\alpha}\subseteq B^{s+\alpha}_{\infty,\infty} \subseteq B^{s+\alpha}_{p,q}
$$
for any $0<p,q\le \infty, s\in \NN$ and $\alpha\in(0,1]$.  For practical applications, it has been demonstrated in image processing that Besov norms can capture important features, such as edges \citep{jaffard2001wavelets}. Due to the generality of the Besov space, it is shown in \citet{suzuki2019deep} that kernel ridge estimators have a sub-optimal rate when estimating Besov functions.

In this paper,  we establish theoretical guarantees of ConvResNets for the approximation of Besov functions on a low-dimensional manifold, and a statistical theory on binary classification. 
Let $\cM$ be a $d$-dimensional compact Riemannian manifold isometrically embedded in $\RR^D$. Denote  the Besov space on $\cM$ as $\Bnorm(\cM)$ for $0<p,q\leq \infty$ and $0<s<\infty$. Our function approximation theory is established for $\Bnorm(\cM)$. For binary classification, we are given $n$ i.i.d. samples $\{(\xb_i,y_i)\}_{i=1}^n$ where $\xb_i \in \cM$ and $y_i\in\{-1,1\}$ is the label.
The label $y$ follows the Bernoulli-type distribution
$$	\PP(y=1|\xb)=\eta(\xb),\ \PP(y=-1|\xb)=1-\eta(\xb)
$$
for some  $\eta: \cM\rightarrow [0,1]$.
%
Our results (Theorem \ref{thm.approximation} and \ref{thm.classification}) are summarized as follows:

\begin{theorem*}[informal]
	Assume $s\geq d/p+1$.
	\begin{enumerate}[noitemsep,topsep=0pt,leftmargin=*]
		\item Given $\varepsilon\in(0,1)$, we construct a ConvResNet architecture such that, for any $f^*\in \Bnorm(\cM)$, if the weight parameters of this ConvResNet  are properly chosen, it gives rises to $\bar{f}$ satisfying
		$$
		\|\bar{f}-f^*\|_{L^{\infty}}\leq \varepsilon.
		$$
		\item Assume $\eta \in \Bnorm(\cM) $. Let $f_{\phi}^*$ be the minimizer of the population logistic risk. If the ConvResNet architecture is properly chosen, minimizing the empirical logistic risk gives rise to $\hf_{\phi,n}$ with the following excess risk bound
		\begin{align*}
			\EE(\cE_{\phi}(\hf_{\phi,n},f_{\phi}^*))\leq Cn^{-\frac{s}{2s+2(s\vee d)}}\log^4 n,
		\end{align*}
		where $\cE_{\phi}(\hf_{\phi,n},f_{\phi}^*)$ denotes the excess logistic risk of $\hf_{\phi,n}$ against $f_{\phi}^*$ and $C$ is a constant independent of $n$.
		
	\end{enumerate}
\end{theorem*}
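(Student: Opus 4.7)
The plan is to prove Part 1 (approximation) by reducing the manifold problem to a Euclidean one via charts, approximating each local piece by cardinal B-splines, and implementing the resulting piecewise polynomial expansion as a ConvResNet; Part 2 (classification rate) then follows from the usual bias--variance decomposition, using Part 1 to control the approximation error and pseudo-dimension bounds to control the variance.

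For Part 1, since $\cM$ is compact and $d$-dimensional, it admits a finite atlas $\{(U_i,\phi_i)\}_{i=1}^{C_\cM}$ with a smooth partition of unity $\{\rho_i\}$ subordinate to $\{U_i\}$. Writing $f^* = \sum_i (f^*\rho_i)$ and pushing each summand through $\phi_i^{-1}$ yields functions on a $d$-dimensional Euclidean domain inheriting Besov regularity with norms controlled by $\|f^*\|_{\Bnorm(\cM)}$. On each chart I approximate $(f^*\rho_i)\circ\phi_i^{-1}$ by a sparse linear combination of shifted and dilated tensor-product cardinal B-splines; under the hypothesis $s \geq d/p + 1$, the classical nonlinear $N$-term approximation theory in Besov spaces yields an $L^\infty$ error of order $N^{-s/d}$. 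The ConvResNet then implements (i) a soft indicator of each chart neighborhood $U_i$ built from truncated distance functions, (ii) the local coordinate map $\phi_i$, which is smooth and hence polynomial-approximable, and (iii) the selected cardinal B-splines, which are piecewise polynomial and thus ReLU-approximable to arbitrary precision. Each ingredient is first constructed as a small FNN and then wrapped into residual convolutional blocks of fixed filter size $K$ and fixed channel number $M$ via the padding/unpadding construction of \citet{oono2019approximation}, combined with the manifold-to-Euclidean pipeline of \citet{chen1908nonparametric}; the skip connections permit serialization so that only depth and nonzero-weight count grow with $\varepsilon$ while $K$ and $M$ stay fixed.

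For Part 2, I apply the standard decomposition
\begin{align*}
\EE\bigl[\cE_\phi(\hf_{\phi,n},f^*_\phi)\bigr]
 \;\lesssim\; \inf_{f \in \mathcal{F}}\bigl\|f - f^*_\phi\bigr\|_{L^\infty}
 \;+\; \sqrt{\frac{\mathrm{Pdim}(\mathcal{F})\,\log^2 n}{n}},
\end{align*}
where $\mathcal{F}$ is the ConvResNet class from Part 1 and $f^*_\phi = \log(\eta/(1-\eta))$ is first truncated at level $\pm\log(1/\delta)$ to avoid blow-up as $\eta$ approaches $\{0,1\}$. Part 1 supplies an approximation error of order $N^{-s/d}$ using $N$ nonzero weights in a depth-$O(N)$ network, so standard pseudo-dimension bounds for piecewise polynomial ReLU networks give $\mathrm{Pdim}(\mathcal{F}) = \widetilde{O}(N^2)$. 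Balancing $N^{-s/d}$ against $\sqrt{N^2/n}$, and using that the logistic loss is Lipschitz (so excess risk is a first-order rather than squared functional of the approximation error), yields the rate $n^{-s/(2s + 2(s \vee d))}$; the $s \vee d$ reflects an inherent saturation at $n^{-1/4}$ coming from the Rademacher-type variance bound, and the accumulated logarithmic overhead from truncation, covering, and the Rademacher comparison is absorbed into the $\log^4 n$ factor.

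The principal obstacle I anticipate is the fixed-width realization of the chart-based decomposition: with $C_\cM$ overlapping charts each contributing its own B-spline expansion, one must gate and sum these contributions using only $M$ channels and filter size $K$. The resolution is to serialize the chart contributions through successive residual blocks, routing the accumulating sum through one designated channel while the other channels transiently hold the active chart's partial computation, mirroring the channel-reuse trick of \citet{oono2019approximation} but extended to handle locally defined coordinate maps rather than globally defined functions. Verifying that this serialization preserves the weight magnitudes, parameter count, and polynomial approximation of each $\phi_i$ needed for the sharp pseudo-dimension and covering-number estimates is where most of the technical effort lies; once Part 1 is secured, Part 2 reduces to standard nonparametric estimation arguments adapted to the logistic loss.
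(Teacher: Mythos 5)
Your Part 1 is essentially the paper's argument: cover $\cM$ by balls of radius $\omega<\tau/2$, take tangent-space charts with a subordinate partition of unity, approximate each $(f^*\rho_i)\circ\phi_i^{-1}$ by $N$-term cardinal B-splines with error $N^{-s/d}$, implement the B-splines, the chart indicators (via an approximate squared-distance function composed with an approximate step function), and the multiplication as CNN sub-blocks, and serialize the sum through residual blocks with fixed filter size and fixed channel count in the style of \citet{oono2019approximation}. One small caution: the paper's $\phi_i$ is a \emph{linear} tangent-space projection, hence exactly realizable by one layer; your plan to treat $\phi_i$ as a generic smooth map to be "polynomial-approximated" adds an unnecessary approximation whose error must then be propagated through the composition with the B-spline — choose the projection charts and this issue disappears.

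Part 2 has a genuine gap. First, the mechanism producing the exponent $\frac{s}{2s+2(s\vee d)}$ is not the one you describe. The $s\vee d$ does not come from "an inherent saturation at $n^{-1/4}$ in the Rademacher-type variance bound"; it comes from the cost of realizing the target $f^*_\phi=\log\frac{\eta}{1-\eta}$ as a \emph{composition}: a Besov approximant of $\eta$ (complexity $\varepsilon^{-d/s}$) followed by a network approximating the truncated univariate link $z\mapsto\log\frac{z}{1-z}$, which after truncation at level $F_n$ is Lipschitz with constant $\asymp e^{F_n}$ and therefore needs $O(e^{-F_n}\varepsilon^{-1})$ units; the total complexity is $\varepsilon^{-(d/s\vee 1)}=\varepsilon^{-(s\vee d)/s}$. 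Your accounting charges only the $N^{-s/d}$ bias of Part 1 and ignores the link entirely, so your balance of $N^{-s/d}$ against $N/\sqrt{n}$ yields $n^{-\frac{s}{2s+2d}}$ for \emph{all} $s$, which overstates the claimed rate precisely when $s\ge d$. Second, the extra factor of $2$ in the denominator arises from optimizing the truncation level: the achievable bias is $e^{F_n}\varepsilon_1$ (the link's Lipschitz constant inflates the error in $\eta$) plus a tail/truncation term $\asymp F_n e^{-F_n}$ on the region where $|f^*_\phi|>F_n$, and $F_n=\frac{s}{2s+2(s\vee d)}\log n$ is chosen to balance these against the estimation error. Your decomposition never confronts this $e^{F_n}$ inflation. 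Third, the paper's variance control is a localized, fast-rate argument: a Bernstein-type condition $\EE[(\phi(yf)-\phi(yf^*_\phi))^2]\lesssim e^{F_n}\,\cE_\phi(f,f^*_\phi)$ for the logistic loss (verified with exponent $\nu=1$), combined with a peeling/bracketing-entropy deviation inequality, giving an estimation term of order $\mathrm{entropy}/n$ rather than your global $\sqrt{\mathrm{Pdim}/n}$. With your slow-rate bound the stated exponent is not recovered even after fixing the first two issues. To close the gap you need (i) the two-stage network for $\eta$ and the truncated link with the $e^{F_n}$ bookkeeping, (ii) the $\nu=1$ variance condition and a localized risk bound, and (iii) the optimization over $F_n$.
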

We remark that the first part of the theorem above requires the network size to depend on the intrinsic dimension $d$ and only weakly depend on $D$. The second part is built upon the first part and shows a fast convergence rate of the excess risk in terms of $n$ where the exponent depends on $d$ instead of $D$.
 Our results demonstrate that ConvResNets are adaptive to low-dimensional structures of data sets.


\textbf{Related work.}
Approximation theories of FNNs with the ReLU activation have been established for Sobolev \citep{yarotsky2017error}, H\"{o}lder \citep{schmidt2017nonparametric} and Besov \citep{suzuki2018adaptivity} spaces. The networks in these works have certain cardinality constraint, i.e., the number of nonzero parameters is bounded by certain constant, which requires a lot of efforts for training.


Approximation theories of CNNs are developed in \citet{zhou2020universality,petersen2020equivalence,oono2019approximation}. Among these works, \citet{zhou2020universality} shows that CNNs can approximate Sobolev functions in $W^{s,2}$ for $s\geq D/2+2$ with an arbitrary accuracy $\varepsilon\in(0,1)$. 
The network in \citet{zhou2020universality} has width increasing linearly with respect to depth and has depth growing in the order of $\varepsilon^{-2}$ as $\varepsilon$ decreases to 0.
It is shown in \citet{petersen2020equivalence,zhou2020theory} that any approximation error achieved by FNNs can be achieved by CNNs. Combining \citet{zhou2020theory} and \citet{yarotsky2017error}, we can show that CNNs can approximate $W^{s,\infty}$ functions in $\RR^D$ with arbitrary accuracy $\varepsilon$. Such CNNs have the number of channels in the order of $\varepsilon^{-D/s}$ and a cardinality constraint. The only theory on ConvResNet can be found in \citet{oono2019approximation}, where an approximation theory for H\"{o}lder functions is proved for  ConvResNets with fixed width.

Statistical theories for binary classification by FNNs are established with the hinge loss \citep{ohn2019smooth,hu2020sharp} and the logistic loss \citep{kim2018fast}. 
Among these works, \citet{hu2020sharp} uses a parametric model given by a teacher-student network. The nonparametric results in \citet{ohn2019smooth,kim2018fast} are cursed by the data dimension, and therefore require  a large number of samples for high-dimensional data.

Binary classification by CNNs has been studied in \citet{kohler2020rate,kohler2020statistical,nitanda2018functional,huang2018learning}. Image binary classification is studied in \citet{kohler2020rate,kohler2020statistical} in which the probability function is assumed to be in a  hierarchical max-pooling model class. ResNet type classifiers are considered in \citet{nitanda2018functional,huang2018learning} while the generalization error is not given explicitly.

Low-dimensional structures of data sets are explored for neural networks in \citet{chui2018deep,shaham2018provable,chen1908nonparametric, chen2019efficient,schmidt2019deep,nakada2019adaptive,cloninger2020relu, chen2020doubly,montanelli2020error}. These works  show that, if data are near a low-dimensional manifold, the performance of FNNs depends on the intrinsic dimension of the manifold and only weakly depends on the data dimension. Our work focuses on ConvResNets for practical applications.

The networks in many aforementioned works have a cardinality constraint. From the computational perspective, training such networks requires substantial efforts \citep{han2016dsd, han2015learning,blalock2020state}. In comparison, the ConvResNet in \citet{oono2019approximation} and this paper does not require any cardinality constraint.
Additionally, our constructed network has a fixed filter size and a fixed number of channels, which is desirable for practical applications.

As a summary, we compare our approximation theory and existing results in Table \ref{tab.results.compare}.

The rest of this paper is organized as follows: In Section \ref{sec:pre}, we briefly introduce manifolds, Besov functions on manifolds and convolution. Our main results are presented in Section \ref{sec.mainresults}. We give a proof sketch in Section \ref{thm.approximation.proof} and conclude this paper in Section \ref{sec.conclusion}.



\section{Preliminaries}\label{sec:pre}

\textbf{Notations}: We use bold lower-case letters to denote vectors, upper-case letters to denote matrices, calligraphic letters to denote tensors, sets and manifolds. 
 For any $x>0$, we use $\lceil x\rceil$ to denote the smallest integer that is no less than $x$ and use $\lfloor x \rfloor$ to denote the largest integer that is no larger than $x$. For any $a,b\in \RR$, we denote $a\vee b=\max(a,b)$. For a function $f: \RR^d\rightarrow \RR$ and a set $\Omega\subset \RR^d$, we denote the restriction of $f$ to $\Omega$ by $f|_{\Omega}$. We use $\|f\|_{L^{p}}$ to denote the $L^{p}$ norm of $f$. We denote the Euclidean ball centered at $\cbb$ with radius $\omega$ by $B_{\omega}(\cbb)$.  


\subsection{Low-dimensional manifolds}

We first introduce some concepts on manifolds. We refer the readers to \citet{tu2010introduction,lee2006riemannian} for  details.
Throughout this paper, we let $\cM$ be a $d$-dimensional Riemannian manifold $\cM$ isometrically embedded in $\RR^D$ with $d \le D$.
We first introduce charts, an atlas and the partition of unity.
\begin{definition}[Chart]
A chart on $\cM$ is a pair $(U, \phi)$ where $U \subset \cM$ is open and $\phi : U \to \RR^d,$ is a homeomorphism (i.e., bijective, $\phi$ and $\phi^{-1}$ are both continuous).
\end{definition}
In a chart $(U,\phi)$, $U$ is called a coordinate neighborhood and $\phi$ is a coordinate system on $U$.
A collection of charts which covers $\cM$ is called an atlas of $\cM$.
\begin{definition}[$C^k$ Atlas]
A $C^k$ atlas for $\cM$ is a collection  of charts $\{(U_\alpha, \phi_\alpha)\}_{\alpha \in \cA}$ which satisfies $\bigcup_{\alpha \in \cA} U_\alpha = \cM$, and
 are pairwise $C^k$ compatible:
\begin{align*}
	&\phi_\alpha \circ \phi_\beta^{-1} : \phi_\beta(U_\alpha \cap U_\beta) \to \phi_\alpha(U_\alpha \cap U_\beta) \quad \textrm{and} \quad \phi_\beta \circ \phi_\alpha^{-1} : \phi_\alpha(U_\alpha \cap U_\beta) \to \phi_\beta(U_\alpha \cap U_\beta)
\end{align*} are both $C^k$ for any $\alpha,\beta\in \cA$. An atlas is called finite if it contains finitely many charts.
\end{definition}

\begin{definition}[Smooth Manifold] A smooth manifold is a manifold $\cM$ together with a $C^\infty$ atlas.
\end{definition}
The Euclidean space, the torus and the unit sphere are  examples of smooth manifolds. $C^s$ functions on a smooth manifold $\cM$ are defined as follows:
\begin{definition}[$C^s$ functions on $\cM$]
  Let $\cM$ be a smooth manifold and $f:\cM \rightarrow \RR$ be a function on $\cM$. We say $f$ is a $C^s$ function on $\cM$, if for every chart $(U,\phi)$ on $\cM$, the function $f\circ \phi^{-1}: \phi(U)\rightarrow \RR$ is a $C^s$ function. \end{definition}

We next define the $C^\infty$ partition of unity which is an important tool for the study of functions on manifolds.
\begin{definition}[Partition of Unity]
A $C^\infty$ partition of unity on a manifold $\cM$ is a collection of $C^\infty$ functions $\{\rho_{\alpha}\}_{\alpha\in\cA}$ with $\rho_\alpha: \cM \to [0,1]$ such that for any $\xb\in\cM$,
\begin{enumerate}
\item there is a neighbourhood of $\xb$ where only a finite number of the functions in $\{\rho_{\alpha}\}_{\alpha\in\cA}$ are nonzero, and
\item $\displaystyle\sum_{\alpha\in\cA} \rho_\alpha(\xb) = 1$.
\end{enumerate}
\end{definition}
An open cover of a manifold $\cM$ is called locally finite if every $\xb\in\cM$ has a neighbourhood which intersects with a finite number of sets in the cover. The following proposition shows that a $C^\infty$ partition of unity for a smooth manifold always exists \citep[Chapter 2, Theorem 15] {spivak1970comprehensive}. 
\begin{proposition}[Existence of a $C^\infty$ partition of unity]\label{thm:parunity}
Let $\{U_\alpha\}_{\alpha \in \cA}$ be a locally finite cover of a smooth manifold $\cM$. There is a $C^\infty$ partition of unity $\{\rho_\alpha\}_{\alpha=1}^\infty$ such that $\supp(\rho_\alpha) \subset U_\alpha$. 
\end{proposition}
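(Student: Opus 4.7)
The plan is to construct $\rho_\alpha$ in two stages: first build, for each $\alpha$, a non-negative $C^\infty$ function $\psi_\alpha$ with $\supp(\psi_\alpha)\subset U_\alpha$ such that $\sum_\alpha \psi_\alpha>0$ everywhere on $\cM$, and then normalize by dividing through by this sum. The local finiteness hypothesis is what makes both the sum $\sum_\alpha \psi_\alpha$ and the ratio $\psi_\alpha/\sum_\beta \psi_\beta$ smooth, since near every $\xb\in\cM$ only finitely many terms are nonzero.

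The analytic input is the standard Euclidean bump: the function $h(t)=e^{-1/t}$ for $t>0$ and $h(t)=0$ for $t\leq 0$ is $C^\infty$, so $g_r(\yb)=h(r^2-|\yb|^2)$ is $C^\infty$ on $\RR^d$, strictly positive on the open ball $B_r(0)$ and zero outside. Given any chart $(U,\phi)$ and any closed ball $\overline{B_r(\phi(\xb_0))}\subset \phi(U)$, pulling $g_r$ back by $\phi$ and extending by zero off $U$ produces a non-negative $C^\infty$ function on $\cM$ supported in the compact set $\phi^{-1}(\overline{B_r(\phi(\xb_0))})\subset U$ and positive on a neighbourhood of $\xb_0$. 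This is the only tool I need to manufacture smooth functions on $\cM$ with prescribed small compact supports inside coordinate neighbourhoods.

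Next I would refine and regroup. Since a smooth manifold is (by standard convention) second countable and Hausdorff, hence paracompact, the given locally finite cover $\{U_\alpha\}$ admits a locally finite refinement $\{V_\beta\}_{\beta\in\cB}$ together with an index map $\tau:\cB\to\cA$ such that $\overline{V_\beta}$ is compact and contained in a single chart lying inside $U_{\tau(\beta)}$. By the bump construction, I can pick $\psi_\beta\in C^\infty(\cM)$ with $\psi_\beta\geq 0$, $\supp(\psi_\beta)\subset V_\beta$, and with the open sets $W_\beta:=\{\psi_\beta>0\}$ still covering $\cM$ (shrinking the ball radii slightly if necessary so the $W_\beta$'s form an open cover). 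Regrouping by the original index set, set $\Psi_\alpha:=\sum_{\beta:\,\tau(\beta)=\alpha}\psi_\beta$; local finiteness of $\{V_\beta\}$ makes this a locally finite sum, so $\Psi_\alpha\in C^\infty(\cM)$ with $\supp(\Psi_\alpha)\subset U_\alpha$. The global sum $\Sigma:=\sum_\alpha \Psi_\alpha=\sum_\beta \psi_\beta$ is locally finite, smooth, and strictly positive because the $W_\beta$'s cover $\cM$, and $\rho_\alpha:=\Psi_\alpha/\Sigma$ delivers the desired partition of unity.

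The delicate step, which I expect to be the main obstacle, is the refinement-and-regrouping bookkeeping in the penultimate paragraph: one must simultaneously (i) invoke paracompactness to obtain a locally finite refinement whose closures sit inside charts, (ii) define $\tau$ so that after regrouping the supports land in the \emph{original} $U_\alpha$ (so that the conclusion $\supp(\rho_\alpha)\subset U_\alpha$ holds for the given index set $\cA$), and (iii) arrange the bump radii so that the $W_\beta$'s still cover $\cM$, guaranteeing $\Sigma>0$ and hence the smoothness of the quotient. Once this topological scaffolding is in place, the verification that $\{\rho_\alpha\}$ satisfies the two conditions in the definition is immediate from the locally finite supports and the identity $\sum_\alpha \rho_\alpha=\Sigma/\Sigma=1$.
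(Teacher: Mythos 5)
The paper does not prove this proposition; it simply cites it as \citep[Chapter 2, Theorem 15]{spivak1970comprehensive}, so there is no in-paper argument to compare against. Your sketch is the standard textbook proof (bump functions, refinement by coordinate balls, regrouping by an index map, normalization), and the logic is sound: local finiteness makes $\Sigma=\sum_\beta\psi_\beta$ and each $\Psi_\alpha=\sum_{\tau(\beta)=\alpha}\psi_\beta$ well-defined and smooth, and since the family $\{\supp(\psi_\beta)\}$ is locally finite, $\bigcup_{\tau(\beta)=\alpha}\supp(\psi_\beta)$ is already closed, so $\supp(\Psi_\alpha)\subset U_\alpha$ as required.

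The one place you wave a hand is step (iii), ``shrinking the ball radii slightly if necessary so the $W_\beta$'s form an open cover.'' As stated this is not automatic: making $\psi_\beta$ vanish outside $V_\beta$ forces $W_\beta=\{\psi_\beta>0\}$ to be strictly smaller than $V_\beta$, and shrinking each $V_\beta$ independently can in principle open up gaps. The rigorous fix is the shrinking lemma for paracompact spaces: given the locally finite cover $\{V_\beta\}$, there is an open cover $\{V'_\beta\}$ (same index set) with $\overline{V'_\beta}\subset V_\beta$. One then chooses $\psi_\beta$ strictly positive on $\overline{V'_\beta}$ and supported in $V_\beta$; since $\{V'_\beta\}$ covers $\cM$, so does $\{W_\beta\}$, giving $\Sigma>0$. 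With that one sentence added, the argument is complete and matches the classical proof in Spivak and Lee.
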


Let $\{(U_\alpha,\phi_\alpha)\}_{\alpha\in \cA}$ be a $C^\infty$ atlas of $\cM$. Proposition \ref{thm:parunity} guarantees the existence of a partition of unity $\{\rho_\alpha\}_{\alpha\in\cA}$ such that $\rho_{\alpha}$ is supported on $U_{\alpha}$.

The reach of $\cM$ introduced by Federer \citep{federer1959curvature} is an important quantity defined below.
Let $d(\xb,\cM)=\inf_{\yb\in \cM}\|\xb-\yb\|_2$ be the distance from $\xb$ to $\cM$.
\begin{definition}[Reach \citep{federer1959curvature,niyogi2008finding}]
Define the set
\begin{align*}
	G=\{& \xb\in \RR^D: \exists\mbox{ distinct } \pb,\qb\in\cM \mbox{ such that } d(\xb,\cM)=\|\xb-\pb\|_2=\|\xb-\qb\|_2\}.	
\end{align*}

The closure of $G$ is
called the medial axis of $\cM$.
The reach of $\cM$ is defined as
$$
\tau=\inf_{\xb \in \cM} \ \inf_{\yb\in G}\|\xb-\yb\|_2.
$$
\end{definition}

\begin{figure}[htb!]
	\centering
	\includegraphics[width=0.6\textwidth]{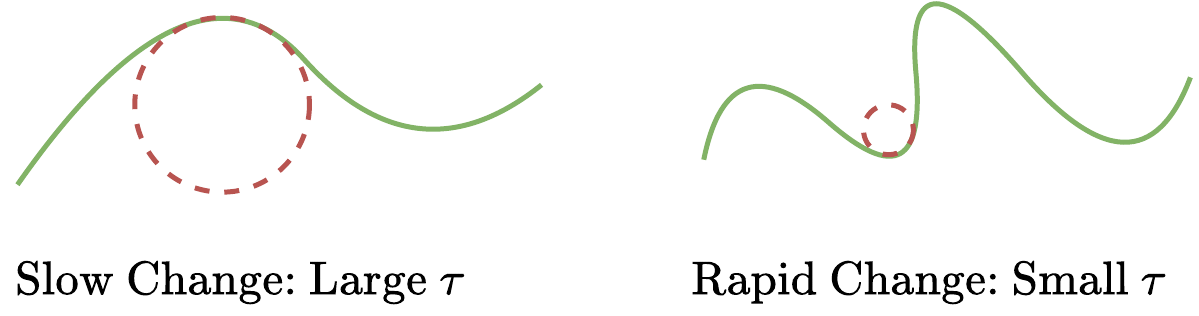}\vspace{-0.1in}
	\caption{Illustration of manifolds with large and small reach.
	}
	\label{fig.reach}
\end{figure}

We illustrate large and small reach in Figure \ref{fig.reach}. 
\subsection{Besov functions on a smooth manifold}

We next define Besov function spaces on  $\cM$,
which generalizes more elementary function spaces such as the Sobolev and H\"older spaces. 
To define Besov functions, we first introduce the modulus of smoothness.

\begin{definition}[Modulus of Smoothness \citep{devore1993constructive,suzuki2018adaptivity}]
  Let $\Omega \subset \RR^D$.
   For a function $f: \RR^D\rightarrow\RR$ be in $ L^p(\Omega)$ for $p>0$, the $r$-th modulus of smoothness of $f$ is defined by
  \begin{align*}
  &w_{r,p}(f,t)=\sup_{\|\hb\|_2\le t} \|\Delta_{\hb}^r(f)\|_{L^p}, \mbox{ where}&\\
  &\Delta_{\hb}^r(f)(\xb)=\begin{cases}
    \sum_{j=0}^r \binom{r}{j}
    (-1)^{r-j}f(\xb+j\hb) &\mbox{ if }\  \xb\in \Omega, \xb+r\hb\in\Omega,\\
    0 &\mbox{ otherwise}.
  \end{cases}
\end{align*}
\end{definition}
\begin{definition}[Besov Space $B_{p,q}^s(\Omega)$]\label{def.besovNorm}
For $0<p,q\leq \infty, s>0,r=\lfloor s\rfloor+1$, define the seminorm $|\cdot |_{B_{p,q}^s}$ as
  $$
  |f|_{\Bnorm(\Omega)}:=\begin{cases}
    \left( \displaystyle\int_0^{\infty} (t^{-s}w_{r,p}(f,t))^q\frac{dt}{t}\right)^{\frac{1}{q}} & \mbox{ if } q<\infty,\\
    \sup_{t>0} t^{-s}w_{r,p}(f,t) & \mbox{ if }q=\infty.
  \end{cases}
  $$
  The norm of the Besov space $B_{p,q}^s(\Omega)$ is defined as $\|f\|_{\Bnorm(\Omega)}:= \|f\|_{L^p(\Omega)}+|f|_{\Bnorm(\Omega)}$. The Besov space is $\Bnorm(\Omega)=\{ f\in L^p(\Omega) | \|f\|_{\Bnorm}<\infty\}$.
\end{definition}

We next define $\Bnorm$ functions on $\cM$ \citep{geller2011band,triebel1983theory,triebel1994theory}.
\begin{definition}[$\Bnorm$ Functions on $\cM$]\label{def.besovM}
Let $\cM$ be a compact smooth manifold of dimension $d$. Let $\{(U_i, \phi_{i})\}_{i=1}^{C_{\cM}}$ be a finite atlas on $\cM$ and $\{\rho_i\}_{i=1}^{C_{\cM}}$ be a partition of unity on $\cM$ such that $\supp(\rho_i)\subset U_i$. A function $f: \cM \to \RR$ is in $\Bnorm(\cM)$ if
\begin{align}
  \|f\|_{\Bnorm(\cM)}:=\sum_{i=1}^{C_{\cM}} \|(f\rho_i)\circ\phi_{i}^{-1}\|_{\Bnorm(\RR^d)}<\infty.
\end{align}
\end{definition}

Since $\rho_i$ is supported on $U_{i}$, the function $(f\rho_i)\circ \phi_{i}^{-1}$ is supported on $\phi(U_i)$. We can extend $(f\rho_i)\circ \phi_{i}^{-1}$ from $\phi(U_i)$ to $\mathbb{R}^d$ by setting the function to be $0$ on $\mathbb{R}^d\setminus \phi(U_i)$. The extended function lies in the Besov space $B^s_{p,q}(\RR^d)$ \citep[Chapter 7]{triebel1994theory}.


\subsection{Convolution and residual block}

In this paper, we consider one-sided stride-one convolution in our network. Let $\cW=\{\cW_{j,k,l}\}\in \RR^{C'\times K\times C}$ be a filter where $C'$ is the output channel size, $K$ is the filter size and $C$ is the input channel size. For $z\in \RR^{D\times C}$, the convolution of $\cW$ with $z$ gives $y\in \RR^{D\times C'}$ such that
\begin{align}
  y=\cW *z,\quad  y_{i,j}=\sum_{k=1}^K \sum_{l=1}^{C} \cW_{j,k,l} z_{i+k-1,l},
\end{align}
where $1\leq i\leq D, 1\leq j \leq C' $ and we set $z_{i+k-1,l}=0$ for $i+k-1>D$, as demonstrated in Figure \ref{fig.conv}(a).

The building blocks of ConvResNets are residual blocks. For an input $\xb$, each residual block computes
$$
\xb+F(\xb)
$$
where $F$ is a subnetwork consisting of convolutional layers (see more details in Section \ref{sec.ConvResNet}). A residual block is demonstrated in Figure \ref{fig.conv}(b).

\begin{figure}[htb!]
	\centering\vspace{-0.2in}
	\subfloat[Convolution.]{\includegraphics[width=0.48\textwidth]{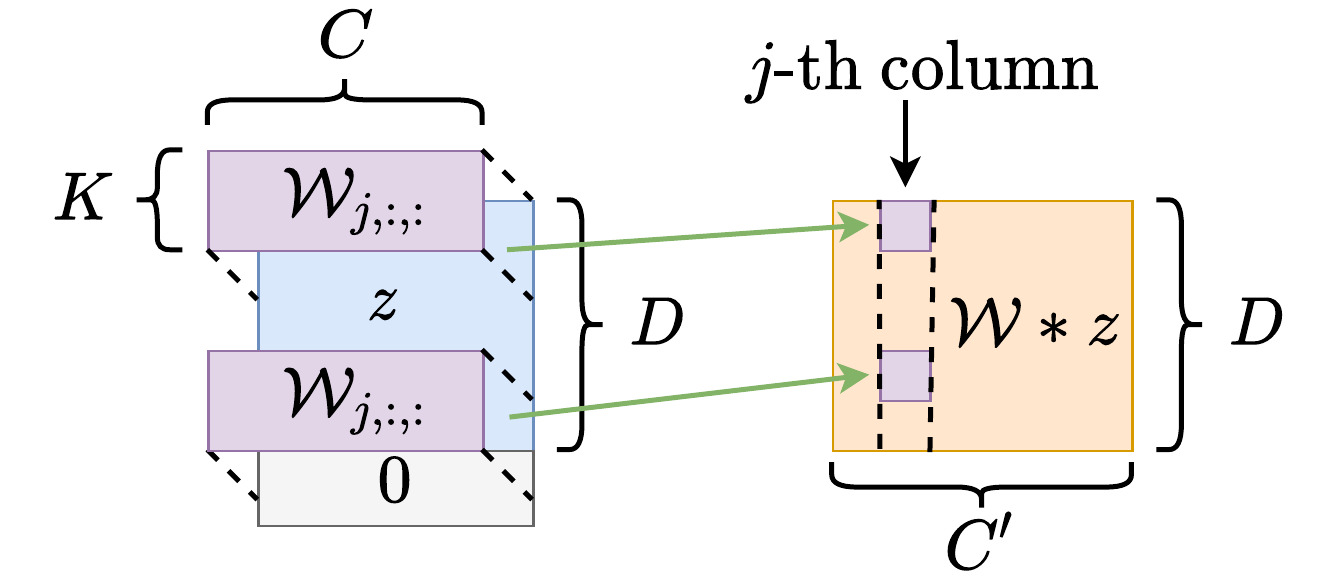}}
	\hspace{-0.3cm}
	\subfloat[A residual block.]{\includegraphics[width=0.26\textwidth]{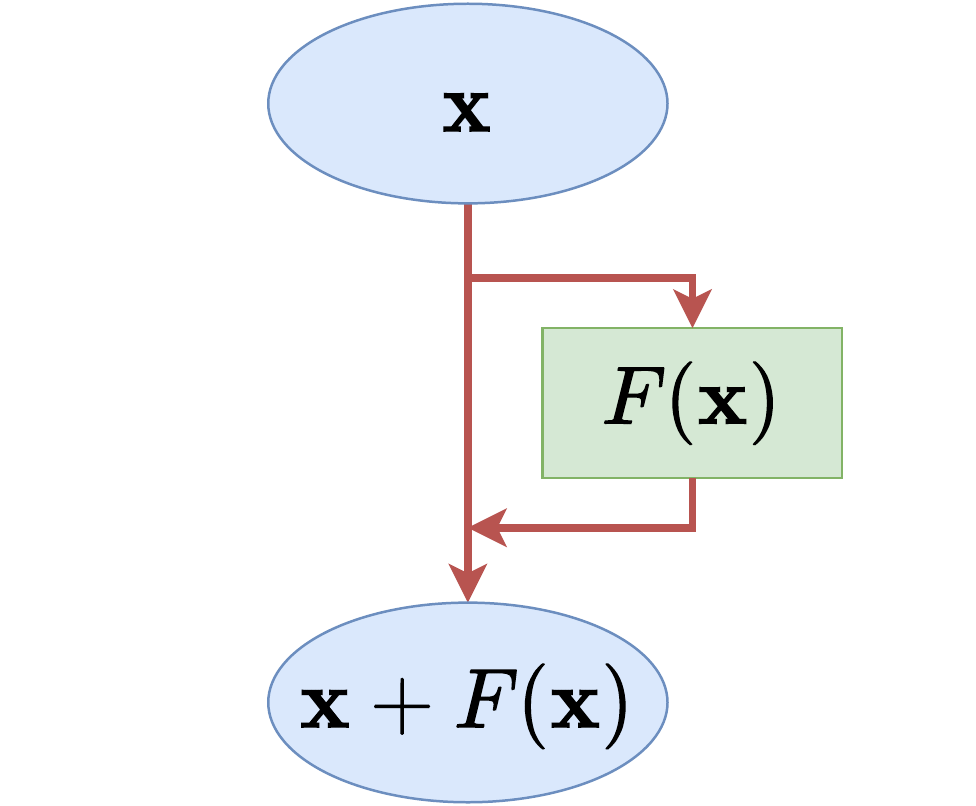}}\vspace{-0.1in}
	\caption{(a) Demonstration of $\cW *z$, where the input is $z\in  \RR^{D\times C}$, and the output is $\cW* z \in \RR^{D\times C'}$. Here $\cW=\{\cW_{j,k,l}\}\in \RR^{C'\times K\times C}$ is a filter where $C'$ is the output channel size, $K$ is the filter size and $C$ is the input channel size.
		$\cW_{j,:,:}$ is a $D\times C$ matrix for the $j$-th output channel. (b) Demonstration of a residual block.
	}\vspace{-0.1in}
	\label{fig.conv}
\end{figure}

\section{Theory}\label{sec.mainresults}
In this section, we first introduce the ConvResNet architecture, and then present our main results. 

\subsection{Convolutional residual neural network}\label{sec.ConvResNet}

We study the ConvResNet with the rectified linear unit ($\ReLU$) activation function: $\ReLU(z)=\max(z,0)$.
The ConvResNet we consider consists of a padding layer and several residual blocks followed by a fully connected feedforward layer.

We first define the padding layer.
Given an input $A\in\RR^{D\times C_1}$, the network first applies a padding operator $P:\RR^{D\times C_1}\rightarrow \RR^{D\times C_2}$ for some integer $C_2\geq C_1$ such that
$$
Z=P(A)=\begin{bmatrix}
	A & \zero &\cdots & \zero
\end{bmatrix} \in \RR^{D \times C_2}.
$$
Then the matrix $Z$ is passed through $M$ residual blocks.

In the $m$-th block, let
$\cW_m=\{ \cW_m^{(1)},...,\cW_m^{(L_m)}\}$ and $\cB_m=\{B_m^{(1)},...,B_m^{(L_m)}\}$ be a collection of filters and biases.
The $m$-th residual block maps a matrix from $\RR^{D\times C}$ to $\RR^{D\times C}$ by
$$
\Conv_{\cW_m,\cB_m}+\id,
$$
where $\id$ is the identity operator and
\begin{align} 
	&\Conv_{\cW_m,\cB_m}(Z)=\ReLU\Big( \cW_m^{(L_m)}*\cdots \cdots*\ReLU\left(\cW_m^{(1)}*Z+B_m^{(1)}\right)\cdots +B_m^{(L_m)}\Big),
	\label{eq.conv}
\end{align}
with $\ReLU$ applied entrywise. Denote
\begin{align}
	Q(\xb)=&\left(\Conv_{\cW_M,\cB_M}+\id\right)\circ\cdots \circ \left(\Conv_{\cW_1,\cB_1}+\id\right)\circ P(\xb).
	\label{eq.cnnBlock}
\end{align}
For networks only consisting of residual blocks, we define the network class as
\begin{align}
	\cC^{\Conv}(M,L,J,K,\kappa)=&\big\{ Q| Q(\xb) \mbox{ is in the form of (\ref{eq.cnnBlock}) with $M$ residual blocks. Each block has } \nonumber\\
	&\hspace{0.5cm}\mbox{filter size bounded by } K, \mbox{ number of channels bounded by } J,\nonumber\\
	&\hspace{0.5cm} \max_m L_m\leq L,\ \max_{m,l}\|\cW_m^{(l)}\|_{\infty} \vee \|B_m^{(l)}\|_{\infty} \leq \kappa\big\},
\end{align}
where $\norm{\cdot}_\infty$ denotes $\ell^\infty$ norm of a vector, and for a tensor $\cW$, $\norm{\cW}_{\infty} = \max_{j,k,l} |\cW_{j,k,l}|$. 

Based on the network $Q$ in (\ref{eq.cnnBlock}), a ConvResNet has an additional fully connected layer and can be expressed as
\begin{align}
	f(\xb)=WQ(\xb)+b
	\label{eq.cnn}
\end{align}
where $W$ and $b$ are the weight matrix and the bias in the fully connected layer. 
The class of ConvResNets is defined as
\begin{align}
	\cC(M,L,J,K,\kappa_1,\kappa_2,R)=	&\big\{ f| f(\xb)=WQ(\xb)+b \mbox{ with } Q\in \cC^{\Conv}(M,L,J,K,\kappa_1), \nonumber\\
	&\hspace{0.5cm}\|W\|_{\infty} \vee |b| \leq \kappa_2, \|f\|_{L^{\infty}}\leq R \big\}.
\end{align}
Sometimes we do not have restriction on the output, we omit the parameter $R$ and denote the network class by $\cC(M,L,J,K,\kappa_1,\kappa_2)$.
\subsection{Approximation theory}

Our approximation theory is based on the following assumptions of $\cM$ and the object function $f^*:\cM\rightarrow \RR$.
\begin{assumption}\label{assum.M}
  $\cM$ is a $d$-dimensional compact smooth Riemannian manifold isometrically embedded in $\RR^D$. There is a constant $B$ such that for any $\xb\in\cM$, $\|\xb\|_{\infty}\leq B$.
\end{assumption}
\begin{assumption}\label{assum.reach}
  The reach of $\cM$ is $\tau>0$.
\end{assumption}

\begin{assumption}\label{assum.f}
  Let $0<p,q\leq \infty$, $ d/p+1\leq s<\infty$. Assume $f^*\in \Bnorm(\cM)$ and $\|f^*\|_{\Bnorm(\cM)}\leq c_0$ for a constant $c_0>0$. Additionally, we assume $\|f^*\|_{L^{\infty}}\leq R$ for a constant $R >0$.
\end{assumption}
Assumption \ref{assum.f} implies that $f^*$ is Lipschitz continuous \citep[Section 2.7.1 Remark 2 and Section 3.3.1]{triebel1983theory}.

Our first result is the following universal approximation error of ConvResNets for Besov functions on $\cM$.
\begin{figure}[htb!]
	\centering
	\includegraphics[width=0.7\textwidth,trim={2.4cm 0 1.2cm 0},clip]{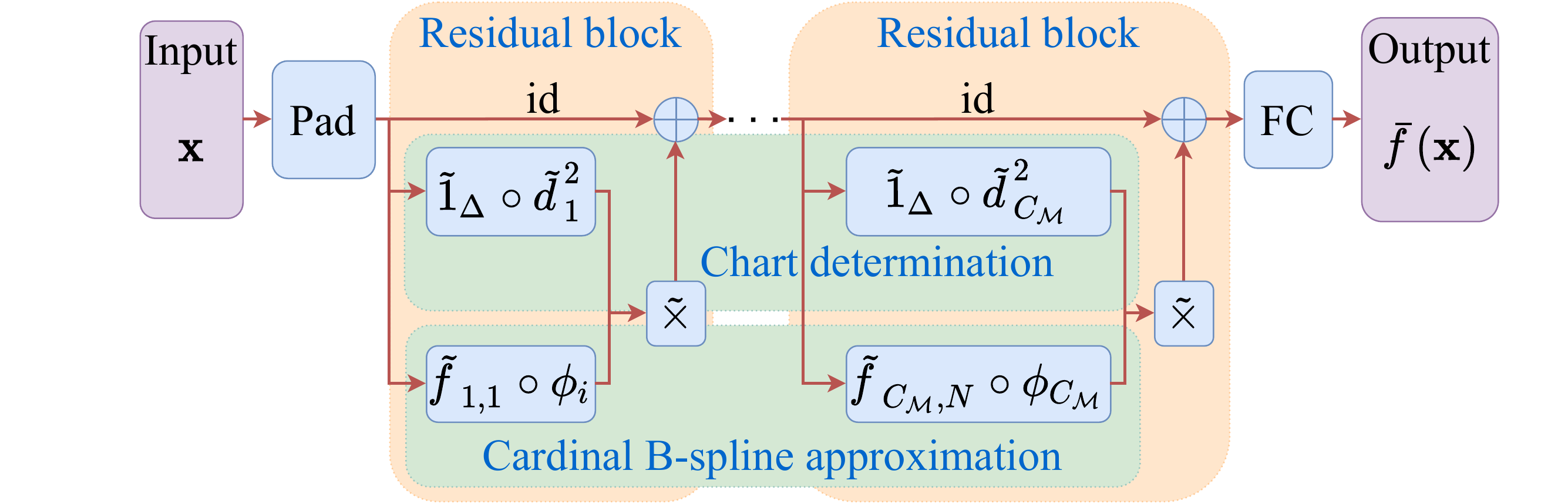}
	\caption{The ConvResNet in Theorem \ref{thm.approximation} contains a padding layer, $M$ residual blocks, and a fully connected (FC) layer. 
	}\label{fig.res-CNN}
\end{figure}
\begin{theorem}\label{thm.approximation}
 Assume Assumption \ref{assum.M}-\ref{assum.f}. 
 For any $\varepsilon\in(0,1)$ and positive integer $K\in[2,D]$, there is a ConvResNet architecture $\cC(M,L,J,K,\kappa_1,\kappa_2)$ such that, for any $f^*\in \Bnorm(\cM)$, if the weight parameters of this ConvResNet are properly chosen, the network yields a function $\bar{f}\in\cC(M,L,J,K,\kappa_1,\kappa_2)$ satisfying 
\begin{align}
	\|\bar{f}-f^*\|_{L^\infty}\leq \varepsilon.
	\label{eq.thm1}
\end{align}
  Such a network architecture has
  \begin{align}
  &M=O\left(\varepsilon^{-d/s}\right),\ L=O(\log (1/\varepsilon)+D+\log D),\  J=O(D),\ \kappa_1=O(1),\ \log \kappa_2=O(\log^2(1/\varepsilon)).
  \end{align}
  The constant hidden in $O(\cdot)$ depend on $d$,  $s$,$\frac{2d}{sp-d}$, $p$, $q$,$c_0,\tau$ and the surface area of $\cM$. 
\end{theorem}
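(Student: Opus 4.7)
\emph{Stage 1 (localization).} My plan has three stages: first decompose $f^*$ via a partition-of-unity atlas, then approximate each localized piece by a sparse cardinal B-spline expansion on $\RR^d$, and finally realize the expansion by a fixed-width ConvResNet in which each residual block contributes one atom. Using Assumptions~\ref{assum.M}--\ref{assum.reach} I would cover $\cM$ by a finite family of Euclidean balls $\{B_{r}(\mathbf{c}_i)\}_{i=1}^{C_{\cM}}$ with $r<\tau/2$, and let $\phi_i$ be the orthogonal projection of $U_i:=B_r(\mathbf{c}_i)\cap\cM$ onto the tangent space $T_{\mathbf{c}_i}\cM$. The reach condition makes each $\phi_i$ a bi-Lipschitz diffeomorphism, and $C_{\cM}$ depends only on $d,\tau$ and the surface area of $\cM$. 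Combining this with a smooth partition of unity $\{\rho_i\}$ subordinate to $\{U_i\}$ supplied by Proposition~\ref{thm:parunity}, Definition~\ref{def.besovM} shows that each $g_i:=(f^*\rho_i)\circ\phi_i^{-1}$, extended by zero, is a compactly supported Besov function on a cube in $\RR^d$ whose norm is controlled by $\|f^*\|_{\Bnorm(\cM)}$.

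\emph{Stage 2 (sparse B-spline approximation on $\RR^d$).} Following \citet{suzuki2018adaptivity}, I would expand each $g_i$ in tensor-product cardinal B-splines of order $m>s$ on dyadic grids and keep only its $N$ largest coefficients to obtain $\tilde g_i=\sum_{k=1}^{N}\alpha_{i,k}M_{i,k}$ with $\|g_i-\tilde g_i\|_{L^\infty}\le C N^{-s/d}$; the hypothesis $s\ge d/p+1$ is exactly what is needed to pass from $L^p$ to $L^\infty$ control. Choosing $N\asymp \varepsilon^{-d/s}/C_{\cM}$ yields $\|f^*-\sum_i\tilde g_i\circ\phi_i\|_{L^\infty(\cM)}=O(\varepsilon)$, and $|\alpha_{i,k}|$ is bounded polynomially in $1/\varepsilon$, which is the origin of the bound on $\kappa_2$.

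\emph{Stage 3 (ConvResNet realization).} I would use $J=O(D)$ channels organized as follows: $D$ channels carry the raw input $\xb$, one channel $\xi$ acts as a scalar accumulator, and a constant number of scratch channels handle per-block computation. Each residual block implements one pair $(i,k)$ in three internal sub-stages: (a) compute the chart coordinate $V_i\xb$ through a chain of stride-one one-sided convolutions of filter size $K$ --- because each layer mixes only $K-1$ spatial positions, this contributes $O(D+\log D)$ to the per-block depth; (b) form the localization indicator $\mathbf{1}\{\xb\in U_i\}$ by ReLU-thresholding a ReLU-approximated $\|\xb-\mathbf{c}_i\|_2^2$; (c) evaluate $\alpha_{i,k}M_{i,k}(V_i\xb)$ using Yarotsky's approximate-multiplication gadget of depth $O(\log(M/\varepsilon))$. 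The block then adds the result into $\xi$ and subtracts off all scratch channels, so the identity branch transmits $\xb$ and the updated $\xi$ cleanly into the next block. After $M=O(\varepsilon^{-d/s})$ blocks, $\xi$ holds $\sum_{i,k}\alpha_{i,k}\rho_i(\phi_i^{-1}(\cdot))M_{i,k}(V_i\cdot)$ up to controlled error, and the final fully-connected layer reads it out as $\bar f$.

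\emph{Main obstacle.} The critical difficulty is executing Stage~3 under the rigid ConvResNet constraints --- constant width $J=O(D)$, fixed filter size $K$, and per-block depth only $O(\log(1/\varepsilon)+D+\log D)$ --- while running $M=O(\varepsilon^{-d/s})$ blocks. Every block must drive its scratch channels back to \emph{exactly} zero so the identity branch can be re-used without leakage; the localization indicator must be genuinely zero outside a safe neighborhood of $U_i$ so that contributions from different charts do not compound additively across the $M$ blocks; and each per-block multiplication must be accurate to $O(\varepsilon/M)$ rather than $O(\varepsilon)$. Designing this clean-up consistently with the ReLU nonlinearity, and tracking how the constants depend on $d,s,p,q,c_0,\tau$ and the surface area of $\cM$, is the main technical burden. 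Once these pieces are in place, a triangle inequality over the spline-truncation, approximate-multiplication, and chart-projection errors closes the bound $\|\bar f-f^*\|_{L^\infty}\le \varepsilon$.
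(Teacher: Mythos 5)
Your proposal follows essentially the same route as the paper's proof: an atlas plus partition-of-unity decomposition, Suzuki-style sparse cardinal B-spline approximation of each chart-localized piece (using $s\ge d/p+1$ for the $L^\infty$ bound), CNN gadgets for the linear chart projection, the thresholded approximate squared-distance indicator, and Yarotsky multiplication, and finally a ConvResNet that accumulates one atom per residual block. The only bookkeeping differences are that the paper avoids your "scratch-channel clean-up" issue by making each block's convolutional subnetwork self-contained (it reads the preserved raw-input channels and writes only the rescaled positive and negative parts of its atom into dedicated accumulator channels), and the $\log\kappa_2=O(\log^2(1/\varepsilon))$ bound comes from rescaling all convolutional weights to $O(1)$ and compensating by a factor $\alpha^{L}$ with $L=O(\log(1/\varepsilon))$ in the last layer, not merely from the polynomial size of the B-spline coefficients.
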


The architecture of the ConvResNet in Theorem \ref{thm.approximation} is illustrated in Figure \ref{fig.res-CNN}. It has the following properties:
\begin{itemize}
	\item The network has a fixed filter size and a fixed number of channels. 
	\item There is no cardinality constraint.
	\item The network size depends on the intrinsic dimension $d$, and only weakly depends on $D$.
\end{itemize}

Theorem \ref{thm.approximation} can be compared with \citet{suzuki2018adaptivity} on the approximation theory for Besov functions in $\RR^D$ by FNNs as follows: (1) To universally approximate Besov functions in $\RR^D$ with $\varepsilon$ error,
the FNN constructed in \citet{suzuki2018adaptivity} requires $O\left(\log(1/\varepsilon)\right)$ depth,   $ O\left(\varepsilon^{-D/s}\right)$ width and $ O\left(\varepsilon^{-D/s}\log(1/\varepsilon)\right)$ nonzero parameters. By exploiting the manifold model, our network size depends on the intrinsic dimension $d$ and weakly depends on $D$. (2) The ConvResNet in Theorem \ref{thm.approximation} does not require any cardinality constraint, while such a constraint is needed in \citet{suzuki2018adaptivity}. 

\subsection{Statistical theory}

We next consider binary classification on $\cM$. For any $\xb\in\cM$, denote its label by $y\in\{-1,1\}$. The label $y$ follows the following Bernoulli-type distribution
\begin{align}
	\PP(y=1|\xb)=\eta(\xb),\ \PP(y=-1|\xb)=1-\eta(\xb)
	\label{eq.y.eta}
\end{align}
for some $\eta:\cM\rightarrow [0,1]$.


We assume the following data model:
\begin{assumption}\label{assum.class.sample}
	We are given i.i.d. sample $\{(\xb_i,y_i)\}_{i=1}^n$, where $\xb_i \in \cM$, and the $y_i$'s are sampled according to (\ref{eq.y.eta}). 
\end{assumption}



In binary classification, a classifier $f$ predicts the label of $\xb$ as $\sign(f(\xb))$. 
To learn the optimal classifier, we consider the logistic loss 
$\phi(z)=\log(1+\exp(-z))$. The logistic risk $\cE_{\phi}(f)$ of a classifier $f$ is defined as
\begin{align}
  \cE_{\phi}(f)=\EE(\phi(yf(\xb))).
  \label{eq.class.risk}
\end{align}
The minimizer of $\cE_{\phi}(f)$ is denoted by $f_{\phi}^*$, which satisfies 
\begin{align}
	f_{\phi}^*(\xb)=\log \frac{\eta(\xb)}{1-\eta(\xb)}.
	\label{eq.class.min}
\end{align}
For any classifier $f$, we define its logistic excess risk as
\begin{align}
  \cE_{\phi}(f,f_{\phi}^*)=\cE_{\phi}(f)-\cE_{\phi}(f_{\phi}^*).
  \label{eq.class.excess}
\end{align}

In this paper, we consider ConvResNets with the following architecture:
\begin{align}
	\cC^{(n)}=\big\{f|f=\bar{g}_2\circ\bar{h}\circ \bar{g}_1\circ\bar{\eta} \mbox{ where} \ &\bar{\eta}\in \cC^{\Conv}\left(M_1,L_1,J_1,K,\kappa_1\right),\ \bar{g}_1\in \cC^{\Conv}\left(1,4,8,1,\kappa_2\right),\nonumber\\
	& \bar{h}\in \cC^{\Conv}\left(M_2,L_2,J_2,1,\kappa_1\right),\ \bar{g}_2\in \cC\left(1,3,8,1,\kappa_3,1,R\right) \big\},
\end{align}
where $M_1,M_2,L,J,K,\kappa_1,\kappa_2,\kappa_3$ are some parameters to be determined. 

The empirical classifier is learned by minimizing the empirical logistic risk:
\begin{align}
	\hf_{\phi,n}=\argmin_{f\in \cC^{(n)}} \frac{1}{n}\sum_{i=1}^n \phi(y_if(\xb_i)).
\end{align}

We establish an upper bound on the excess risk of $\hf_{\phi,n}$:


\begin{theorem}\label{thm.classification}
Assume Assumption \ref{assum.M}, \ref{assum.reach} and \ref{assum.class.sample}. Assume $0<p,q\leq \infty$, $0<s<\infty$, $s\geq d/p+1$ and $\eta\in \Bnorm(\cM)$ with $\|\eta\|_{\Bnorm}\leq c_0$ for some constant $c_0$. For any $2\leq K\leq D$, we set 
\begin{align*}
	&M_1=O\left(n^{\frac{2d}{s+2(s\vee d)}}\right),\  M_2=O\left(n^{\frac{2s}{s+2(s\vee d)}}\right),\ L_1=O(\log (1/\varepsilon)+D+\log D),\ L_2=O(\log (1/\varepsilon)),\\
	& J_1=O(D), \ J_2=O(1),\ \kappa_1=O(1),\  \log \kappa_2=O(\log^2 n),\ \kappa_3=O(\log n),\ R=O(\log n)
\end{align*}
for $\cC^{(n)}$.
 Then
\begin{align}
  \EE(\cE_{\phi}(\hf_{\phi,n},f_{\phi}^*))\leq Cn^{-\frac{s}{2s+2(s\vee d)}}\log^4 n
  \label{eq.thm2.excessrisk}
\end{align}
for some constant $C$. Here $C$ is linear in $D \log D$ and additionally depends on $d,s,\frac{2d}{sp-d},p,q,c_0,\tau$ and the surface area of $\cM$.  The constant hidden in $O(\cdot)$ depends on $d,s,\frac{2d}{sp-d},p,q,c_0,\tau$ and the surface area of $\cM$. 
\end{theorem}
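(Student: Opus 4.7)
The plan is to follow the standard bias–variance program for empirical risk minimization, tailored to the logistic loss and the compositional ConvResNet architecture $\cC^{(n)}=\bar g_2\circ\bar h\circ\bar g_1\circ\bar\eta$. First I would perform the oracle decomposition
\begin{align*}
\EE[\cE_\phi(\hf_{\phi,n},f_\phi^*)]\;\leq\;\underbrace{2\inf_{f\in\cC^{(n)}}\bigl(\cE_\phi(f)-\cE_\phi(f_\phi^*)\bigr)}_{\text{approximation}}\;+\;\underbrace{\EE\Bigl[\sup_{f\in\cC^{(n)}}\bigl|(\cE_\phi-\hat\cE_\phi)(f)-(\cE_\phi-\hat\cE_\phi)(\bar f)\bigr|\Bigr]}_{\text{stochastic}},
\end{align*}
which is the usual way to bound the excess logistic risk of the empirical minimizer once one exhibits a deterministic approximator $\bar f\in\cC^{(n)}$.

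Second, to build $\bar f$ I would approximate $f_\phi^*=\log\tfrac{\eta}{1-\eta}$ through the compositional pipeline dictated by $\cC^{(n)}$. The sub-network $\bar\eta$ is taken from Theorem~\ref{thm.approximation} applied to $\eta\in\Bnorm(\cM)$, yielding $\|\bar\eta-\eta\|_{L^\infty}\le\varepsilon$ with $M_1=O(\varepsilon^{-d/s})$ residual blocks. The block $\bar g_1$ is a small ReLU clip that restricts its input to an interval $[\delta,1-\delta]$, implementable in constant depth/width with filter size $1$; this is needed because $\log\tfrac{x}{1-x}$ blows up at $0$ and $1$. The block $\bar h$ approximates the smooth function $x\mapsto\log\tfrac{x}{1-x}$ on $[\delta,1-\delta]$: since this map lies in every Besov class on a compact subinterval of $(0,1)$, a second application of the one-dimensional Besov approximation theory (or a direct Taylor-based construction) gives a ConvResNet with $M_2$ blocks and error $O(\varepsilon)$ in $L^\infty$, with Lipschitz constant $O(1/\delta)$. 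Finally, $\bar g_2$ truncates the logit to $[-R,R]$ with $R=\log\tfrac{1}{\delta}$. Combining these four pieces with the triangle inequality and using the local Lipschitzness of $\phi$ on $[-R,R]$, the approximation term decomposes as
\begin{align*}
\cE_\phi(\bar f)-\cE_\phi(f_\phi^*)\;\lesssim\;\frac{\varepsilon}{\delta}\;+\;\PP\bigl(\eta\notin[\delta,1-\delta]\bigr)\cdot\log\tfrac{1}{\delta},
\end{align*}
which I would control using the boundedness of $\eta$ and a suitably small choice of $\delta$, typically $\delta\asymp\varepsilon$ up to logarithmic factors.

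Third, for the stochastic term I would invoke covering-number bounds for ConvResNet classes with bounded weights. The composition structure gives $\log\cN(\rho,\cC^{(n)},\|\cdot\|_\infty)\lesssim M_1 L_1 J_1 K\log(\kappa_1 \kappa_2/\rho)+(M_2+L_2)\log(\kappa_1\kappa_3/\rho)$, which after substituting the architecture parameters scales polynomially in $n$ with logarithmic factors. Since the logistic loss is $1$-Lipschitz and bounded by $R+\log 2$ on $[-R,R]$, a standard local Rademacher / Bernstein-type argument (exploiting the self-calibration $\EE[(f-f_\phi^*)^2]\lesssim\cE_\phi(f,f_\phi^*)$ on the bounded regime together with a truncation argument on the unbounded regime) yields a stochastic error of order $R^2\cdot\tfrac{\log\cN}{n}$. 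Plugging in gives a bound of the form $C\varepsilon\log^4 n + C\varepsilon^{-d/s}\log^4 n /n$, and balancing by setting $\varepsilon\asymp n^{-s/(s+2(s\vee d))}$ produces the rate $n^{-s/(2s+2(s\vee d))}\log^4 n$ asserted in the theorem. Finally I would verify that the prescribed sizes of $M_1,M_2,L_1,L_2,J_1,J_2$ etc.\ are exactly what is needed for this balance.

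The main obstacle is the interaction between the unboundedness of $f_\phi^*=\log\tfrac{\eta}{1-\eta}$ and the logistic self-calibration inequality: the usual argument that excess logistic risk dominates squared $L^2$ distance only holds when the target is bounded, so the truncation level $\delta$, the output radius $R$, the smoothness of the log-approximator $\bar h$, and the complexity of $\bar\eta$ must all be tuned jointly. Getting the exponent $s/(2s+2(s\vee d))$ rather than a slower or faster rate depends on tracking the Lipschitz constant $1/\delta$ of $\bar h$ through both the approximation and the stochastic analyses, which is where most of the technical work lies.
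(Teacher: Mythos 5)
Your overall architecture matches the paper's: approximate $f_\phi^*=\log\frac{\eta}{1-\eta}$ by composing the Besov approximator $\bar\eta$ of Theorem \ref{thm.approximation} with a clipping block, a one-dimensional approximator of the truncated logit function, and an output truncation; then combine an approximation bound with a covering-number/localization argument (the paper packages the latter as Proposition \ref{thm.sketch2.probability}, a peeling argument using the self-calibration $\EE\left[\left(\phi(yf)-\phi(yf_\phi^*)\right)^2\right]\lesssim e^{F_n}\cE_\phi(f,f_\phi^*)$ of Lemma \ref{lem.sketch2.nu}, which is the Bernstein-type step you describe). However, there is a genuine gap in how you handle the region where $\eta$ is near $0$ or $1$. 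You bound the truncation contribution by $\PP(\eta\notin[\delta,1-\delta])\cdot\log(1/\delta)$ and propose to control it ``using the boundedness of $\eta$ and a suitably small choice of $\delta$.'' This does not close: nothing in the assumptions prevents the set $\{\eta<\delta\}\cup\{\eta>1-\delta\}$ from having measure bounded away from zero (e.g., $\eta\equiv 1$ on a set of positive measure), in which case your bound is of order $\log(1/\delta)\asymp F_n$ and does not vanish. The correct observation --- the content of the paper's Lemma \ref{lem.sketch2.II} --- is pointwise: on $\{|f_\phi^*|>F_n\}$ one has $\eta\phi(f_\phi^*)+(1-\eta)\phi(-f_\phi^*)\leq 2F_ne^{-F_n}$, and likewise for the truncated network, because the conditional weight $\eta$ (resp.\ $1-\eta$) multiplying the large loss value is itself exponentially small there. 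Hence the tail term is $O(F_ne^{-F_n})$ regardless of the measure of the extreme region, which is what makes the truncation level $F_n\asymp\log n$ work.

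A second, related issue is the exponent bookkeeping. Your stochastic term $\varepsilon^{-d/s}\log^4 n/n$ only accounts for the complexity of $\bar\eta$; the one-dimensional block $\bar h$ must approximate a function of Lipschitz constant $\asymp e^{F_n}=1/\delta$ to accuracy $\asymp e^{F_n}\varepsilon$, which costs $M_2=O(e^{-F_n}\varepsilon^{-1})$ residual blocks and contributes an $\varepsilon^{-1}$ factor (up to $e^{-F_n}$) to the log-covering number. It is precisely this $\varepsilon^{-(d/s\vee 1)}$ scaling that produces the $(s\vee d)$ in the final exponent; a bound involving only $\varepsilon^{-d/s}$ cannot yield the stated rate when $s>d$. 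Relatedly, your proposed choice $\varepsilon\asymp n^{-s/(s+2(s\vee d))}$ does not equalize your own two terms (balancing $\varepsilon$ against $\varepsilon^{-d/s}/n$ would give $\varepsilon\asymp n^{-s/(s+d)}$), whereas the paper takes $\varepsilon_1=n^{-2s/(2s+2(s\vee d))}\log n$ so that the approximation term $4e^{F_n}\varepsilon_1$ and the entropy-driven term both land at $n^{-s/(2s+2(s\vee d))}$ up to logarithms. Without jointly tracking the $e^{\pm F_n}$ factors and the $\varepsilon^{-1}$ cost of $\bar h$, the claimed rate is not actually derived.
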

Theorem \ref{thm.classification} shows that a properly designed ConvResNet gives rise to an empirical classifier, of which the excess risk converges at a fast rate with an exponent depending on the intrinsic dimension $d$, instead of $D$. 

Theorem \ref{thm.classification}
is proved in Appendix \ref{sec.proof.class}.
Each building block of $\cC^{(n)}$ is constructed for the following purpose:
\begin{itemize}
	\item $\bar{g}_1\circ\bar{\eta}$ is designed to approximate a truncated $\eta$ on $\cM$, which is realized by Theorem \ref{thm.approximation}.
	\item $\bar{g}_2\circ\bar{h}$ is designed to approximate a truncated univariate function $\log\frac{z}{1-z}$.
\end{itemize}
\section{Proof of Theorem \ref{thm.approximation}}
\label{thm.approximation.proof}

We provide a proof sketch of Theorem \ref{thm.approximation} in this section. More technical details are deferred to Appendix \ref{sec.thm1.lemma.proof}.

We prove Theorem \ref{thm.approximation} in the following four steps:
  \begin{enumerate}
  	\item Decompose $f^* = \sum_{i} f_i$ as a sum of locally supported functions according to the manifold structure.
  	\item Locally approximate each $f_i$ using cardinal B-splines.
  	\item Implement the cardinal B-splines using CNNs.
  	\item Implement the sum of all CNNs by a ConvResNet for approximating $f^*$.
  \end{enumerate}
\textbf{Step 1: Decomposition of $f^*$.}\\
\textbf{$\bullet$ Construct an atlas on $\cM$.} { Since the manifold $\cM$ is compact, we can cover $\cM$ by a finite collection of open balls $B_\omega(\cbb_i)$ for $i = 1, \dots, C_{\cM}$, where $\cbb_i$ is the center of the ball and $\omega$ is the radius to be chosen later. Accordingly, the manifold is partitioned as $\cM = \bigcup_{i} U_i$ with $U_i = B_\omega(\cbb_i) \bigcap \cM$. We choose $\omega < \tau/2$ such that $U_i$ is diffeomorphic to an open subset of $\RR^d$ \citep[Lemma 5.4]{niyogi2008finding}. The total number of partitions is then bounded by $C_{\cM}\leq \left\lceil \frac{{\rm SA}(\cM)}{\omega^d}T_d\right\rceil,$
where ${\rm SA}(\cM)$ is the surface area of $\cM$ and $T_d$ is the average number of $U_i$'s that contain a given point on $\cM$ \citep[Chapter 2 Equation (1)]{Conway:1987:SLG:39091}.

On each partition, we define a projection-based transformation $\phi_i$ as
\begin{align*}
\phi_i(\xb) = a_iV_i^{\top}(\xb-\cbb_i)+\bbb_i,
\end{align*}
where the scaling factor $a_i\in\RR$ and the shifting vector $\bbb_i\in\RR^d$ ensure $\phi_i(U_i)\subset [0,1]^d$, and the column vectors of $V_i \in \RR^{D \times d}$ form an orthonormal basis of the tangent space $T_{\cbb_i}(\cM)$. The atlas on $\cM$ is the collection $(U_i, \phi_i)$ for $i = 1, \dots, \cM$. See Figure \ref{fig.EuclideanBall} for a graphical illustration of the atlas.
} 
%
  
\begin{figure}[t]
	\centering
	\includegraphics[width=0.75\textwidth,trim={0 0 0 6.5cm},clip]{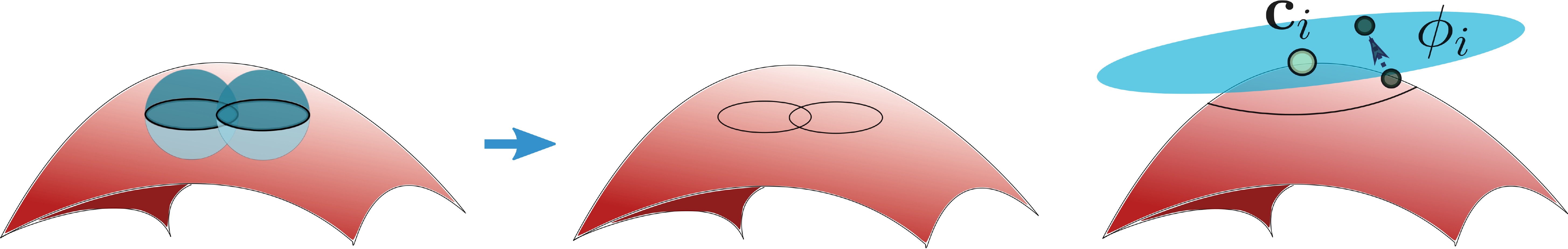}
	\caption{An atlas given by covering $\cM$ using Euclidean balls.}\label{fig.EuclideanBall}
	\vspace{-0.4cm}
\end{figure}

\noindent\textbf{$\bullet$ Decompose $f^*$ according to the atlas.} 
{ We decompose $f^*$ as
\begin{align}
	f^*=\sum_{i=1}^{C_{\cM}} f_i \quad \textrm{with}\quad f_i=f\rho_i,
	\label{eq.f.decompose}
\end{align}
where $\{\rho_i\}_{i=1}^{C_{\cM}}$ is a $C^\infty$ partition of unity with $\textrm{supp}(\phi_i) \subset U_i$. The existence of such a $\{\rho_i\}_{i=1}^{C_{\cM}}$ is guaranteed by Proposition \ref{thm:parunity}. As a result, each $f_i$ is supported on a subset of $U_i$, and therefore, we can rewrite \eqref{eq.f.decompose} as
\begin{align}
	f^*=\sum_{i=1}^{C_{\cM}} (f_i\circ\phi_i^{-1})\circ\phi_i \times \mone_{U_i} \quad \textrm{with}\quad f_i=f\rho_i,
	\label{eq.f.decompose1}
\end{align}
where $\mone_{U_i}$ is the indicator function of $U_i$. Since $\phi_i$ is a bijection between $U_i$ and $\phi_i(U_i)$, $f_i\circ\phi_i^{-1}$ is supported on $\phi_i(U_i)\subset [0,1]^d$. 
We extend $f_i\circ\phi_i^{-1}$ on $[0,1]^d\backslash\phi_i(U_i)$ by 0. The extended function is in $\Bnorm([0,1]^d)$ (see Lemma \ref{lem.besov.extend} in Appendix \ref{lem.besov.extend.proof}).
This allows us to use cardinal B-splines to locally approximate each $f_i \circ \phi_i^{-1}$ as detailed in {\bf Step 2}. 
}


\noindent\textbf{Step 2: Local cardinal B-spline approximation.}
{We approximate $f_i \circ \phi_i^{-1}$ using cardinal B-splines $\tf_i$ as
\begin{align} 
	f_i \circ \phi_i^{-1} \approx \tf_i\equiv\sum_{j=1}^N \tf_{i,j} ~~\textrm{with}~~ \tf_{i,j}=\alpha^{(i)}_{k,\jb}M_{k,\jb,m}^d ,
	\label{eq.f-BSpline}
\end{align}
where $\alpha^{(i)}_{k,\jb}\in \RR$ is a coefficient and $M_{k,\jb,m}^d:[0,1]^d\rightarrow \RR$ denotes a cardinal B-spline with indecies $k,m\in \NN^+,\jb\in \RR^d$. Here $k$ is a scaling factor, $\jb$ is a shifting vector, $m$ is the degree of the B-spline and $d$ is the dimension (see a formal definition in Appendix \ref{sec.B-spline}).

Since $s\geq d/p+1$ (by Assumption \ref{assum.f}), setting $r=+\infty, m=\lceil s \rceil +1$ in Lemma \ref{lem.BSplAppBesov} (see Appendix \ref{sec.lem.BSplAppBesov.proof}) and applying Lemma \ref{lem.besov.extend} gives
\begin{align} 
	\left\|\tf_i-f_i\circ\phi_i^{-1}\right\|_{L^\infty} \leq Cc_0N^{-s/d}
	\label{eq.tfierror}
\end{align}
for some constant $C$ depending on $s,p,q$ and $d$. 
}

Combining (\ref{eq.f.decompose1}) and (\ref{eq.f-BSpline}), we approximate $f^*$ by
\begin{align}
	\tf^*\equiv \sum_{i=1}^{C_{\cM}} \tf_i\circ\phi_i\times \mone_{U_i}=\sum_{i=1}^{C_{\cM}}\sum_{j=1}^N \tf_{i,j}\circ\phi_i\times \mone_{U_i}.
	\label{eq.f.decompose2}
\end{align}
Such an approximation has error
$$
\|\tf^*-f^*\|_{L^\infty} \leq CC_{\cM}c_0 N^{-s/d}.
$$

\noindent\textbf{Step 3: Implement local approximations in Step 2 by CNNs.}
{In {\bf Step 2}, (\ref{eq.f.decompose2}) gives a natural approximation of $f^*$. 	
In the sequel, we aim to implement all ingredients of $\tf_{i,j}\circ\phi_i\times \mone_{U_i}$ using CNNs. In particular, we show that CNNs can implement the cardinal B-spline $\tf_{i,j}$, the linear projection $\phi_i$, the indicator function $\mathds{1}_{U_i}$, and the multiplication operation.

\noindent\textbf{$\bullet$ Implement $\mone_{U_i}$ by CNNs.}
Recall our construction of $U_i$ in {\bf Step 1}. For any $\xb\in\cM$, we have $\mone_{U_i}(\xb)=1$ if $d_i^2(\xb) = \norm{\xb - \cbb_i}_2^2\leq \omega^2$; otherwise $\mone_{U_i}(\xb)= 0$.

To implement $\mathds{1}_{U_i}$, we rewrite it as the composition of a univariate indicator function $\mathds{1}_{[0, \omega^2]}$ and the distance function $d_i^2$:
\begin{align}
\mone_{U_i}(\xb)=\mone_{[0,\omega^2]}\circ d_i^2(\xb) \quad \textrm{for} \quad \xb\in\cM.
\label{eq.indicator.1}
\end{align}
We show that CNNs can efficiently implement both $\mathds{1}_{[0, \omega^2]}$ and $d_i^2$. Specifically, given $\theta \in (0, 1)$ and $\Delta \geq 8DB^2 \theta$, there exist CNNs that yield functions $\tilde{\mathds{1}}_\Delta$ and $\tilde{d}_i^2$ satisfying
\begin{align}
	\|\tdi-d_i^2\|_{L^{\infty}}\leq 4B^2D\theta
	\label{eq.distance.error}
\end{align}
and
\begin{align}
	\mtoned\circ\tdi(\xb)=\begin{cases}1,   \mbox{ if } \xb\in U_i, d_i^2(\xb)\leq \omega^2-\Delta,\\	
	0,   \mbox{ if } \xb\notin U_i,\\
	\mbox{between 0 and 1},   \mbox{ otherwise}.
\end{cases}
\label{eq.indicator.error}
\end{align}
We also characterize the network sizes for realizing $\tilde{\mathds{1}}_\Delta$ and $\tilde{d}_i^2$: The network for $\mtoned$ has $O(\log(\omega^2/\Delta))$ layers, $2$ channels and all weight parameters bounded by $\max(2,|\omega^2-4B^2D\theta|)$; the network for $\tdi$ has $O(\log(1/\theta)+D)$ layers, $6D$ channels and all weight parameters bounded by $4B^2$. More technical details are provided in Lemma \ref{lem.indicator} in Appendix \ref{sec.lem.indicator.proof}.
}

\noindent\textbf{$\bullet$ Implement $\tf_{i,j}\circ\phi_i$ by CNNs.} Since $\phi_i$ is a linear projection, it can be realized by a single-layer perceptron. By Lemma \ref{lem.cnnRealization} (see Appendix \ref{sec.lem.cnnRealization.proof}), this single-layer perceptron can be realized by a CNN, denoted by $\phi_i^{\rm CNN}$. 

For $\tf_{i,j}$, Proposition \ref{prop.ReLUBR} (see Appendix \ref{proof.prop.ReLUBR}) shows that for any $\delta\in(0,1)$ and $2\leq K\leq d$, there exists a CNN $\tf_{i,j}^{\rm CNN}\in \cF^{\rm CNN}(L,J,K,\kappa,\kappa)$ with
\begin{align*}
	\begin{aligned} 
		&L= O\left( \log\frac{1}{\delta}\right), J=O(1), \kappa=O\left( \delta^{-(\log 2)(\frac{2d}{sp-d}+\frac{c_1}{d})}\right)
	\end{aligned}
\end{align*}
such that when setting $N=C_1\delta^{-d/s}$, we have
\begin{align}
	\Big\|\sum_{j=1}^N \tf^{\rm CNN}_{i,j}-f_i\circ \phi_i^{-1}\Big\|_{L^\infty(\phi_i(U_i))}\leq \delta,
	\label{eq.tfi.error}
\end{align}
 where $C_1$ is a constant depending on $s,p,q$ and $d$. The constant hidden in $O(\cdot)$ depends on $d,s,\frac{2d}{sp-d},p,q,c_0$. The CNN class $\cF^{\rm CNN}$ is defined in Appendix \ref{sec.CNNMLP}.

\noindent\textbf{$\bullet$ Implement the multiplication $\times$ by a CNN.}
According to Lemma \ref{lem.multiplication} (see Appendix \ref{sec.lem.multiplication}) and Lemma \ref{lem.cnnRealization}, for any $\eta\in(0,1)$, the multiplication operation $\times$ can be approximated by a CNN $\ttimes$ with $L^{\infty}$ error $\eta$: 
\begin{align}
	\|a\times b-\ttimes(a,b)\|_{L^{\infty}}\leq \eta.
	\label{eq.multiplication.error}
\end{align}
 Such a CNN has $O\left(\log 1/\eta\right)$ layers, $6$ channels. All parameters are bounded by $\max(2c_0^2,1)$.

\noindent\textbf{Step 4: Implement  $\tf^*$ by a ConvResNet.}
We assemble all CNN approximations in \textbf{Step 3} together and show that the whole approximation can be realized by a ConvResNet.

\noindent\textbf{$\bullet$ Assemble all ingredients together.} 
Assembling all CNN approximations together gives an approximation of $\tf_{i,j}\circ \phi_i \times \mone_{U_i}$ as
\begin{align} 
\mathring{f}_{i,j}\equiv\ttimes\left(\tf_{i,j}^{\rm CNN}\circ \phi_i^{\rm CNN},  \mtoned\circ\tdi\right).
\label{eq.bfij}
\end{align}
After substituting (\ref{eq.bfij}) into (\ref{eq.f.decompose2}), we approximate the target function $f^*$ by
\begin{align}
	\mathring{f} \equiv\sum_{i=1}^{C_{\cM}} \sum_{j=1}^N \mathring{f}_{i,j}.
	\label{eq.f.approx.mlp}
\end{align}
The approximation error of $\mathring{f}$ is analyzed in Lemma \ref{lem.totalerror} (see Appendix \ref{sec.lem.totalerror}). According to Lemma \ref{lem.totalerror}, the approximation error can be bounded as follows:
\begin{align*}
&\qquad \|\mathring{f}-f^*\|_{L^\infty}\leq \sum_{i=1}^{C_{\cM}} (A_{i,1}+A_{i,2}+A_{i,3}) \quad \mbox{with } \\
	&A_{i,1}=\sum_{j=1}^N\Big\|\ttimes(\tf_{i,j}^{\rm CNN}\circ \phi_i^{\rm CNN},\mtoned\circ \tdi)-(\tf_{i,j}^{\rm CNN}\circ \phi_i^{\rm CNN})\times (\mtoned\circ \tdi)\Big\|_{L^\infty}\leq N\eta,\\
	&A_{i,2}=\Big\|\Big(\sum_{j=1}^N \left(\tf_{i,j}^{\rm CNN}\circ \phi_i^{\rm CNN}\right)\Big)\times(\mtoned\circ \tdi)-f_i\times (\mtoned\circ \tdi)\Big\|_{L^\infty}\leq \delta,\\
	&A_{i,3}=\|f_i\times(\mtoned\circ \tdi)-f_i\times \mone_{U_i}\|_{L^\infty}\leq \frac{c(\pi+1)}{\omega(1-\omega/\tau)}\Delta,
\end{align*}
where $\delta,\eta,\Delta$ and $\theta$ are defined in (\ref{eq.tfi.error}), (\ref{eq.multiplication.error}), (\ref{eq.indicator.error}) and (\ref{eq.distance.error}), respectively.
For any $\varepsilon\in(0,1)$, with properly chosen $\delta,\eta,\Delta$ and $\theta$ as in (\ref{eq.parameterchoose}) in Lemma \ref{lem.totalerror}, one has
\begin{align}
\|\mathring{f}-f^*\|_{L^\infty}\leq \varepsilon.
\label{eq.thm1.ringerror}
\end{align}
With these choices, the network size of each CNN is quantified in Appendix \ref{sec.cnnsize}.

\noindent\textbf{$\bullet$ Realize $\mathring{f}$ by a ConvResNet.} 
Lemma \ref{lem.bfijcnn} (see Appendix \ref{sec.lem.bfijcnn.proof}) shows that for every $\mathring{f}_{i,j}$, there exists $\bar{f}^{\rm CNN}_{i,j}\in \cF^{\rm CNN}(L,J,K,\kappa_1,\kappa_2) $ with
$L=O(\log 1/\varepsilon+D+\log D), J=O(D), \kappa_1=O(1), \log \kappa_2=O\left(\log^2 1/\varepsilon\right)
$
such that 
$\bar{f}^{\rm CNN}_{i,j}(\xb)=\mathring{f}_{i,j}(\xb)$
for any $\xb\in\cM$. As a result, the function $\mathring{f}$ in (\ref{eq.f.approx.mlp}) can be expressed as a sum of CNNs: 
\begin{align}
	\mathring{f}=\bar{f}^{\rm CNN}\equiv \sum_{i=1}^{C_{\cM}} \sum_{j=1}^N \bar{f}^{\rm CNN}_{i,j},
\end{align}
where $N$ is chosen of $O\left(\varepsilon^{-d/s}\right)$ (see Proposition \ref{prop.ReLUBR} and Lemma \ref{lem.totalerror}).
Lemma \ref{lem.cnn.convresnet} (see Appendix \ref{sec.lem.cnn.convresnet.proof}) shows that $\bar{f}^{\rm CNN}$ can be realized by $\bar{f}\in \cC(M,L,J,\kappa_1, \kappa_2)$ with
\begin{align*}
	&M=O\left(\varepsilon^{-d/s}\right), L=O(\log (1/\varepsilon)+D+\log D), J=O(D), \kappa_1=O(1), \log \kappa_2=O\left(\log^2 (1/\varepsilon)\right).
\end{align*}


\section{Conclusion}\label{sec.conclusion}
Our results show that ConvResNets are adaptive to low-dimensional geometric structures of data sets.
Specifically, we establish a universal approximation theory of ConvResNets for Besov functions on a $d$-dimensional manifold $\cM$. Our network size depends on the intrinsic dimension $d$ and only weakly depends on $D$. We also establish  a statistical theory of ConvResNets for binary classification when the given data are located on $\cM$. The classifier is learned by minimizing the empirical logistic loss.
We prove that if the ConvResNet architecture is properly chosen, the excess risk of the learned classifier decays at a fast rate depending on the intrinsic dimension of the manifold.

Our ConvResNet has many practical properties: it has a fixed filter size and a fixed number of channels. Moreover, it does not require any cardinality constraint, which is beneficial to training.

Our analysis can be extended to multinomial logistic regression for multi-class classification. In this case, the network will output a vector where each component represents the likelihood of an input belonging to certain class. By assuming that each likelihood function is in the Besov space, we can apply our  analysis to approximate each function by a ConvResNet.

\bibliography{ref}
\bibliographystyle{ims}

\appendix

\newpage
\onecolumn

\section*{\centering
	Supplementary Materials for Besov Function Approximation and Binary Classification on  \\
	Low-Dimensional Manifolds Using Convolutional Residual Networks
}

\textbf{Notations:} Throughout our proofs, we define the following notations: For two functions $f:\Omega\rightarrow \RR$ and $g: \Omega\rightarrow \RR$ defined on some domain $\Omega$, we denote $f\lesssim g$ if there is a constant $C$ such that $f(\xb)\leq Cg(\xb)$ for all $\xb\in \Omega$. Similarly, we denote $f\gtrsim g$ if there is a constant $C$ such that $f(\xb)\geq Cg(\xb)$ for all $\xb\in \Omega$. We denote $f \asymp g$ if $f\lesssim g$ and $f\gtrsim g$. We use $\NN$ to denote the set of all nonnegative integers. For a real number $a$, we denote $a_+=\max(a,0)$ and $a_-=\min (a,0)$.

The proof of Theorem \ref{thm.approximation} is sketched in Section \ref{thm.approximation.proof}. In this supplementary material, we prove Theorem \ref{thm.classification} in Section \ref{sec.proof.class}. We define convolutional network and multi-layer perceptrons classes in Section \ref{sec.CNNMLP}, based on which the lemmas used in Section \ref{thm.approximation.proof} are proved in Section \ref{sec.thm1.lemma.proof}. The lemmas used in Section \ref{sec.proof.class} are proved in Section \ref{sec.proof.class.lemma}.
\section{Proof of Theorem \ref{thm.classification}}\label{sec.proof.class} 
\subsection{Basic definitions and tools}
We first define the bracketing entropy and covering number which are used in the proof of Theorem \ref{thm.classification}.

\begin{definition}[Bracketing entropy]
A set of function pairs $\{(f_i^L,f_i^U)\}_{i=1}^N$ is called a $\delta$-bracketing of a function class $\cF$ with respect to the norm $\|\cdot\|$ if for any $i$, $\|f_i^U-f_i^L\|\leq \delta$ and for any $f\in \cF$, there exists a pair $(f_i^L,f_i^U)$ such that $f_i^L\leq f \leq f_i^U$. The $\delta$-bracketing number is defined as the cardinality of the minimal $\delta$-bracketing set and is denoted by $\cN_B(\delta,\cF,\|\cdot\|)$. The $\delta$-bracketing enropy, denoted by $\cH_B(\delta,\cF,\|\cdot\|)$, is defined as
\begin{align*}
  \cH_B(\delta,\cF,\|\cdot\|)=\log\cN_B(\delta,\cF,\|\cdot\|).
\end{align*}
\end{definition}
\begin{definition}[Covering number]
  Let $\cF$ be a set with metric $\rho$. A $\delta$-cover of $\cF$ is a set $\{f_1^*,...,f_N^*\}\subset \cF$ such that for any $f\in\cF$, there exists $f_k^*$ for some $k$ such that $\rho(f,f_k^*)\leq \delta$. The $\delta$-covering number of $\cF$ is defined as
  \begin{align*}
    \cN(\delta,\cF,\rho)=\inf \{N: \mbox{ there exists a } \delta-\mbox{cover } \{f_1^*,...,f_N^*\} \mbox{ of } \cF\}.
  \end{align*}
\end{definition}
It has been shown \citep[Lemma 2.1]{geer2000empirical} that for any $\delta>0, p\geq 1$,
\begin{align*}
  \cH_B(\delta,\cF,\|\cdot\|_{L^p})\leq \log\cN(\delta/2,\cF,\|\cdot\|_{L^\infty}).
\end{align*}

The proof of Theorem \ref{thm.classification} relies on the following proposition which is a modified version of \citet[Theorem 5]{kim2018fast}:
\begin{proposition}
\label{thm.sketch2.probability}
Let $\phi$ be a surrogate loss function for binary classification. Let $f_{\phi}^*, \cE_{\phi}(f_n,f_{\phi}^*)$ be defined as in (\ref{eq.class.min}) and (\ref{eq.class.excess}), respectively.  Assume the following regularity conditions:
  \begin{enumerate}[label={({A}\arabic*)}]
    \item $\phi$ is Lipschitz: $|\phi(z_1)-\phi(z_2)|\leq C_1|z_1-z_2|$ for any $z_1,z_2$ and some constant $C_1$.
    \item For a positive sequence $a_n=O(n^{-a_0})$ for some $a_0>0$, there exists a sequence of function classes $\{\cF_n\}_{n\in\NN}$ such that as $n\rightarrow \infty$,
        \begin{align*}
          \cE_{\phi}(f_n,f_{\phi}^*)\leq a_n
        \end{align*}
        for some $f_n\in \cF_n$.
    \item There exists a sequence $\{F_n\}_{n\in \NN}$ with $F_n\gtrsim 1$ such that $\sup_{f\in\cF_n} \|f\|_{L^\infty}\leq F_n$.
    \item There exists a constant $\nu\in(0,1]$ such that for any $f\in \cF_n$ and any $n\in\NN$,
        \begin{align*}
          \EE\left( \phi(yf(\xb))-\phi(yf_{\phi}^*(\xb))\right)^2\leq C_2F_n^{2-\nu}e^{F_n}\left( \cE_{\phi}(f,f_{\phi}^*)\right)^{\nu}
        \end{align*}
        for some constant $C_2>0$ only depending on $\phi$ and $\eta$.
    \item For a positive constant $C_3>0$, there exists a sequence $\{\delta_n\}_{n\in\NN}$ such that
        \begin{align*}
          \cH_B(\delta_n,\cF_n,\|\cdot\|_{L^2})\leq C_3e^{-F_n}n \left(\frac{\delta_n}{F_n}\right)^{2-\nu}
        \end{align*}
        for $\{\cF_n\}_{n\in\NN}$ in (A2), $\{F_n\}_{n\in\NN}$ in (A3) and $\nu$ in (A4).
  \end{enumerate}
  Let $\epsilon^2_n\asymp \max(a_n,\delta_n)$. Then the empirical $\phi$-risk minimizer $\hf_{\phi,n}$ over $\cF_n$ satisfies
  \begin{align}
    \PP\left( \cE_{\phi}(\hf_{\phi,n},f_{\phi}^*)\geq \epsilon_n \right) \leq C_5\exp\left(-C_4e^{-F_n}n\left(\epsilon_n^2/\left(F_n\right)\right)^{2-\nu}\right)
    \label{eq.sketch2.prob}
  \end{align}
  for some constants $C_4,C_5>0$.
\end{proposition}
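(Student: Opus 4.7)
The plan is to prove this oracle inequality via the standard empirical process machinery: an excess risk decomposition, followed by peeling combined with a Bernstein-type bound for bracketing processes. The variance condition (A4) and the bracketing entropy bound (A5) are calibrated against each other so as to extract fast rates with exponent $2-\nu$. Morally this follows \citet[Theorem 5]{kim2018fast}, with adjustments so that the factors $F_n$ and $e^{F_n}$ appear exactly as in (\ref{eq.sketch2.prob}).

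\textbf{Excess risk decomposition.} First, I would use (A2) to pick $f_n \in \cF_n$ with $\cE_\phi(f_n, f_\phi^*) \leq a_n$. Writing $P$ for the data distribution and $P_n$ for the empirical measure, the ERM inequality $P_n \phi(y\hf_{\phi,n}) \leq P_n \phi(y f_n)$ and adding and subtracting gives the standard bound
$$\cE_\phi(\hf_{\phi,n}, f_\phi^*) \leq (P - P_n)\bigl[\phi(y\hf_{\phi,n}) - \phi(yf_\phi^*)\bigr] \;-\; (P - P_n)\bigl[\phi(yf_n) - \phi(yf_\phi^*)\bigr] \;+\; a_n.$$
The second term is a one-sample centered empirical process at a fixed $f_n$ and can be dominated by a direct Bernstein inequality using (A4) applied to $f_n$. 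The problem thereby reduces to a uniform bound on $\sup_{f \in \cF_n} (P - P_n)[\phi(yf) - \phi(yf_\phi^*)]$ localized according to $\cE_\phi(f, f_\phi^*)$.

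\textbf{Peeling.} To obtain the fast rate, I would slice $\cF_n$ into geometric shells $\cF_n^{(j)} = \{ f \in \cF_n : 2^{j-1}\epsilon_n^2 < \cE_\phi(f, f_\phi^*) \leq 2^j \epsilon_n^2 \}$ for $j \geq 1$, plus a ground level $\{\cE_\phi(f,f_\phi^*) \leq \epsilon_n^2\}$ on which the conclusion holds trivially. On each shell, (A1) and (A4) give a variance bound of order $F_n^{2-\nu} e^{F_n}(2^j \epsilon_n^2)^\nu$, while (A1) and (A3) give a uniform envelope of order $F_n$. Invoking a Bernstein inequality for bracketing processes (e.g.\ \citet[Lemma 5.15--5.16]{geer2000empirical}) with the entropy bound from (A5) applied at radius $\delta \asymp \sqrt{2^j}\,\epsilon_n$, each shell contributes a probability of order $\exp\bigl(-c\, e^{-F_n} n\,(2^j \epsilon_n^2 / F_n)^{2-\nu}\bigr)$. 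Summing the geometric series in $j$ and noting that $\epsilon_n^2 \asymp \max(a_n, \delta_n)$ balances the deterministic and stochastic errors yields (\ref{eq.sketch2.prob}).

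\textbf{Main obstacle.} The key delicate step is the calibration of $\delta_n$ as the fixed point at which local bracketing complexity balances the variance scale — this is exactly what (A5) encodes. The hardest bookkeeping is tracking the $F_n$ and $e^{F_n}$ factors through the Bernstein exponent: the entropy bound in (A5) is engineered to carry a matching $e^{-F_n}$ and $F_n^{2-\nu}$ so that the entropy term is dominated by the variance-driven exponent without introducing extra $F_n$ losses. A subtlety is that $\phi$ (e.g.\ logistic loss) may be unbounded on all of $\RR$, so the envelope estimate must repeatedly invoke (A3) to truncate at $F_n$; this is why $e^{F_n}$ enters both (A4) and the exponent. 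Assembling the ground level, the peeled shells, and the fixed-$f_n$ Bernstein bound via a union bound gives the stated tail inequality.
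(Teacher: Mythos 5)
Your overall architecture --- the ERM basic inequality, peeling into geometric shells indexed by $\cE_\phi(f,f_\phi^*)$, a variance bound from (A4) on each shell, a Bernstein-type inequality for bracketing empirical processes fed by (A5), and a geometric summation --- is exactly the paper's strategy (the paper invokes Shen--Wong's large-deviation theorem where you invoke van de Geer's lemmas; these are interchangeable here). The one step that would fail as written is the centering of your empirical processes at $f_\phi^*$. You propose to bound $(P-P_n)[\phi(yf)-\phi(yf_\phi^*)]$ uniformly over a shell, and $(P-P_n)[\phi(yf_n)-\phi(yf_\phi^*)]$ by ``a direct Bernstein inequality,'' but both require an $L^\infty$ envelope for the summands, and $\phi(yf_\phi^*)$ is not bounded: $f_\phi^*=\log\frac{\eta}{1-\eta}$ blows up wherever $\eta$ approaches $0$ or $1$, which is permitted in the intended application (this is precisely why the truncation $f_{\phi,n}^*$ appears in the proof of Theorem \ref{thm.classification}). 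Condition (A4) supplies only a second moment, not a range bound, so neither Bernstein step goes through as stated.

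The repair is small and is what the paper does: since the two $\phi(yf_\phi^*)$ terms in your decomposition cancel, work directly with the class $\cG_{n,i}=\{\phi(yf_n)-\phi(yf): f\in\cF_{n,i}\}$, whose envelope is $2C_1F_n$ by (A1) and (A3), and whose variance is still of order $e^{F_n}F_n^{2-\nu}(2^i\epsilon_n^2)^\nu$ by applying (A4) to both $f$ and $f_n$ together with $(a+b)^2\leq 2a^2+2b^2$; the bracketing entropy of $\cG_{n,i}$ is controlled by that of $\cF_n$ via the Lipschitz property (A1). With that change your peeling argument matches the paper's essentially line by line; the only other bookkeeping the paper adds is that the shells are empty once $2^{i-1}\epsilon_n^2>C_1F_n$, so the union over shells is finite.
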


Proposition \ref{thm.sketch2.probability} is proved in Appendix \ref{proof.probability}. In Proposition \ref{thm.sketch2.probability}, condition (A1) requires the surrogate loss function $\phi$ to be Lipschitz. This condition is satisfied in Theorem \ref{thm.classification} since $\phi$ is the logistic loss. (A2) is a condition on the bias of $\hf_{\phi,n}$. Take $n$ as the number of samples. (A2) requires the bias to decrease in the order of $O(n^{-a_0})$ for some $a_0$. (A3) requires all functions in the class $\cF_n$ to be bounded. (A4) and (A5) are conditions relate to the variance of $\hf_{\phi,n}$. Condition (A4) for logistic loss can be verified using the following lemma:
\begin{lemma}[Lemma 6.1 in \citet{park2009convergence}]
	\label{lem.sketch2.nu}
	Let $\phi$ be the logistic loss. Given a function class $\cF$ which is uniformly bounded by $F$, for any function $f\in\cF$, we have
	\begin{align*}
		\EE\left[\phi(yf)-\phi(yf_{\phi}^*)\right]^2\leq Ce^F\cE_{\phi}(f,f_{\phi}^*)
	\end{align*}
	for some constant $C$.
\end{lemma}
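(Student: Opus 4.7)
The plan is to work pointwise in $\xb$, exploit the fact that $y\in\{-1,1\}$ is Bernoulli conditional on $\xb$, and use a special algebraic identity together with a Taylor expansion of the conditional logistic risk to get the desired inequality.

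\medskip

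First, I would fix $\xb$ and let $p=\sigma(f(\xb))$ and $q=\eta(\xb)=\sigma(f_\phi^*(\xb))$, where $\sigma(t)=1/(1+e^{-t})$. Writing $\psi=\phi(yf)-\phi(yf_\phi^*)$, the conditional first and second moments are
$$
\Phi(\xb):=\EE[\psi\mid \xb]= \eta[\phi(f)-\phi(f_\phi^*)]+(1-\eta)[\phi(-f)-\phi(-f_\phi^*)],
$$
and
$$
\EE[\psi^2\mid\xb]=\eta\bigl(\phi(f)-\phi(f_\phi^*)\bigr)^2+(1-\eta)\bigl(\phi(-f)-\phi(-f_\phi^*)\bigr)^2.
$$
Applying the variance decomposition $\eta u^2+(1-\eta)v^2=(\eta u+(1-\eta)v)^2+\eta(1-\eta)(u-v)^2$ with $u=\phi(f)-\phi(f_\phi^*)$, $v=\phi(-f)-\phi(-f_\phi^*)$, and the key logistic identity $\phi(t)-\phi(-t)=-t$, which gives $u-v=f_\phi^*-f$, I get the clean decomposition
$$
\EE[\psi^2\mid\xb]=\Phi(\xb)^2+\eta(1-\eta)(f-f_\phi^*)^2.
$$

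\medskip

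Next, I bound each piece by a constant multiple of $e^F\Phi(\xb)$. For the squared term, since $f_\phi^*$ is the pointwise minimizer of $G_\eta(t):=\eta\phi(t)+(1-\eta)\phi(-t)$ and $|f|\le F$, one has $0\le \Phi(\xb)\le \eta\phi(f)+(1-\eta)\phi(-f)\le F+\log 2$, so $\Phi^2\le(F+\log2)\Phi\le Ce^F\Phi$. For the variance term, I apply a second-order Taylor expansion of $G_\eta$ around its minimizer $f_\phi^*$: since $G_\eta'(f_\phi^*)=0$ and $G_\eta''(t)=\sigma(t)(1-\sigma(t))$, there exists $\xi$ between $f$ and $f_\phi^*$ with
$$
\Phi(\xb)=\tfrac{1}{2}\sigma(\xi)(1-\sigma(\xi))(f-f_\phi^*)^2.
$$
Because $t\mapsto \sigma(t)(1-\sigma(t))$ is a bell-shaped function maximized at $t=0$, its infimum on any interval is attained at an endpoint, so $\sigma(\xi)(1-\sigma(\xi))\ge \min\{\sigma(f)(1-\sigma(f)),\,\eta(1-\eta)\}$. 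Using $|f|\le F$ gives $\sigma(f)(1-\sigma(f))\ge e^F/(1+e^F)^2\ge \tfrac{1}{4}e^{-F}$. Therefore
$$
\frac{\eta(1-\eta)}{\min\{\sigma(f)(1-\sigma(f)),\eta(1-\eta)\}}\le \max\!\left\{\frac{\eta(1-\eta)}{\sigma(f)(1-\sigma(f))},1\right\}\le C e^{F},
$$
which yields $\eta(1-\eta)(f-f_\phi^*)^2\le C'e^F\Phi(\xb)$. Summing the two bounds and taking the outer expectation over $\xb$ gives $\EE[\psi^2]\le Ce^F\cE_\phi(f,f_\phi^*)$.

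\medskip

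The potential obstacle is that $f_\phi^*$ is not assumed to be bounded (it diverges where $\eta$ approaches $0$ or $1$), so neither a naive Lipschitz bound on $\psi$ nor Pinsker's inequality would suffice to compare $\EE[\psi^2]$ against $\EE[\psi]$. The decomposition $\Psi=\Phi^2+\eta(1-\eta)(f-f_\phi^*)^2$ sidesteps this issue because the unboundedness of $f-f_\phi^*$ is absorbed exactly by the degeneration of $\eta(1-\eta)$ — the Taylor argument shows they compensate up to a factor of $e^F$ coming only from $|f|\le F$. This is really where boundedness of the network class (but not of the Bayes optimizer) enters, and it is what produces the characteristic $e^F$ factor in the variance bound.
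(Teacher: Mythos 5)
Your proof is correct, and it is worth noting that the paper itself does not prove this lemma at all --- it is imported verbatim as Lemma 6.1 of \citet{park2009convergence} --- so there is no internal argument to compare against. Your route is the standard one for establishing a Bernstein-type variance condition for the logistic loss: a second-order Taylor expansion of the conditional risk $G_\eta(t)=\eta\phi(t)+(1-\eta)\phi(-t)$ around its minimizer $f_\phi^*$, with the curvature $G_\eta''(t)=\sigma(t)(1-\sigma(t))$ lower-bounded by $\tfrac{1}{4}e^{-F}$ on the relevant range. The exact identity $\EE[\psi^2\mid\xb]=\Phi(\xb)^2+\eta(1-\eta)(f-f_\phi^*)^2$, obtained from the bias--variance split combined with $\phi(t)-\phi(-t)=-t$, is a clean way to isolate precisely the two quantities that must be controlled, and your observation that the degeneracy of $\eta(1-\eta)$ exactly compensates the unboundedness of $f-f_\phi^*$ is the right explanation of why only $\|f\|_{L^\infty}\leq F$ (and not boundedness of $f_\phi^*$) is needed. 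The only point to tidy up is the set where $\eta(\xb)\in\{0,1\}$: there $f_\phi^*=\pm\infty$, the Taylor expansion is vacuous, and the term $\eta(1-\eta)(f-f_\phi^*)^2$ is formally $0\cdot\infty$; but on that set $\EE[\psi^2\mid\xb]$ reduces directly to $\Phi(\xb)^2$ because the other value of $y$ occurs with probability zero, and your first bound already covers that case, so the conclusion is unaffected.
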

According to Lemma \ref{lem.sketch2.nu}, (A4) is verified with $\nu=1$. Now we are ready to prove Theorem \ref{thm.classification}.

\subsection{Proof of Theorem \ref{thm.classification}}
\begin{proof}[Proof of Theorem \ref{thm.classification}]
The main idea of the proof is to construct a sequence of network architectures, depending on $n$, such that Condition (A1)-(A5) in Proposition \ref{thm.sketch2.probability} are satisfied. The excess risk is then derived from (\ref{eq.sketch2.prob}). In particular, we choose 
\begin{align}
	\cF_n=\cC^{(n)},a_n=n^{-\frac{s}{2s+2(s\vee d)}}\log^2 n, F_n=\frac{s}{2s+2(s\vee d)}\log n,\delta_n=n^{-\frac{s}{2s+2(s\vee d)}}\log^4 n
	\label{eq.thm2.proof.parameter}
\end{align}
where $\cC^{(n)}$ is the network architecture in Theorem \ref{thm.classification}.

We first prove the probability bound of $\cE_{\phi}(\hf_{\phi,n},f_{\phi}^*)$ by checking conditions (A1)-(A5) in Proposition \ref{thm.sketch2.probability}. Note  that $\phi$ is the logistic loss which is Lipschitz continuous with Lipschitz constant 1. Thus (A1) is verified. According to Lemma \ref{lem.sketch2.nu}, (A4) is verified with $\nu=1$. We next verify (A2), (A3) and (A5).

\paragraph{A truncation technique.}
Recall that $f^*=\log \frac{\eta}{1-\eta}$. As $\eta$ goes to 0 (resp. 1), $f^*$ goes to $\infty$ (resp. $-\infty$). Note that (A3) requires the function class $\cF_n$ to be bounded by $F_n$. To study the approximation error of $\cF_n$ with respect to $f^*$, we consider a truncated version of $f^*$ defined as
  \begin{align}
    f_{\phi,n}^*=\begin{cases}
      F_n, & \mbox{ if } f_{\phi}^*>F_n,\\
      f_{\phi}^*, & \mbox{ if }-F_n\leq f_{\phi}^*\leq F_n,\\
      -F_n, & \mbox{ if } f_{\phi}^*<-F_n.
    \end{cases}
    \label{eq.sketch2.f-phi-n}
  \end{align}

\paragraph{Verification of (A2) and (A3).}
The following lemma is a very important lemma on approximating $f_{\phi,n}^*$ by ConvResNets. It also provides the covering number of the network class which will be used to verify (A5).
  \begin{lemma}\label{lem.sketch2.fPhiNcnn}
    Assume Assumption \ref{assum.M} and \ref{assum.reach}. Assume $0<p,q\leq \infty$, $0<s<\infty$, $s\geq d/p+1$. For any $\varepsilon\in (0,1)$ and any $K\leq D$, there exists a ConvResNet architecture
    \begin{align*}
    \cC^{(F_n)}=\big\{f|&f=\bar{g}_2\circ\bar{h}\circ \bar{g}_1\circ\bar{\eta} \mbox{ where } \bar{\eta}\in \cC^{\Conv}\left(M_1,L_1,J_1,K,\kappa_1\right),\ \bar{g}_1\in \cC^{\Conv}\left(1,4,8,1,\kappa_2\right),\\
    &  \bar{h}\in \cC^{\Conv}\left(M_2,L_2,J_2,1,\kappa_1\right),\ \bar{g}_2\in \cC\left(1,3,8,1,\kappa_3,1,R\right)\big\}
  \end{align*}
with 
\begin{align*}
	&M_1=O\left(\varepsilon^{-d/s}\right),\ M_2=O\left(e^{-F_n}\varepsilon^{-1}\right),
	L_1=O(\log (1/\varepsilon)+D+\log D),\ L_2=O(\log (1/\varepsilon)),\\
	&J_1=O(D),\ J_2=O(1), \ \kappa_1=O(1),\ \log \kappa_2=O(\log^2 (1/\varepsilon)),\  \kappa_3=O(\log (F_n/\varepsilon)+F_n),\ R=F_n,
\end{align*}
  such that for any $\eta\in \Bnorm(\cM)$ with $\|\eta\|_{\Bnorm(\cM)}\leq c_0$ for some constant $c_0$, and $f_{\phi,n}^*$ be defined as in (\ref{eq.sketch2.f-phi-n}), there exists $\bar{f}_{\phi,n}\in \cC^{(F_n)}$ with
    \begin{align*}
      \|\bar{f}_{\phi,n}-f_{\phi,n}^*\|_{L^\infty}\leq 4e^{F_n}\varepsilon.
    \end{align*}
  Moreover, the covering number of $\cC^{(F_n)}$ is bounded by
\begin{align*}
  \cN(\delta,\cC^{(F_n)},\|\cdot \|_{L^\infty})=O\left(D^3\varepsilon^{-\left(\frac{d}{s} \vee 1\right))}\log (1/\varepsilon)\left(\log^2 (1/\varepsilon)+\log D+ F_n+\log(1/\delta)\right)\right).
\end{align*}
The constant hidden in $O(\cdot)$ depends on $d,s,\frac{2d}{sp-d},p,q,c_0,\tau$ and the surface area of $\cM$. 
  \end{lemma}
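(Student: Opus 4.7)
My plan is to build $\bar f_{\phi,n}$ as the explicit four-stage composition suggested by the architecture $\cC^{(F_n)}$, matching the decomposition
\[
f_{\phi,n}^* = T_{F_n} \circ h \circ T_{\epsilon_0} \circ \eta,
\]
where $h(z)=\log(z/(1-z))$, $T_{F_n}(w)=\max\{-F_n,\min\{F_n,w\}\}$ clips to $[-F_n,F_n]$, and $T_{\epsilon_0}$ clips to $[\epsilon_0,1-\epsilon_0]$ with threshold $\epsilon_0 := 1/(e^{F_n}+1)$ chosen so that $|h(z)|\le F_n$ on the clipped range. First, I would apply Theorem~\ref{thm.approximation} to $\eta\in\Bnorm(\cM)$ to obtain $\bar\eta\in\cC^{\Conv}(M_1,L_1,J_1,K,\kappa_1)$ with $\|\bar\eta-\eta\|_{L^\infty}\le\varepsilon$ and the parameter sizes claimed for $\bar\eta$; the final fully-connected layer used in Theorem~\ref{thm.approximation} to sum the local CNN approximations is absorbed into $\bar g_1$.

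Next, I would realize $T_{\epsilon_0}$ as $\bar g_1\in\cC^{\Conv}(1,4,8,1,\kappa_2)$ using two $\mathrm{ReLU}$ gates (writing $\min,\max$ in $\mathrm{ReLU}$ form), which needs only constant depth and width but requires the bias $\kappa_2$ to carry the output scale of $\bar\eta$. For $\bar h\in\cC^{\Conv}(M_2,L_2,J_2,1,\kappa_1)$ I would approximate the univariate smooth function $h$ on $[\epsilon_0,1-\epsilon_0]$ using the same cardinal-B-spline/ConvResNet machinery as in Step~3 of the proof of Theorem~\ref{thm.approximation}, specialized to $d=1$; because $h$ is analytic on the clipped interval but has Lipschitz constant $\sup_z 1/(z(1-z))=O(e^{F_n})$, producing an $L^\infty$ error of $\varepsilon_h:=e^{F_n}\varepsilon$ on an interval of length $\sim 1$ requires $M_2=O(e^{-F_n}\varepsilon^{-1})$ pieces and $L_2=O(\log(1/\varepsilon))$ depth per piece. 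Finally $\bar g_2\in\cC(1,3,8,1,\kappa_3,1,R)$ with $R=F_n$ implements $T_{F_n}$ and adds the output fully-connected layer; $\kappa_3=O(F_n+\log(1/\varepsilon))$ absorbs the amplified output range. Assembling the errors: on $\{|f_\phi^*|\le F_n\}$ one has $|T_{\epsilon_0}(\eta)-T_{\epsilon_0}(\bar\eta)|\le\varepsilon$, which is amplified by the Lipschitz constant $O(e^{F_n})$ of $h$, and then the approximation error $\varepsilon_h$ and clipping error add up to at most $4e^{F_n}\varepsilon$; on $\{|f_\phi^*|>F_n\}$ the output clipper $T_{F_n}$ again forces $|\bar f_{\phi,n}-f_{\phi,n}^*|\le 4e^{F_n}\varepsilon$ because $\bar\eta$ is close to $\eta$.

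For the covering-number bound I would use the standard parameter-perturbation argument: the output of any $f\in\cC^{(F_n)}$ is Lipschitz in its weights with constant at most a product of the per-layer weight bounds and depths, giving an explicit Lipschitz constant that is polynomial in $\kappa_1,\kappa_2,\kappa_3$ and exponential only in the depth. Counting parameters (filter size $K$ times number of channels times depth times number of residual blocks), the dominant term is $O(D J_1 K L_1 M_1)=O(D^2\varepsilon^{-d/s}(\log(1/\varepsilon)+D+\log D))$ plus a lower-order contribution from $\bar g_1,\bar h,\bar g_2$; discretizing each parameter to resolution $\delta/$(Lipschitz constant) yields the stated covering number. The main technical obstacle is Stage~3: the derivatives of $h$ on $[\epsilon_0,1-\epsilon_0]$ blow up like $e^{\ell F_n}$ at order $\ell$, so a naive high-order Taylor argument would waste factors of $e^{F_n}$. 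The resolution is to target the enlarged error budget $\varepsilon_h=e^{F_n}\varepsilon$ (permissible since the overall target is $4e^{F_n}\varepsilon$) and to exploit the $d=1$, $s$-arbitrarily-large specialization of Theorem~\ref{thm.approximation}; this trades the exponential scaling against a single $e^{-F_n}$ factor in $M_2$, making the scale factors cancel cleanly.
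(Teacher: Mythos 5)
Your plan follows the paper's proof essentially verbatim: the same four-stage factorization $T_{F_n}\circ h\circ T_{\epsilon_0}\circ\eta$ with $\bar\eta$ supplied by Theorem \ref{thm.approximation}, ReLU clipping gates for $\bar g_1$ and $\bar g_2$, a univariate network approximating the truncated link function, the same Lipschitz-amplification bookkeeping that yields the $4e^{F_n}\varepsilon$ bound, and a per-component parameter-perturbation covering bound (which the paper imports as Lemma \ref{lem.resCover.1} from \citet{oono2019approximation} and sums over the four blocks). The only cosmetic difference is that the paper approximates the normalized, Lipschitz-$1$ version of $h_n$ via \citet[Theorem 4.1]{chen1908nonparametric} followed by the MLP-to-ConvResNet conversion, rather than re-running the $d=1$ B-spline construction of Theorem \ref{thm.approximation}; this is also exactly where the paper's (and your) value of $M_2$ originates.
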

Lemma \ref{lem.sketch2.fPhiNcnn} is proved in Section \ref{proof.lem.fPhiNcnn}.
By Lemma \ref{lem.sketch2.fPhiNcnn}, fix the network architecture $\cC^{(F_n)}$, for $\varepsilon_1\in (0,1)$, there exists a ConvResNet $\bar{f}_{\phi,n}\in \cC^{(F_n)}$
   such that $\|\bar{f}_{\phi,n}-f_{\phi,n}^*\|_{L^\infty}\leq 4e^{F_n}\varepsilon_1.$ In the following, we choose $\varepsilon_1=n^{-\frac{2s}{2s+2(s\vee d)}}\log n$.

Next we check conditions (A2) and (A3) by estimating $\cE_{\phi}(\bar{f}_{\phi,n},f_{\phi}^*)$. Denote
\begin{align*}
  A_n=\{\xb\in\cM: |f_{\phi}^*|\leq F_n\},\ A_n^{\complement}=\{\xb\in\cM: |f_{\phi}^*|> F_n\}.
\end{align*}
We have
\begin{align}
  \cE_{\phi}(\bar{f}_{\phi,n},f_{\phi}^*)&=\int_{\cM} \eta\left(\phi(\bar{f}_{\phi,n})-\phi(f_{\phi}^*)\right) +(1-\eta)\left( \phi(-\bar{f}_{\phi,n})-\phi(-f_{\phi}^*)\right)\mu(d\xb) \nonumber\\
  &=\underbrace{\int_{A_n} \eta\left(\phi(\bar{f}_{\phi,n})-\phi(f_{\phi,n}^*)\right) +(1-\eta)\left( \phi(-\bar{f}_{\phi,n})-\phi(-f_{\phi,n}^*)\right)\mu(d\xb)}_{\rm T_1} \nonumber\\
  &\quad + \underbrace{\int_{A_n^{\complement}} \eta\left(\phi(\bar{f}_{\phi,n})-\phi(f_{\phi}^*)\right) +(1-\eta)\left( \phi(-\bar{f}_{\phi,n})-\phi(-f_{\phi}^*)\right)\mu(d\xb)}_{\rm T_2},
  \label{eq.sketch2.decom}
\end{align}
where we used $f_{\phi}^*=f_{\phi,n}^*$ on $A_n$. In (\ref{eq.sketch2.decom}), ${\rm T_1}$ represents the approximation error of $\bar{f}_{\phi,n}$, and ${\rm T_2}$ is the truncation error. 
Since $\|\bar{f}_{\phi,n}-f_{\phi,n}^*\|_{L^\infty}\leq 4e^{F_n}\varepsilon_1$,
\begin{align}
  {\rm T_1}&\leq \int_{A_n} \eta|\phi(\bar{f}_{\phi,n})-\phi(f_{\phi,n}^*)| +(1-\eta)| \phi(-\bar{f}_{\phi,n})-\phi(-f_{\phi,n}^*)|\mu(d\xb)\nonumber\\
  &\leq \|\phi(\bar{f}_{\phi,n})-\phi(f_{\phi,n}^*)\|_{L^\infty}\leq 4e^{F_n}\varepsilon_1.
  \label{eq.sketch2.I}
\end{align}

A bound of ${\rm T_2}$ is provided by the following lemma (see a proof in Appendix \ref{proof.sketch2.II}):
\begin{lemma}\label{lem.sketch2.II}
   Assume Assumption \ref{assum.M} and \ref{assum.reach}. Assume $0<p,q\leq \infty$, $0<s<\infty$, $s\geq d/p+1$, $\eta\in \Bnorm(\cM)$ with $\|\eta\|_{\Bnorm(\cM)}\leq c_0$ for some constant $c_0$. Let ${\rm T_2}$ be defined as in (\ref{eq.sketch2.decom}). If $4e^{F_n}\varepsilon_1<1$, the following bound holds:
  \begin{align}
    {\rm T_2}\leq 8F_ne^{-F_n}.
    \label{eq.sketch2.II}
  \end{align}
\end{lemma}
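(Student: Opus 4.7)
The plan is to split the truncation region $A_n^{\complement}$ into $A_n^{+} = \{\xb \in \cM : f_{\phi}^*(\xb) > F_n\}$ and $A_n^{-} = \{\xb \in \cM : f_{\phi}^*(\xb) < -F_n\}$, bound the integrand in (\ref{eq.sketch2.decom}) on each piece separately, and add. The key observation is that, since $f_{\phi}^*(\xb) = \log(\eta(\xb)/(1-\eta(\xb)))$, the event $f_{\phi}^* > F_n$ is equivalent to $1-\eta < 1/(1+e^{F_n}) \leq e^{-F_n}$, and symmetrically $f_{\phi}^* < -F_n$ is equivalent to $\eta < e^{-F_n}$. Thus on $A_n^{+}$ the coefficient $1-\eta$ in front of the second term of the integrand is exponentially small, and on $A_n^{-}$ the coefficient $\eta$ in front of the first term is exponentially small; this is the only source of decay I will need.

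Next I would combine this with pointwise control of $\bar{f}_{\phi,n}$. Because $f_{\phi,n}^* = F_n$ on $A_n^{+}$, because $\|\bar{f}_{\phi,n} - f_{\phi,n}^*\|_{L^\infty} \leq 4e^{F_n}\varepsilon_1 < 1$ by Lemma \ref{lem.sketch2.fPhiNcnn}, and because $\|\bar{f}_{\phi,n}\|_{L^\infty} \leq R = F_n$ by the architecture of $\cC^{(F_n)}$, the approximant $\bar{f}_{\phi,n}$ lies in $[F_n - 1, F_n]$ on $A_n^{+}$, and by symmetry in $[-F_n, -F_n + 1]$ on $A_n^{-}$. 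Using the elementary estimates $\phi(z) = \log(1+e^{-z}) \leq e^{-z}$ for $z \geq 0$ and $\phi(-z) = \log(1+e^{z}) \leq z + 1$ for $z \geq 0$, together with $\phi \geq 0$, I would bound the two pieces of the integrand on $A_n^{+}$ by $\eta(\phi(\bar{f}_{\phi,n}) - \phi(f_{\phi}^*)) \leq \phi(\bar{f}_{\phi,n}) \leq \phi(F_n - 1) \leq e \cdot e^{-F_n}$ and $(1-\eta)(\phi(-\bar{f}_{\phi,n}) - \phi(-f_{\phi}^*)) \leq (1-\eta)\phi(-\bar{f}_{\phi,n}) \leq (F_n + 1)\,e^{-F_n}$, whose sum is at most $4F_n e^{-F_n}$ (for, say, $F_n \geq 2$). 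The argument on $A_n^{-}$ is entirely symmetric, yielding the same bound, and adding the two contributions and integrating gives $T_2 \leq 8 F_n e^{-F_n}$ as required.

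The argument is essentially routine once the decomposition $A_n^{\complement} = A_n^{+} \cup A_n^{-}$ and the inequalities $1-\eta \leq e^{-F_n}$ on $A_n^{+}$ (respectively $\eta \leq e^{-F_n}$ on $A_n^{-}$) are in place; there is no concentration or covering-number input at this step. The only technical points to be careful about are (i) to drop the terms $-\phi(f_{\phi}^*)$ and $-\phi(-f_{\phi}^*)$ in the upper-bound direction, using $\phi \geq 0$, and (ii) to exploit the hard cap $\|\bar{f}_{\phi,n}\|_{L^\infty} \leq F_n$ coming from the last block of $\cC^{(F_n)}$, without which the pointwise ranges of $\bar{f}_{\phi,n}$ on $A_n^{\pm}$ used above would not be valid.
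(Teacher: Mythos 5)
Your proposal is correct and follows essentially the same route as the paper's proof: the same split of $A_n^{\complement}$ into $\{f_{\phi}^*>F_n\}$ and $\{f_{\phi}^*<-F_n\}$, the same use of $1-\eta\leq e^{-F_n}$ (resp.\ $\eta\leq e^{-F_n}$), and the same pointwise localization $\bar{f}_{\phi,n}\in[F_n-1,F_n]$ via the truncation, the $4e^{F_n}\varepsilon_1<1$ approximation bound, and the hard cap $R=F_n$. The only (harmless) difference is that you drop the subtracted terms using $\phi\geq 0$ to get a one-sided bound directly, whereas the paper bounds the absolute value of the integrand by controlling the $f_{\phi}^*$-terms and the $\bar{f}_{\phi,n}$-terms separately; both yield the stated $8F_ne^{-F_n}$ for $F_n$ sufficiently large.
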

According to our choices of $\varepsilon_1$ and $F_n$, $4e^{F_n}\varepsilon_1<1$ is satisfied.
Combining (\ref{eq.sketch2.I}) and (\ref{eq.sketch2.II}) gives
\begin{align*}
  \cE_{\phi}(\bar{f}_{\phi,n},f_{\phi}^*)\leq{\rm T_1}+{\rm T_2}\leq 4e^{F_n}\varepsilon_1+8F_ne^{-F_n}.
\end{align*}
Substituting $\varepsilon_1=n^{-\frac{2s}{2s+2(s\vee d)}}\log n, F_n=\frac{s}{2s+2(s\vee d)}\log n$ gives
\begin{align*}
  \cE_{\phi}(\bar{f}_{\phi,n},f_{\phi}^*)\leq C_6 n^{-\frac{s}{2s+2(s\vee d)}}\log^2 n
\end{align*}
and $\cC^{(F_n)}=\cC^{(n)}$, where $\cC^{(n)}$ is defined in Theorem \ref{thm.classification}. Here $C_6$ is a constant depending on $s$ and $d$.
Thus (A2) and (A3) are satisfied with $a_n=n^{-\frac{s}{2s+2(s\vee d)}}\log^2 n, F_n=\frac{s}{2s+2(s\vee d)}\log n$.

\paragraph{Verification of (A5).}

For (A5), we only need to check that $\log \cN(\delta_n,\cC^{(n)},\|\cdot\|_{L^\infty})\leq C_3 nF_n^{-1}\delta_n$ for some constant $C_3$ . According to Lemma \ref{lem.sketch2.fPhiNcnn} with our choices of $\varepsilon_1$ and $F_n$, we have
\begin{align*}
  \log \cN(\delta,\cC^{(n)},\|\cdot \|_{L^\infty})=O\left(D^3n^{\frac{2(s\vee d)}{2s+2(s\vee d)} }\log (n)\left(\log^2 n+\log n+\log D+\log(1/\delta)\right)\right).
\end{align*}
Substituting our choice $\delta_n=n^{-\frac{s}{2s+2(s\vee d)}}\log^4 n$ gives rise to
\begin{align}
  \log\cN(\delta_n,\cC^{(n)},\|\cdot\|_{L^\infty})=O\left((D^3\log D) n^{\frac{2(s\vee d)}{2s+2(s\vee d)}}\log^2 n\right)\leq C_3 nF_n^{-1}e^{-F_n}\delta_n
\end{align}
for some $C_3$ depending on $d,D^3\log D,s,\frac{d}{sp-d},p,q,c_0,\tau$ and the surface area of $\cM$.
Therefore (A5) is satisfied. 
\paragraph{Estimate the excess risk.}
Since (A1)-(A5) are satisfied, Proposition \ref{thm.sketch2.probability} gives
\begin{align}
    \PP\left( \cE_{\phi}(\hf_{\phi,n},f_{\phi}^*)\geq \epsilon_n \right) \leq C_5 \exp\left(-C_4\frac{2s+2(s\vee d)}{s}\frac{n^{\frac{s+2(s\vee d)}{2s+2(s\vee d)}}\epsilon^2_n}{\log n}\right)
    \label{eq.sketch2.P}
\end{align}
with $\epsilon^2_n \asymp \max(a_n,\delta_n)=C_7n^{-\frac{s}{2s+2(s\vee d)}}\log^4 n$ and $\hf_{\phi,n}\in \cF_n$ being the minimizer of the empirical risk in (\ref{eq.class.risk}). Here $C_7$ is a constant depending on  $d,D,\log D,s,\frac{d}{sp-d},p,q,c_0,\tau$ and the surface area of $\cM$.

 Note that (A5) is also satisfied for any $\delta_n\geq C_7n^{-\frac{s}{2s+2(s\vee d)}}\log^4 n$. Thus
\begin{align}
    \PP\left( \cE_{\phi}(\hf_{\phi,n},f_{\phi}^*)\geq t \right) \leq C_5 \exp\left(-C_4\frac{2s+2(s\vee d)}{s}\frac{n^{\frac{s+2(s\vee d)}{2s+2(s\vee d)}}t}{\log n}\right)
    \label{eq.sketch2.pp}
\end{align}
for any $t\geq C_7n^{-\frac{s}{2s+2(s\vee d)}}\log^4 n$. Integrating (\ref{eq.sketch2.pp}), we estimate the expected excess risk as
\begin{align}
  \EE(\cE_{\phi}(\hf_{\phi,n},f_{\phi}^*))&=\int_{\cM} \cE_{\phi}(\hf_{\phi,n},f_{\phi}^*)\mu(d\xb) \nonumber\\
  &\leq C_7\PP\left( \cE_{\phi}(\hf_{\phi,n},f_{\phi}^*)\leq C_7n^{-\frac{s}{2s+2(s\vee d)}}\log^4 n \right)n^{-\frac{s}{2s+2(s\vee d)}}\log^4 n \nonumber\\
  &\quad + C_5\int_{C_7n^{-\frac{s}{2s+2(s\vee d)}}\log^4 n}^{\infty} \exp\left(-C_4\frac{2s+2(s\vee d)}{s}\frac{n^{\frac{s+2(s\vee d)}{2s+2(s\vee d)}}t}{\log n}\right) dt \nonumber\\
  &\leq C_8 n^{-\frac{s}{2s+2(s\vee d)}}\log^4 n
\end{align}
for some constants $C_7,C_8$ depending on $d,D,\log D,s,\frac{2d}{sp-d},p,q,c_0,\tau$ and the surface area of $\cM$. 
\end{proof} 

\section{Convolutional neural networks and muli-layer perceptrons}\label{sec.CNNMLP}
The proofs of the main results utilize properties convolutional neural networks (CNN) and multi-layer perceptrons (MLP) with the ReLU activation. We consider CNNs in the form of
\begin{align}\label{eq:convfCNN}
	f(\xb)=W\cdot\Conv_{\cW,\cB}(\xb)
\end{align}
where $\Conv_{\cW,\cB}(Z)$ is defined in (\ref{eq.conv}), $W$ is the weight matrix of the fully connected layer, $\cW,\cB$ are sets of filters and biases, respectively.
We define the class of CNNs as
\begin{equation}
	\begin{aligned}
		\cF^{\rm CNN}(L,J,K,\kappa_1,\kappa_2) = \big\{f ~|& f(\xb) \textrm{ in the form \eqref{eq:convfCNN} with $L$ layers.}\\
		&\mbox{Each convolutional layer has filter size bounded by $K$.} \\
		&  \mbox{The number of channels of each layer is bounded  by $J$}.\\
		& \max_{l}\|\cW^{(l)}\|_{\infty} \vee \|B^{(l)}\|_{\infty} \leq \kappa_1,\  \|W\|_{\infty}  \leq \kappa_2\big\}.\label{eqcFCNN}
	\end{aligned}
\end{equation}

For MLP, we consider the following form
\begin{align}\label{eq:reluf}
	f(\xb) = W_L \cdot \textrm{ReLU}(W_{L-1} \cdots \textrm{ReLU}(W_1 \xb + \bbb_1) \cdots + \bbb_{L-1}) + \bbb_L,
\end{align}
where $W_1, \dots, W_L$ and $\bbb_1, \dots, \bbb_L$ are weight matrices and bias vectors of proper sizes, respectively. 
The class of MLP is defined as
\begin{equation}
\begin{aligned}
\cF^{\rm MLP}(L,J,\kappa) = \big\{f ~|& f(\xb) \textrm{ in the form \eqref{eq:reluf} with $L$-layers and width bounded by $J$}. \\
&  \norm{W_i}_{\infty, \infty} \leq \kappa, \norm{\bbb_i}_\infty \leq \kappa ~\textrm{for}~ i = 1, \dots, L \big\}.\label{eqcF}
\end{aligned}
\end{equation}
In some cases it is necessary to enforce the output of the MLP to be bounded. We define such a class as
\begin{equation*}
	\begin{aligned}
		\cF^{\rm MLP}(L,J,\kappa,R) = \left\{f ~| f(\xb)\in \cF^{\rm MLP}(L,J,\kappa) \mbox{ and } \|f\|_{\infty}\leq R\right\}.
	\end{aligned}
\end{equation*}
In some case we do not need the constraint on the output, we denote such MLP class as $\cF^{\rm MLP}(L,J,\kappa)$.



\section{Lemmas and proofs in Section \ref{thm.approximation.proof}}
\label{sec.thm1.lemma.proof}
\subsection{Lemma \ref{lem.besov.extend} and its proof}
\label{lem.besov.extend.proof}
\begin{lemma}\label{lem.besov.extend}
	Define $f_i,\phi_i$ as in (\ref{eq.f.decompose}). We extend $f_i\circ\phi_i^{-1}$ by 0 on $[0,1]^d\backslash \phi_i(U_i)$ and denote the extended function by $f_i\circ\phi_i^{-1}|_{[0,1]^d}$ . Under Assumption \ref{assum.f}, we have $f_i\circ\phi_i^{-1}|_{[0,1]^d}\in \Bnorm([0,1]^d)$ with $$\|f_i\circ\phi_i^{-1}\|_{\Bnorm([0,1]^d)}<Cc_0$$ 
	where $C$ is a constant depending on $s,p,q$ and $d$. 
\end{lemma}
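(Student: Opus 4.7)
The plan is to reduce the lemma to the well-known invariance of $\Bnorm(\RR^d)$ under (i) pointwise multiplication by compactly supported $C^\infty$ functions and (ii) composition with $C^\infty$ diffeomorphisms having bounded derivatives of all orders. Together these imply that the Besov norm on $\cM$ is, up to equivalence, independent of the particular finite $C^\infty$ atlas and subordinate partition of unity used to define it, so one can pass between the atlas underlying Definition~\ref{def.besovM} and the projection-based atlas constructed in Step~1 of the proof sketch.

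Concretely, I would fix some reference atlas $\{(\widetilde U_j,\widetilde\phi_j)\}$ and partition of unity $\{\widetilde\rho_j\}$ realizing Definition~\ref{def.besovM} for $f^*$, so that $\sum_j\|(f^*\widetilde\rho_j)\circ\widetilde\phi_j^{-1}\|_{\Bnorm(\RR^d)}\le c_0$. Inserting $1=\sum_j\widetilde\rho_j$ into $f^*\rho_i$ and changing charts yields
\begin{align*}
(f^*\rho_i)\circ\phi_i^{-1} \;=\; \sum_j \big[(f^*\widetilde\rho_j)\circ\widetilde\phi_j^{-1}\big]\circ\big(\widetilde\phi_j\circ\phi_i^{-1}\big)\,\cdot\,\big(\rho_i\circ\phi_i^{-1}\big),
\end{align*}
where each summand is supported in $\phi_i(U_i\cap\widetilde U_j)$ and nontrivial for only finitely many $j$. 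The factor $\rho_i\circ\phi_i^{-1}$ is a compactly supported $C^\infty$ function on $\RR^d$, and the transition $\widetilde\phi_j\circ\phi_i^{-1}$ is a $C^\infty$ diffeomorphism between relatively compact open subsets of $\RR^d$ by smooth compatibility of charts on $\cM$. Since $\supp(\rho_i)$ is compactly contained in each relevant $U_i\cap\widetilde U_j$, I may extend these transitions to global smooth maps on $\RR^d$ with bounded derivatives without changing the product. Applying (i) and (ii) to $\Bnorm(\RR^d)$ (see, e.g., \citet{triebel1983theory}) and summing gives
\begin{align*}
\|(f^*\rho_i)\circ\phi_i^{-1}\|_{\Bnorm(\RR^d)} \;\le\; C\sum_j \|(f^*\widetilde\rho_j)\circ\widetilde\phi_j^{-1}\|_{\Bnorm(\RR^d)} \;\le\; Cc_0,
\end{align*}
with $C$ depending only on $s,p,q,d$ and geometric data of $\cM$ (the transition maps and the two partitions of unity).

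To finish, I note that $\phi_i(\supp\rho_i)$ is a compact subset of $\phi_i(U_i)\subset[0,1]^d$ (after, if necessary, a mild adjustment of the scaling factor $a_i$ so that the support lies strictly inside the open cube). Consequently the zero-extension of $(f^*\rho_i)\circ\phi_i^{-1}$ to $[0,1]^d$ coincides with a $\Bnorm(\RR^d)$ function that vanishes near the boundary of the cube, so its $\Bnorm([0,1]^d)$ norm is controlled by its $\Bnorm(\RR^d)$ norm, producing the claimed bound $Cc_0$. The main technical obstacle is the careful invocation of the multiplier and diffeomorphism results for the full parameter range $0<p,q\le\infty$ and the tracking of the dependence of the implicit constant $C$; these facts are classical in the Besov literature but need to be cited rather than rederived to keep the argument concise.
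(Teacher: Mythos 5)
Your proposal is correct, but it takes a genuinely different and more elaborate route than the paper. The paper's proof treats the first half of the lemma as essentially definitional: it applies Definition \ref{def.besovM} with the \emph{same} atlas $\{(U_i,\phi_i)\}$ and partition of unity $\{\rho_i\}$ that appear in (\ref{eq.f.decompose}), so $\|(f\rho_i)\circ\phi_i^{-1}\|_{\Bnorm(\RR^d)}< c_0$ follows immediately from $\sum_i\|(f\rho_i)\circ\phi_i^{-1}\|_{\Bnorm(\RR^d)}\le c_0$; the only substantive step is then the passage from $\Bnorm(\RR^d)$ to $\Bnorm([0,1]^d)$, which the paper handles exactly as you do in your last paragraph, via restriction and the equivalence $\Bnorm([0,1]^d)=\cBnorm([0,1]^d)$ for Lipschitz domains from \citet[Theorem 3.18]{dispa2003intrinsic}. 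You instead allow the reference atlas in Definition \ref{def.besovM} to differ from the projection-based atlas of Step 1 and prove chart-independence by inserting a second partition of unity and invoking the pointwise-multiplier and diffeomorphism theorems for $\Bnorm(\RR^d)$ (both valid here since $s\ge d/p+1>d(1/p-1)_+$). Your version is more general and addresses a point the paper glosses over — the Besov norm of Definition \ref{def.besovM} does a priori depend on the chosen atlas, and the projection charts are a specific construction — at the cost of heavier classical machinery and a constant that also depends on the geometry of $\cM$ (transition maps and the two partitions of unity), not only on $s,p,q,d$ as the lemma states. One small imprecision: the justification for extending the transition maps should be that $\supp(\rho_i)\cap\supp(\widetilde\rho_j)$ (not $\supp(\rho_i)$ itself) is compactly contained in $U_i\cap\widetilde U_j$, which is what makes each summand compactly supported in $\phi_i(U_i\cap\widetilde U_j)$; as written the claim about $\supp(\rho_i)$ is false in general, though the fix is routine.
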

To prove Lemma \ref{lem.besov.extend}, we first give an equivalent definition of Besov functions:
\begin{definition}\label{def.BesovNew}
	Let $\Omega$ be a Lipschitz domain in $\RR^d$. For $0<p,q\leq \infty$ and $s>0$, $\cBnorm(\Omega)$ is the set of functions
	\begin{align*}
		\cBnorm(\Omega)=\{f: \Omega\rightarrow \RR| \exists g\in \Bnorm(\RR^d) \mbox{ with } g|_{\Omega}=f\},
	\end{align*}
where $g|_{\Omega}$ denotes the restriction of $g$ on $\Omega$.
	The norm is defined as $\|f\|_{\cBnorm(\Omega)}=\inf_g \|g\|_{\cBnorm(\RR^d)}$. 
\end{definition}
According to \citet[Theorem 3.18]{dispa2003intrinsic}, for any Lipschitz domain $\Omega\subset \RR^d$, the norm $\|\cdot\|_{\Bnorm(\Omega)}$ in Definition \ref{def.besovNorm} is equivalent to $\|\cdot\|_{\cBnorm(\Omega)}$ in Definition \ref{def.BesovNew}. Thus $\Bnorm(\Omega)=\cBnorm(\Omega)$.
\begin{proof}[Proof of Lemma \ref{lem.besov.extend}]
  Since $f\in \Bnorm(\cM)$, according to Definition \ref{def.besovM}, $f_i\circ\phi_i^{-1} \in \Bnorm(\RR^d)$ in the sense of extending $f_i\circ\phi_i^{-1}$ by zero on $\RR^d\backslash \phi_i(U_i)$, see \citet[Section 3.2.3]{triebel1983theory} for details. From Assumption \ref{assum.f}, $\|f_i\circ\phi_i^{-1}\|_{\Bnorm(\RR^d)}<c_0$.
  We next restrict $f_i\circ\phi_i^{-1}$ on $[0,1]^d$ and denote the restriction by $f_i\circ\phi_i^{-1} | _{[0,1]^d}$. 
  
  Using Definition \ref{def.BesovNew} and Assumption , we have 
  $$
  \|f_i\circ\phi_i^{-1} | _{[0,1]^d}\|_{\cBnorm([0,1]^d))}\leq\|f_i\circ\phi_i^{-1}\|_{\Bnorm(\RR^d)}<c_0, 
  $$ 
  and we next show $f_i\circ\phi_i^{-1} | _{[0,1]^d}\in U(\Bnorm([0,1]^d))$.
  Since $[0,1]^d$ is a Lipschitz domain, \citet[Theorem 3.18]{dispa2003intrinsic} implies $\Bnorm([0,1]^d)=\cBnorm([0,1]^d)$. Therefore,  there exists a constant $C$ depending on $s,p,q$ and $d$ such that
   $$
   \|f_i\circ\phi_i^{-1} | _{[0,1]^d}\|_{\Bnorm([0,1]^d)}\leq \|f_i\circ\phi_i^{-1}\|_{\cBnorm(\RR^d)}\leq Cc_0.
   $$
   \end{proof}

\subsection{Cardinal B-splines}\label{sec.B-spline}
We give a brief introduction of cardinal B-splines.
\begin{definition}[Cardinal B-spline]
	Let $\psi(x)=\mone_{[0,1]}(x)$ be the indicator function of $[0,1]$. The cardinal B-spline of order m is defined by taking $m+1$-times convolution of $\psi$:
	\begin{align*}
		\psi_m(x)=(\underbrace{\psi\ast\psi\ast\cdots \ast\psi}_{m+1\mbox{ times}})(x)
	\end{align*}
	where $f\ast g(x)\equiv\int f(x-t)g(t)dt$.
\end{definition}
Note that $\psi_m$ is a piecewise polynomial with degree $m$ and support $[0,m+1]$. It can be expressed as \citep{mhaskar1992approximation}
\begin{align*}
	\psi_m(x)=\frac{1}{m!}\sum_{j=0}^{m+1}(-1)^j \binom{m+1}{j}(x-j)_+^m.
\end{align*}
For any $k,j\in \NN$, let $M_{k,j,m}(x)=\psi_m(2^kx-j)$, which is the rescaled and shifted cardinal B-spline with resolution $2^{-k}$ and support $2^{-k}[j,j+(m+1)]$. For $\kb=(k_1,\dots,k_d)\in \NN^d$ and $\jb=(j_1,\dots,j_d)\in \NN^d$, we define the $d$ dimensional cardinal B-spline as $M_{\kb,\jb,m}^d(\xb)=\prod_{i=1}^d \psi_m(2^{k_i}x_i -j_i)$. When $k_1=\ldots=k_d=k \in \NN$, we denote $M_{k,\jb,m}^d(\xb)=\prod_{i=1}^d \psi_m(2^{k}x_i -j_i)$.

\subsection{Lemma \ref{lem.BSplAppBesov}}\label{sec.lem.BSplAppBesov.proof}
For any $m\in \NN$, let $J(k)=\{-m,-m+1,\dots,2^k-1,2^k\}^d$ and the quasi-norm of the coefficient $\{\alpha_{k,j}\}$ for $k\in \NN, \jb\in J(k)$ be

\begin{align}
	\|\{\alpha_{k,\jb}\}\|_{\bnorm}=\left( \sum_{k\in\NN} \left[ 2^{k(s-d/p)}\left( \sum_{\jb\in J(k)} |\alpha_{k,\jb}|^p\right)^{1/p} \right]^q \right)^{1/q}.
	\label{eq.bnorm}
\end{align}

The following lemma, resulted from \citet{devore1988interpolation,dung2011optimal}, gives an error bound for the approximation of functions in $\Bnorm([0,1]^d)$ by cardinal B-splines.

\begin{lemma}[Lemma 2 in \citet{suzuki2018adaptivity}; \citet{devore1988interpolation,dung2011optimal}]\label{lem.BSplAppBesov}
	Assume that $0 < p,q,r \le\infty$ and $0<s<\infty$ satisfying $s>d(1/p-1/r)_+$. Let $m \in \NN$ be the order of the Cardinal B-spline basis such that $0<s<\min(m,m-1+1/p)$. For any $f\in \Bnorm([0,1]^d)$, there exists $f_N$ satisfying
	\begin{align*}
		\|f-f_N\|_{L^r([0,1]^d)}\leq C N^{-s/d}\|f\|_{\Bnorm([0,1]^d)}
	\end{align*}
	for some constant $C$ with $N\gg 1$. $f$ is in the form of
	\begin{align}\label{eq.fN}
		f_N(\xb)=\sum_{k=0}^H \sum_{\jb\in J(k)} \alpha_{k,\jb}M_{k,\jb,m}^d(\xb)+ \sum_{k=K+1}^{H^*}\sum_{i=1}^{n_k} \alpha_{k,\jb_i} M_{k,\jb_i,m}^d(\xb),
	\end{align}
	where $\{\jb_i\}_{i=1}^{n_k} \subset J(k), H=\lceil c_1\log(N)/d \rceil, H^*=\lceil \nu^{-1}\log(\lambda N) \rceil +H+1, n_k=\lceil \lambda N2^{-\nu (k-H)} \rceil$ for $k=H+1,\dots,H^*, u =d(1/p-1/r)_+$ and $\nu=(s-u)/(2u)$. The real numbers $c_1>0$ and $\lambda>0$ are two absolute constants chosen to satisfy $\sum_{k=1}^H (2^k+m)^d+\sum_{k=H+1}^{H^*} n_k\leq N$, which are to $N$. Moreover, we can choose the coefficients $\{\alpha_{k,\jb}\}$ such that
	\begin{align*}
		\|\{\alpha_{k,\jb}\}\|_{\bnorm} \leq C_1 \|f\|_{\Bnorm([0,1]^d)}
	\end{align*}
	for some constant $C_1$.
\end{lemma}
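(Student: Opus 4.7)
The plan is to combine uniform B-spline approximation at coarse scales with best $n$-term nonlinear approximation at fine scales, using the well-known equivalence between Besov norms and weighted sequence norms of B-spline coefficients. I would first invoke the characterization $\|f\|_{\Bnorm([0,1]^d)}\asymp\|\{\alpha_{k,\jb}\}\|_{\bnorm}$ for the coefficients of a cardinal B-spline expansion $f=\sum_{k\ge 0}\sum_{\jb\in J(k)}\alpha_{k,\jb}M_{k,\jb,m}^d$. The hypotheses $s<m$ and $s<m-1+1/p$ are precisely what make this norm equivalence valid, via a stable quasi-interpolant of order $m$ (or through an equivalent wavelet characterization). The coefficient bound $\|\{\alpha_{k,\jb}\}\|_{\bnorm}\le C_1\|f\|_{\Bnorm}$ claimed in the lemma then follows immediately from this equivalence.

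Next, I would split scales at $H=\lceil c_1\log N/d\rceil$ and $H^*=\lceil \nu^{-1}\log(\lambda N)\rceil+H+1$: keep all $(2^k+m)^d$ coefficients for $k\le H$ (linear part, the first sum in (\ref{eq.fN})), keep only the $n_k=\lceil \lambda N 2^{-\nu(k-H)}\rceil$ largest in absolute value for $H<k\le H^*$ (nonlinear best-$n$-term part, the second sum), and drop scales $k>H^*$ entirely. Choosing $c_1,\lambda$ small enough ensures the total term count $\sum_{k=0}^H(2^k+m)^d+\sum_{k=H+1}^{H^*}n_k\le N$, since the linear piece is $\lesssim 2^{Hd}\asymp N$ and the nonlinear sum is geometric with ratio $2^{-\nu}$.

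The error analysis then has three pieces. The discarded tail $k>H^*$ contributes $\lesssim 2^{-(s-u)H^*}\|f\|_{\Bnorm}$ via the Besov embedding $\Bnorm\hookrightarrow L^r$ (which holds because $s>u:=d(1/p-1/r)_+$), and this is $\lesssim N^{-s/d}\|f\|_{\Bnorm}$ by the definition of $H^*$. At each intermediate level $H<k\le H^*$, a Stechkin-type $\ell^p\to\ell^r$ sparse approximation inequality gives per-level $L^r$ error $\lesssim n_k^{-(1/p-1/r)_+}\|\alpha_{k,\cdot}\|_{\ell^p}\lesssim n_k^{-(1/p-1/r)_+}2^{-(s-d/p)k}\|f\|_{\Bnorm}$, and summing these over $k$ yields another geometric series of order $N^{-s/d}\|f\|_{\Bnorm}$. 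The parameter $\nu=(s-u)/(2u)$ is chosen precisely to equalize the tail and per-level contributions so that the total error achieves the optimal rate.

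The main obstacle is the parameter bookkeeping that makes the coarse-scale budget, the per-scale sparse budget $n_k$, and the resulting level-wise error contributions all balance to yield the rate $N^{-s/d}$. A secondary technical point is justifying the B-spline norm equivalence near the boundary $s<m-1+1/p$; the cleanest route is to pass through an equivalent wavelet characterization of $\Bnorm([0,1]^d)$ so that standard Besov-space machinery applies directly to both the norm identification and the Stechkin-type sparse approximation inequality.
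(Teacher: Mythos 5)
This lemma is imported by citation (Lemma 2 of \citet{suzuki2018adaptivity}, building on \citet{devore1988interpolation,dung2011optimal}) and the paper supplies no proof of its own; your outline reproduces exactly the standard argument behind those references — the B-spline/sequence-norm equivalence valid under $s<\min(m,m-1+1/p)$, full retention of coefficients up to level $H$, greedy $n_k$-term selection on levels $H<k\le H^*$ via a Stechkin-type inequality, and truncation beyond $H^*$ controlled through the embedding $\Bnorm\hookrightarrow L^r$ guaranteed by $s>d(1/p-1/r)_+$. This is the same approach as the cited source, and your identification of the parameter bookkeeping (choosing $c_1,\lambda,\nu$ so that the cardinality budget and the three error contributions all balance at $N^{-s/d}$) as the main remaining work is accurate.
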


\begin{lemma}\label{lem.alphabound}
	Let $\alpha^{(i)}_{k,\jb}$ be defined as in (\ref{eq.f-BSpline}). Under Assumption \ref{assum.f}, for any $i,k,\jb$, we have 
	\begin{align}
		|\alpha_{k,\jb}|\leq Cc_0 N^{(\log 2)(\nu^{-1}+c_1d^{-1})(d/p-s)_+}
	\end{align}
	for some $C$ depending on $(d/p-s)_+\nu^{-1},s$ and $d$, where $\nu,c_1$ are defined in Lemma \ref{lem.BSplAppBesov}. 
\end{lemma}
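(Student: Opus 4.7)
The plan is to convert the quasi-norm bound on the entire coefficient sequence (which we get from Lemma \ref{lem.BSplAppBesov}) into an $L^\infty$ bound on each individual coefficient, paying attention to the worst-case exponential blow-up in $k$ that can occur when $s < d/p$, and then absorbing that blow-up into a polynomial factor in $N$ using the fact that the resolutions involved are bounded by $H^\ast = O(\log N)$.

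Concretely, I would proceed as follows. First, Lemma \ref{lem.besov.extend} gives
\[
\|f_i\circ\phi_i^{-1}|_{[0,1]^d}\|_{\Bnorm([0,1]^d)} \;\le\; C c_0,
\]
so Lemma \ref{lem.BSplAppBesov} yields coefficients $\{\alpha^{(i)}_{k,\jb}\}$ with $\|\{\alpha^{(i)}_{k,\jb}\}\|_{\bnorm} \le C' c_0$. Second, directly from the definition (\ref{eq.bnorm}) of the quasi-norm, for every fixed $k$,
\[
2^{k(s-d/p)}\Bigl(\sum_{\jb'\in J(k)}|\alpha^{(i)}_{k,\jb'}|^p\Bigr)^{1/p} \;\le\; \|\{\alpha^{(i)}_{k,\jb}\}\|_{\bnorm},
\]
since an individual $\ell^q$-summand is dominated by the full $\ell^q$ norm. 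Because any single term of an $\ell^p$ sum satisfies $|\alpha^{(i)}_{k,\jb}|\le (\sum_{\jb'}|\alpha^{(i)}_{k,\jb'}|^p)^{1/p}$ (trivially for $p\ge 1$ and even more easily for $0<p<1$), we obtain the pointwise estimate
\[
|\alpha^{(i)}_{k,\jb}| \;\le\; 2^{-k(s-d/p)}\,\|\{\alpha^{(i)}_{k,\jb}\}\|_{\bnorm} \;\le\; C' c_0\, 2^{k(d/p-s)_+}.
\]

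Third, I would invoke the range of indices in Lemma \ref{lem.BSplAppBesov}: the coefficients appearing in the approximation have $k \le H^\ast = \lceil \nu^{-1}\log(\lambda N)\rceil + H + 1$ with $H = \lceil c_1 \log(N)/d\rceil$, so $H^\ast \le (\nu^{-1}+c_1/d)\log N + O(1)$ (natural logarithm, matching the statement of Lemma \ref{lem.BSplAppBesov}). Substituting gives
\[
2^{k(d/p-s)_+} \;\le\; 2^{H^\ast (d/p-s)_+} \;\le\; C''\,e^{(\log 2)(\nu^{-1}+c_1/d)(d/p-s)_+\,\log N}
\;=\; C''\,N^{(\log 2)(\nu^{-1}+c_1 d^{-1})(d/p-s)_+},
\]
which produces exactly the claimed bound after absorbing the constants into $C$. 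The only step that requires any care is tracking the natural-logarithm convention used in Lemma \ref{lem.BSplAppBesov}, which is what accounts for the apparently odd $\log 2$ factor in the exponent; otherwise the argument is just monotonicity of $\ell^p$ and $\ell^q$ norms combined with the explicit upper bound on the maximal resolution level.
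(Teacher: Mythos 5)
Your proof is correct and follows essentially the same route as the paper's: bound $\|\{\alpha^{(i)}_{k,\jb}\}\|_{\bnorm}$ via Lemma \ref{lem.besov.extend} and Lemma \ref{lem.BSplAppBesov}, extract the single-coefficient bound $|\alpha^{(i)}_{k,\jb}|\le 2^{k(d/p-s)_+}\|\{\alpha^{(i)}_{k,\jb}\}\|_{\bnorm}$ from the definition of the quasi-norm, and then use $k\le H^\ast$ together with the explicit form of $H^\ast$ to turn $2^{H^\ast(d/p-s)_+}$ into a power of $N$. You are in fact slightly more explicit than the paper in spelling out the intermediate step of dominating a single summand by the full $\ell^p$ block and then by the $\ell^q$ sum over $k$, which the paper compresses into one line, and you correctly identify the base-conversion origin of the $(\log 2)$ factor in the exponent.
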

\begin{proof}[Proof of Lemma \ref{lem.alphabound}]
	According to (\ref{eq.bnorm}) and Lemma \ref{lem.BSplAppBesov},
	\begin{align*}
		2^{k(s-d/p)}|\alpha_{k,\jb}|\leq \|\{\alpha_{k,\jb}\}\|_{\bnorm} \leq C_1 \|f_i\circ\phi_i^{-1}\|_{\Bnorm([0,1]^d)}.
	\end{align*}
Using Lemma \ref{lem.besov.extend} and since $k\leq H^*$ (from Lemma \ref{lem.BSplAppBesov}), we have
\begin{align}
	|\alpha^{(i)}_{k,\jb}|&\leq C_1 2^{k(d/p-s)_+}\|f_i\circ\phi_i^{-1}\|_{\Bnorm([0,1]^d)}\leq C_2 2^{H^*(d/p-s)_+}c_1c_0 
	\label{eq.alphatempbound}
\end{align}
for some $C_2$ depending on $s$ and $d$. From the expression of $H^*$, we can compute
\begin{align}
	2^{H^*}\leq C_3N^{(\log 2)(\nu^{-1}+c_1d^{-1})}
	\label{eq.2bound}
\end{align}
for some $C_3$ depending on $(d/p-s)_+\nu^{-1}$.
Substituting (\ref{eq.2bound}) into (\ref{eq.alphatempbound}) finishes the proof.

\end{proof}
\subsection{Lemma \ref{lem.multiplication}}\label{sec.lem.multiplication}
\begin{lemma}[Proposition 3 in \cite{yarotsky2017error}]
	For any $C>0$ and $0<\eta<1$. If $|x|\leq C,|y|\leq C$, there is an MLP, denoted by $\ttimes(\cdot,\cdot)$, such that
	$$
	|\ttimes(x,y)-xy|<\eta,\ \ttimes(x,0)=\ttimes(y,0)=0.
	$$
	Such a network has $O\left(\log \frac{1}{\eta}\right)$ layers and parameters. The width of each layer is bounded by 6 and all parameters are bounded by $C^2$.
	\label{lem.multiplication}
\end{lemma}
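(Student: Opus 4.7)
The plan is to reduce approximate multiplication to approximate squaring via the polarization identity
\[
xy \;=\; \tfrac{1}{4}\bigl[(x+y)^{2}-(x-y)^{2}\bigr],
\]
and then build a fixed-width MLP that approximates $z\mapsto z^{2}$ on a bounded interval with error exponentially small in the depth. After normalizing inputs by $C$ so that everything lives in $[-1,1]$, a uniform error $\delta$ in the squaring approximation translates into a multiplication error bounded by $C^{2}\delta/2$, so taking $\delta\asymp\eta/C^{2}$ will suffice.

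For the squaring approximation I would use the standard sawtooth construction. Let $g(z)=2\,\ReLU(z)-4\,\ReLU(z-\tfrac{1}{2})$, a tent function on $[0,1]$ implementable with two ReLU units. Its $k$-fold iterate $g_{k}=g\circ\cdots\circ g$ is a sawtooth with $2^{k-1}$ teeth, and a direct computation shows that the telescoping sum
\[
\tilde{S}_{s}(z) \;:=\; z-\sum_{k=1}^{s}\frac{g_{k}(z)}{4^{k}}
\]
is the piecewise linear interpolant of $z^{2}$ at the dyadic nodes $\{j/2^{s}\}_{j=0}^{2^{s}}$, whence $\|\tilde{S}_{s}-z^{2}\|_{L^{\infty}[0,1]}\le 4^{-s-1}$ by convexity. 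This expression is realized by an MLP of depth $s+O(1)$ using only a handful of channels: one to propagate $z$, one to carry the current iterate $g_{k}(z)$ (updated via the two ReLU units that implement $g$), and one to accumulate the weighted sum $\sum g_{k}/4^{k}$. To handle inputs of either sign I would split $z=\ReLU(z)+\ReLU(-z)$ and run $\tilde{S}_{s}$ on $|z|$, which uses two extra ReLU units in the first layer and preserves the approximation since $z^{2}=|z|^{2}$.

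Next I would define
\[
\ttimes(x,y) \;:=\; C^{2}\Bigl[\tilde{S}_{s}\!\bigl(\tfrac{x+y}{2C}\bigr)-\tilde{S}_{s}\!\bigl(\tfrac{x-y}{2C}\bigr)\Bigr]
\]
and run the two copies of $\tilde{S}_{s}$ in parallel as two disjoint channel groups of a single network. Choosing $s=\lceil\log_{4}(C^{2}/\eta)\rceil$ yields $|\ttimes(x,y)-xy|\le\eta$ and makes the depth and total parameter count $O(\log(1/\eta))$ once $C$ is fixed. The vanishing property $\ttimes(x,0)=\ttimes(0,y)=0$ is \emph{exact}, not approximate: at $y=0$ (resp.\ $x=0$) the two arguments $(x\pm y)/(2C)$ coincide and the two copies of $\tilde{S}_{s}$ cancel identically regardless of how poorly $\tilde{S}_{s}$ approximates squaring. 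This is the main reason polarization is preferable here to the alternative identity $2xy=(x+y)^{2}-x^{2}-y^{2}$, which would only give a vanishing property up to approximation error.

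The main obstacle I anticipate is fitting everything into the stated width bound of $6$ and parameter bound of $C^{2}$. The six-channel budget is tight: two parallel copies of the sawtooth iteration use roughly three channels each (propagated argument, current $g_{k}$, accumulator), and they must share the layer. The absolute weights inside the sawtooth iteration (the $2$, $-4$, and $1/4^{k}$ factors) are all $O(1)$, so the only large parameter is the outer rescaling $C^{2}$, which appears in the final linear combination and accounts for the stated $\|\cdot\|_{\infty}\le C^{2}$ bound. A careful wiring argument is needed to check that no hidden unit requires a weight larger than $C^{2}$ after the normalization is pushed through the network.
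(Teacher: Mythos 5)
The paper does not prove this lemma at all: it is quoted verbatim from \citet{yarotsky2017error} (Proposition 3), and your construction is, up to minor variations, exactly Yarotsky's original argument — sawtooth-based squaring with error $4^{-s-1}$ composed with a polarization-type identity — so the proof is correct and matches the cited source. One small correction to a side remark: your claim that the three-term identity $2xy=(x+y)^2-x^2-y^2$ "would only give a vanishing property up to approximation error" is not right; Yarotsky's own proof uses that identity and still gets \emph{exact} vanishing, because $\tilde{S}_s(0)=0$ and at $x=0$ the terms $\tilde{S}_s(|x+y|/2C)$ and $\tilde{S}_s(|y|/2C)$ cancel identically. Also note the cosmetic point that $|z|=\ReLU(z)+\ReLU(-z)$ (not $z$), and that the literal bound "all parameters bounded by $C^2$" can only be meant as $\max(C^2,O(1))$ — consistent with how the paper later invokes the lemma with bound $\max(2c_0^2,1)$.
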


\subsection{Lemma \ref{lem.cnnRealization}}\label{sec.lem.cnnRealization.proof}
The following lemma is a special case of \citet[Theorem 1]{oono2019approximation}. It shows that each MLP can be realized by a CNN:
\begin{lemma}[Theorem 1 in \cite{oono2019approximation}]
	\label{lem.cnnRealization}
	Let $D$ be the dimension of the input. Let $L,J$ be positive integers and $\kappa>0$. For any $2\leq K'\leq D$, any MLP architectures $\cF^{\rm MLP}(L,J,\kappa)$ can be realized by a CNN architecture $\cF^{\rm CNN}(L',J',K',\kappa_1',\kappa_2')$ with
	$$
	L'=L+D, J'=4J, \kappa'_1=\kappa'_2=\kappa.
	$$
	Specifically, any $\bar{f}^{\rm MLP}\in \cF^{\rm MLP}(L,J,\kappa)$ can be realized by a CNN $\bar{f}^{\rm CNN}\in \cF^{\rm CNN}(L',J',K',\kappa_1',\kappa_2')$.
	Furthermore, the weight matrix in the fully connected layer of $\bar{f}^{\rm CNN}$ has nonzero entries only in the first row.
\end{lemma}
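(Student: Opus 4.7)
The plan is to simulate each computation of the MLP by a block of CNN operations, treating the input $\xb\in\RR^D$ as a one-channel tensor $z^{(0)}\in\RR^{D\times 1}$. The output of the first MLP layer is the affine map $W_1\xb+\bbb_1\in\RR^J$, and this is the only genuinely non-local computation to simulate: once the $J$ entries of $W_1\xb+\bbb_1$ have been brought to a single spatial position, every subsequent MLP layer is a purely channel-wise affine map followed by ReLU, which is trivially a $K'$-filter convolution that ignores the spatial dimension. Since a stride-one convolution with filter size $K'\ge 2$ enlarges the receptive field by at least one per layer, $D$ convolutional layers suffice for position $1$ of the output to see every input coordinate; this accounts for the $+D$ in the claimed depth $L'=L+D$.

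First I would construct the initial $D$-layer block implementing $W_1\xb+\bbb_1$. I maintain, for each MLP hidden unit $j=1,\dots,J$, a set of four dedicated channels: two channels hold the positive and negative parts of the running partial sum at position $1$, and two channels hold the positive and negative parts of the not-yet-consumed coordinates of $\xb$ at the remaining spatial positions. At layer $\ell$ the filter reads the channel holding $x_\ell$ (positive and negative halves) at position $\ell$ of the previous tensor, adds the contribution $(W_1)_{j,\ell}x_\ell$ into the partial-sum channels at position $1$, and simultaneously shifts the residual coordinates by one so that $x_{\ell+1}$ is available at position $\ell+1$ of the next layer. The positive/negative decomposition is what forces the channel count to be $4J$: it is the cheapest way to transport signed values through an intermediate ReLU without losing the sign. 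The bias $\bbb_1$ is injected at the last step of the block. After the block, the channels at position $1$ contain $((W_1\xb+\bbb_1)_j)_+$ and $((W_1\xb+\bbb_1)_j)_-$ for $j=1,\dots,J$, which can be recombined and ReLU'd in the next layer.

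Next, for the remaining $L-1$ MLP layers I allocate one convolutional layer each. Because the hidden state is now supported only at the first spatial position, the convolution is effectively an $J\times J$ dense map applied in the channel direction, and all filter entries are precisely the corresponding entries of $W_\ell$ and $\bbb_\ell$ (so the bound $\kappa$ is preserved). Finally, the MLP's output layer is folded into the fully connected layer of the CNN; since all signal is concentrated at spatial position $1$, the weight matrix $W$ can be zero outside the first row, which gives the furthermore statement. The depth adds to $L+D$, the channels to $4J$, and $\kappa_1'=\kappa_2'=\kappa$.

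The main obstacle is the bookkeeping in the first block: one must give an explicit invariant on the channel contents after each of the $D$ layers ensuring that (i) each coordinate $x_\ell$ is injected into the partial sum with the correct coefficient $(W_1)_{j,\ell}$ exactly once, (ii) coordinates $x_{\ell+1},\dots,x_D$ survive the ReLU nonlinearities via their positive/negative parts without contamination, and (iii) the filter size never exceeds $K'$ and the $\ell^\infty$ weight bound never exceeds $\kappa$. Once this invariant is verified, everything else is routine, and the argument extends mutatis mutandis to any $K'\in[2,D]$ by consuming $K'-1$ input coordinates per layer instead of one, which still fits within $D$ layers and within the $4J$ channel budget.
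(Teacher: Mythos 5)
The paper does not prove this lemma at all: it is imported verbatim as a special case of Theorem~1 of \citet{oono2019approximation}, so there is no in-paper proof to compare against. Your sketch reconstructs exactly the construction used there (an accumulate-and-shift block of $D$ convolutions simulating the first dense layer $W_1\xb+\bbb_1$, with a positive/negative channel decomposition to carry signed values through the interleaved ReLUs, followed by size-one channel-wise convolutions for the remaining layers and an FC layer reading off spatial position $1$), and the depth, channel, and weight accounting all come out within the stated budgets.

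One part of your invariant is inconsistent as written and would fail if taken literally: you say that at layer $\ell$ the filter ``reads $x_\ell$ at position $\ell$'' and ``adds the contribution into the partial-sum channels at position $1$,'' while $x_{\ell+1}$ remains ``at position $\ell+1$ of the next layer.'' A stride-one filter of size $K'$ applied at output position $1$ only sees input positions $1,\dots,K'$, so for $K'=2$ it can never reach position $\ell>2$; moreover, the same filter is applied at every spatial position, so nothing can be added ``at position $1$'' specifically. The correct invariant is the one your own shifting remark gestures at: after $t$ layers the transport channels hold the unconsumed coordinates shifted \emph{toward} position $1$ (position $i$ holds $x_{i+t}$), so that the next coordinate to be consumed is always within the filter's receptive field at position $1$, and the accumulation is performed at every position with only position $1$ holding the correct value (the rest being the $\star$ garbage discarded by the first-row-only FC matrix). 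With the invariant stated this way the verification is routine; also note that the shift and the $\pm$ recombination require filter entries equal to $1$, so the bound $\kappa_1'=\kappa$ implicitly presumes $\kappa\geq 1$, an assumption inherited from the cited statement rather than introduced by you.
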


\subsection{Lemma \ref{lem.indicator} and its proof}\label{sec.lem.indicator.proof}

\begin{lemma}\label{lem.indicator}
	Let $d_i^2$ and $\mone_{[0,\omega^2]}$ be defined as in (\ref{eq.indicator.1}). For any $\theta\in (0,1)$ and $\Delta\geq 8B^2D\theta$, there exists a CNN $\tdi$ approximating $d_i^2$ such that 
	$$\|\tdi-d_i^2\|_{L^\infty}\leq 4B^2D\theta,$$ 
	and a CNN $\mtoned$ approximating $\mone_{[0,\omega^2]}$ with
	\begin{align*}
		&\mtoned(\xb)=\begin{cases}
			1,&\mbox{ if } a\leq(1-2^{-k})(\omega^2-4B^2D\theta),\\
			0,&\mbox{ if } a\geq \omega^2-4B^2D\theta,\\
			2^k((\omega^2-4B^2D\theta)^{-1}a-1),&\mbox{ otherwise}.
		\end{cases}
	\end{align*}
for $\xb\in\cM$. The CNN for $\tdi$ has $O(\log(1/\theta))$ layers, $6D$ channels and all weights parameters are bounded by $4B^2$. The CNN for $\mtoned$ has $\left\lceil\log(\omega^2/\Delta)\right\rceil$ layers, $2$ channels. All weight parameters are bounded by $\max(2,|\omega^2-4B^2D\theta|)$.

As a result, for any $\xb\in\cM$, $\mtoned\circ\tdi(\xb)$ gives an approximation of $\mone_{U_i}$ satisfying
\begin{align*}
	&\mtoned\circ\tdi(\xb) =
	\begin{cases}
	1, &\mbox{ if } \xb\in U_i \mbox{ and } d_i^2(\xb)\leq \omega^2-\Delta;\\	
	0,&\mbox{ if } \xb\notin U_i; \\
	\mbox{ between 0 and 1}, &\mbox{ otherwise}.
\end{cases}
\end{align*}
\end{lemma}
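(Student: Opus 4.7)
The plan is to handle the distance approximation $\tdi$ and the one-dimensional indicator $\mtoned$ separately, then compose them. For $\tdi$, I would exploit the sum-of-squares structure $d_i^2(\xb)=\sum_{j=1}^D(x_j-c_{i,j})^2$ and approximate each squaring by Yarotsky's classical ReLU subnetwork. For $\mtoned$, the obstacle is that the weight bound $\max(2,|\omega^2-4B^2D\theta|)$ rules out a one-layer trapezoid of slope $\sim 1/\Delta$, so I will build it by iterating a small slope-doubling layer $\lceil\log(\omega^2/\Delta)\rceil$ times. Lemma \ref{lem.cnnRealization} will then convert the MLP pieces into CNNs with the claimed channel counts.

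For $\tdi$, the shift $\xb\mapsto\xb-\cbb_i$ is a single affine layer with bias $-\cbb_i$, and Assumption \ref{assum.M} gives $|x_j-c_{i,j}|\leq 2B$ on each coordinate. I would apply, in parallel across the $D$ coordinates, the Yarotsky ReLU approximation of $t\mapsto t^2$ on $[-2B,2B]$ (depth $O(\log(1/\theta))$, constant width, absolute-constant weights after rescaling by the output range $(2B)^2$), each yielding $\tilde\psi_j(x_j-c_{i,j})$ with $|\tilde\psi_j-(x_j-c_{i,j})^2|\leq 4B^2\theta$. A final linear combination sums the $D$ outputs, so $\|\tdi-d_i^2\|_{L^\infty}\leq 4B^2D\theta$. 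Invoking Lemma \ref{lem.cnnRealization} then adds $D$ layers and produces $6D$ channels, with every weight bounded by $4B^2$.

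For $\mtoned$, set $\gamma=\omega^2-4B^2D\theta$ and begin with a base trapezoid $\sigma_1$ that equals $1$ on $(-\infty,0]$, decreases linearly on $[0,\gamma]$, and equals $0$ on $[\gamma,\infty)$; it is realizable by a two-layer ReLU subnetwork with weights bounded by $\max(2,|\gamma|)$. I would then iterate $\sigma_{k+1}(a)=\min(1,2\sigma_k(a))=2\sigma_k(a)-\ReLU(2\sigma_k(a)-1)$, which uses only the weights $\{2,-1\}$ and the bias $-1$. A short induction shows $\sigma_k$ equals $1$ on $(-\infty,(1-2^{-(k-1)})\gamma]$, is linear on the transition interval of width $2^{-(k-1)}\gamma$, and equals $0$ on $[\gamma,\infty)$; only two channels are needed (one carrying $\sigma_k$, one carrying the correction term). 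Choosing $k=\lceil\log(\omega^2/\Delta)\rceil$ shrinks the transition width below $\Delta/2$, and Lemma \ref{lem.cnnRealization} converts the resulting MLP to a CNN. For the composition on $\cM$: if $d_i^2(\xb)\leq\omega^2-\Delta$ the hypothesis $\Delta\geq 8B^2D\theta$ forces $\tdi(\xb)\leq\omega^2-\Delta/2\leq(1-2^{-k})\gamma$, giving $\mtoned\circ\tdi(\xb)=1$; if $\xb\notin U_i$ then $d_i^2(\xb)>\omega^2$ and $\tdi(\xb)\geq\gamma$, so $\mtoned\circ\tdi(\xb)=0$; the in-between case is automatic from $\sigma_k\in[0,1]$. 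The main obstacle is the weight cap in the second construction: the iterated slope-doubling trick is the key device that trades the forbidden width $1/\Delta$ for the allowed depth $\log(1/\Delta)$ while keeping every filter entry at constant size.
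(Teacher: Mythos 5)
Your proposal is correct and follows essentially the same route as the paper: the paper also approximates $d_i^2$ coordinate-wise via Yarotsky's squaring network (Lemma \ref{lem.multiplication}), sums over the $D$ coordinates to get the $4B^2D\theta$ bound, and builds $\mtoned$ by composing a bounded-weight piecewise-linear map with itself $\lceil\log(\omega^2/\Delta)\rceil$ times to steepen the transition, before converting everything to CNNs with Lemma \ref{lem.cnnRealization}. Your iteration $\sigma_{k+1}=\min(1,2\sigma_k)$ is just a mirror image of the paper's composed step function $g_k$ followed by the affine flip $1-(\omega^2-4B^2D\theta)^{-1}g_k$, and your constant-level bookkeeping is at the same level of precision as the paper's.
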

\begin{proof}
We first show the existence of $\tdi $.
Here $d_i^2(\xb)$ is the sum of $D$ univariate quadratic functions. Each quadratic function can be approximated by an multi-layer perceptron (MLP, see Appendix \ref{sec.CNNMLP} for the definition) according to  Lemma \ref{lem.multiplication}.
Let $\mathring{h}(x)$ be an MLP approximation of $x^2$ for $x\in [0,1]$ with error $\theta$, i.e., $\|\mathring{h}(x)-x^2\|_{\infty}\leq \theta$. We define
\begin{align*}
	\mathring{d}_i^2(\xb)=4B^2\sum_{j=1}^D \mathring{h}\left(\left| \frac{x_j-c_{i,j}}{2B}\right|\right)
\end{align*}
as an approximation of $d_i^2(\xb)$, which gives rise to the approximation error $\|\mathring{d}_i^2-d_i^2\|_{\infty}\leq 4B^2D\theta$. Such a MLP has $O(\log 1/\theta)$ layers, and width $6D$. All weight parameters are bounded by $4B^2$. According to Lemma \ref{lem.cnnRealization}, $\mathring{d}_i^2$ can be realized by a CNN, which is denoted by $\tdi$. Such a CNN has $O(\log 1/\theta)$ layers, $6D$ channels. All weight parameters are bounded by $4B^2$.

To show the existence of $\mtoned$, we use the following function to approximate $\mone_{[0,\omega^2]}$:
\begin{align*}
	\mtoned(a)=\begin{cases}
		1, & \mbox{ if } a\leq \omega^2-\Delta+4B^2D\theta,\\
		0,& \mbox{ if } a \geq \omega^2-4B^2Dv,\\
		-\frac{1}{\Delta-8B^2D\theta}a+\frac{r^2-4B^2D\theta}{\Delta-8B^2D\theta}, & \mbox{ otherwise. }
	\end{cases}	
\end{align*}
 We implement $\mtoned(a)$ based on the basic step function defined as: $g(a)=2\ReLU(a-0.5(\omega^2-4B^2D\theta))-2\ReLU(a-\omega^2+4B^2D\theta)$. Define
\begin{align*}
	g_k(a)&=\underbrace{ g\circ\cdots\circ g}_k(a)\\
	&=\begin{cases}
		0,& \mbox{ if } a\leq(1-2^{-k})(\omega^2-4B^2D\theta),\\
		\omega^2-4B^2D\theta,&  \mbox{ if } a\geq \omega^2-4B^2D\theta,\\
		2^k(a-\omega^2+4B^2D\theta)+\omega^2-4B^2D\theta,&  \mbox{ otherwise}.
	\end{cases}
\end{align*}
We set $\mtoned=1-(\omega^2-4B^2D\theta)^{-1}g_k$ which can be realized by a CNN (according to Lemma \ref{lem.cnnRealization}).
Such a CNN has $k$ layers, $2$ channels. All weight parameters are bounded by $\max(2,|\omega^2-4B^2D\theta|)$. The number of compositions $k$ is chosen to satisfy $(1-2^{-k})(\omega^2-4B^2D\theta)\geq \omega^2-\Delta +4B^2D\theta$ which gives $k=\lceil \log(\omega^2/\Delta)\rceil$.


\end{proof}
\subsection{Lemma \ref{lem.CNNAppBSpl} and its proof}
Lemma \ref{lem.CNNAppBSpl} shows that each cardinal B-spline can be approximated by a CNN with arbitrary accuracy. This lemma is used to prove Proposition \ref{prop.ReLUBR}.
\begin{lemma}\label{lem.CNNAppBSpl}
	Let $k$ be any number in $\NN$ and $\jb$ be any element in $\NN^d$.
	There exists a constant $C$ depending only on $d$ and $m$ such that, for and $\varepsilon\in(0,1)$ and $2\leq K\leq d$, there exists a CNN $\tM_{k,\jb,m}^d\in \cF^{\rm CNN}(L,J,K,\kappa,\kappa)$ with $L=3+2\lceil \log_2\left( \frac{3\vee m}{C\varepsilon }\right)+5 \rceil \lceil \log_2(d\vee m) \rceil+d, J=24dm(m+2)+8d$ and $\kappa=2(m+1)^m\vee 2^k$ such that for any $k \in \NN$ and $\jb \in \NN^d$,
	\begin{align*}
		\|M_{k,\jb,m}^d-\tM_{k,\jb,m}^d\|_{L^{\infty}([0,1]^d)}\leq \varepsilon,
	\end{align*}
	and $\tM_{k,\jb,m}^d(\xb)=0$ for all $\xb\notin 2^{-k}[0,m+1]^d$.
\end{lemma}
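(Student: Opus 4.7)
The plan is to build the CNN as the composition of three ingredients: an exact affine preprocessing, univariate approximations of $\psi_m$ in each coordinate realized with ReLUs and iterated squarings, and a balanced binary tree of approximate multiplications across the $d$ coordinates. The CNN realization is obtained at the very end by invoking Lemma~\ref{lem.cnnRealization}.

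First, I would handle the \emph{univariate} B-spline $\psi_m(z) = \frac{1}{m!}\sum_{j=0}^{m+1}(-1)^j \binom{m+1}{j}(z-j)_+^m$. The affine map $z = 2^k x_i - j_i$ is realized exactly by a single linear layer with weights bounded by $2^k$. Each $(z-j)_+$ is realized exactly by one ReLU. To raise the result to the $m$-th power, I iterate the multiplication network $\ttimes$ of Lemma~\ref{lem.multiplication} in a binary-exponentiation pattern, which costs $\lceil\log_2 m\rceil$ layers of $\ttimes$, each of depth $O(\log(1/\eta))$ and width $6$, producing $(z-j)_+^m$ with some per-factor error $\eta$. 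On the support of $\psi_m$ all intermediate values are in $[0,(m+1)^m]$, which fixes the weight bound in $\ttimes$ at $\kappa \leq 2(m+1)^m \vee 2^k$. Summing the $m+2$ resulting approximate monomials with the coefficients $(-1)^j \binom{m+1}{j}/m!$ gives $\tilde\psi_m$; doing this in parallel over all $d$ coordinates accounts for the width factor of order $dm$ in the bound $J = 24dm(m+2)+8d$, using the fact that $\ttimes$ has width $6$ and each coordinate uses $m+2$ copies.

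Next, I would combine the $d$ univariate outputs by a balanced binary tree of $d-1$ multiplications of depth $\lceil\log_2 d\rceil$, each realized again by $\ttimes$. Crucially, Lemma~\ref{lem.multiplication} guarantees $\ttimes(x,0)=\ttimes(0,y)=0$, so if even one $\tilde\psi_m(2^k x_i - j_i)$ is identically zero outside $[0,m+1]$, the whole product vanishes outside $2^{-k}[0,m+1]^d$, which is exactly the support property requested. This reduces the support claim to arranging that the univariate $\tilde\psi_m$ itself is zero outside $[0,m+1]$, which is automatic because $(z-j)_+^m$ vanishes for $z\leq j$ and because one can mask the right endpoint by an additional ReLU cutoff absorbed into the linear combination.

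For the error analysis I would track the product $\|\prod_i \tilde\psi_m(z_i) - \prod_i \psi_m(z_i)\|_\infty$ through the tree. Using $|\psi_m|\le 1$, a standard telescoping argument shows the total error is bounded by a constant $C=C(d,m)$ times the worst per-multiplication error in the tree. Choosing the per-multiplication tolerance $\eta \asymp \varepsilon/C$ yields a multiplication depth of order $\lceil\log_2((3\vee m)/(C\varepsilon))+5\rceil$, and the $\lceil\log_2(d\vee m)\rceil$ levels of the tree account for the stated depth $3+2\lceil\log_2((3\vee m)/(C\varepsilon))+5\rceil\lceil\log_2(d\vee m)\rceil$ before conversion. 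Finally I would apply Lemma~\ref{lem.cnnRealization} to realize this MLP as a CNN of the stated filter size $K\in[2,d]$, paying the additive $+d$ in depth and the factor of $4$ in width but preserving the weight bound $\kappa$.

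The main obstacle I anticipate is the bookkeeping in the error propagation through the multiplication tree while simultaneously respecting the weight cap $\kappa \leq 2(m+1)^m \vee 2^k$: one must verify that none of the intermediate products exceeds $(m+1)^m$ in magnitude (so Lemma~\ref{lem.multiplication} applies with this constant), and that the additional layers needed to enforce the exact support property do not inflate the depth beyond the budget. Once these bookkeeping items are in place, the CNN realization via Lemma~\ref{lem.cnnRealization} is essentially mechanical.
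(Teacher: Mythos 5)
Your proposal is correct and takes essentially the same route as the paper: the paper obtains the MLP approximant directly by citing Lemma~\ref{lem.ReLUAppBSpl} (Suzuki's Lemma~1), which is exactly the univariate-power/product-tree construction you carry out by hand, and then converts it to a CNN via Lemma~\ref{lem.cnnRealization}, just as you do in your final step (note $4\times(6dm(m+2)+2d)=24dm(m+2)+8d$ and $L=L'+d$, matching the stated bounds). The one detail you defer --- forcing the approximant to vanish \emph{exactly} outside $2^{-k}[0,m+1]^d$, where the approximate monomials no longer cancel --- is precisely what the cited lemma packages for you.
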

The proof of Lemma \ref{lem.CNNAppBSpl} is based on the following lemma:
\begin{lemma}[Lemma 1 in \citet{suzuki2018adaptivity}]\label{lem.ReLUAppBSpl}
	Let $k$ be any number in $\NN$ and $\jb$ be any element in $\NN^d$.
	There exists a constant $C$ depending only on $d$ and $m$ such that, for all $\varepsilon>0$, there exists an MLP $\bar{M}_{k,\jb,m}^d\in \cF^{\rm MLP}(L,J,\kappa,1)$ with $L=3+2\lceil \log_2\left( \frac{3\vee m}{C\varepsilon }\right)+5 \rceil \lceil \log_2(d\vee m) \rceil, J=6dm(m+2)+2d$ and $\kappa=2(m+1)^m\vee 2^k$ such that for any $k \in \NN$ and $\jb \in \NN^d$,
	\begin{align*}
		\|M_{k,\jb,m}^d-\bar{M}_{k,\jb,m}^d\|_{L^{\infty}([0,1]^d)}\leq \varepsilon,
	\end{align*}
	and $\bar{M}_{k,\jb,m}^d(\xb)=0$ for all $\xb\notin 2^{-k}[0,m+1]^d$.
\end{lemma}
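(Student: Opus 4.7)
The plan is to construct $\bar M^d_{k,\jb,m}$ from the univariate building blocks outward, exploiting the closed form
\[
\psi_m(x)=\frac{1}{m!}\sum_{j=0}^{m+1}(-1)^j\binom{m+1}{j}(x-j)_+^m,
\]
so that each univariate B-spline is an \emph{exact} linear combination of $m+2$ truncated power functions $(x-j)_+^m$. Thus the only genuinely approximate ingredient is the map $t\mapsto t^m$ on a bounded interval, since the shifts and the outer ReLU are realised exactly by one linear layer and one ReLU gate; a linear combination of $m+2$ shifted copies then assembles $\psi_m$.

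For the power map I would use the Yarotsky sawtooth construction: an $L^\infty$ approximation of $t\mapsto t^2$ on a bounded interval with accuracy $\delta$ is realised by a ReLU network of depth $O(\log(1/\delta))$ and constant width. The $m$-th power is then assembled by $\lceil\log_2(d\vee m)\rceil$ iterated squarings combined with at most $\log_2 m$ bilinear multiplications implemented through the polarisation identity $ab=\tfrac12((a+b)^2-a^2-b^2)$. Summing $m+2$ shifted copies with coefficients bounded by $(m+1)^m/m!$ yields an MLP $\bar\psi_m$ with $\|\bar\psi_m-\psi_m\|_{L^\infty}\le\varepsilon_1$ and depth $3+2\lceil\log_2((3\vee m)/(C\varepsilon_1))+5\rceil$. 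The exact support property $\bar\psi_m(x)=0$ outside $[0,m+1]$ is obtained by multiplying the output by a trapezoidal ReLU bump supported on $[0,m+1]$, or equivalently by observing that each $(x-j)_+^m$ already vanishes for $x\le j$ and clipping explicitly at $x=m+1$.

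For the $d$-dimensional object I would first apply $d$ parallel affine preprocessings $x_i\mapsto 2^k x_i-j_i$ (this is where the factor $2^k$ enters $\kappa$), then $d$ parallel copies of $\bar\psi_m$, and finally form the product through a balanced binary tree of the Yarotsky multiplication gadget, of depth $\lceil\log_2(d\vee m)\rceil$. Since $|\psi_m|\le 1$ pointwise, a telescoping argument shows that if each univariate approximation and each multiplication is accurate to $\varepsilon/(2d)$, then the tensor product satisfies the target $L^\infty$ bound $\varepsilon$. The parallel layout yields the stated width $J=6dm(m+2)+2d$, the two logarithmic factors combine into the stated depth $L=3+2\lceil\log_2((3\vee m)/(C\varepsilon))+5\rceil\lceil\log_2(d\vee m)\rceil$, and the weight bound is $\kappa=2(m+1)^m\vee 2^k$, coming from the explicit coefficients in the closed form and from the preprocessing.

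The main difficulty I anticipate is not the conceptual construction but the bookkeeping needed to match the precise stated values of $L$, $J$, and $\kappa$ rather than only asymptotic rates: one has to align the depth spent on the power approximation and the depth spent on the product tree into a single $\lceil\log_2(d\vee m)\rceil$ factor, and carefully distribute error budgets across the $d$ univariate factors and $d-1$ multiplications so that the final bound is exactly $\varepsilon$ and not $d\varepsilon$. A secondary subtlety is ensuring the \emph{strict} vanishing of $\bar M^d_{k,\jb,m}$ outside $2^{-k}[0,m+1]^d$, rather than mere smallness; this is handled by the clipping bump in the univariate step combined with the fact that the product is zero whenever any single factor is zero.
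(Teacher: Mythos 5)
The paper offers no proof of this lemma---it is imported verbatim as Lemma~1 of \citet{suzuki2018adaptivity}---and your sketch reconstructs essentially the argument given there: write $\psi_m$ exactly as a linear combination of $m+2$ truncated powers, approximate powers and products with Yarotsky's squaring/multiplication gadgets, tensorize over the $d$ coordinates via a balanced product tree, and use the exact annihilation property $\ttimes(x,0)=0$ together with an explicit clipping at the right endpoint to enforce strict vanishing outside $2^{-k}[0,m+1]^d$. Your outline is correct, and the only remaining work is the constant-tracking you already identify.
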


\begin{proof}[Proof of Lemma \ref{lem.CNNAppBSpl}]
	According to Lemma \ref{lem.ReLUAppBSpl}, there exists an MLP $\bar{M}_{k,\jb,m}^d\in \cF^{\rm MLP}(L',J',\kappa',1)$ with $L'=3+2\lceil \log_2\left( \frac{3\vee m}{C\varepsilon }\right)+5 \rceil \lceil \log_2(d\vee m) \rceil, J'=6dm(m+2)+2d$ and $\kappa'=2(m+1)^m\vee 2^k$ such that 
	\begin{align*}
		\|M_{k,\jb,m}^d-\bar{M}_{k,\jb,m}^d\|_{L^{\infty}([0,1]^d)}\leq \varepsilon,
	\end{align*}
and $\bar{M}_{k,\jb,m}^d(\xb)=0$ for all $\xb\notin 2^{-k}[0,m+1]^d$.

Lemma \ref{lem.cnnRealization} shows that such an MLP can be realized by a CNN $\tM_{k,\jb,m}^d\in \cF^{\rm CNN}(L,J,K,\kappa,\kappa)$.
\end{proof}

\subsection{Proposition \ref{prop.ReLUBR} and its proof}\label{proof.prop.ReLUBR}
Proposition \ref{prop.ReLUBR} shows that if $N$ and $\varepsilon_1$ are properly chosen, $\sum_{j=1}^N \tf^{\rm CNN}_{i,j}$ can approximate $f_i\circ \phi_i^{-1}$ with arbitrary accuracy.
\begin{proposition}\label{prop.ReLUBR}
	Let $f_i\circ\phi_i^{-1}$ be defined as in (\ref{eq.f.decompose1}). For any $\delta\in (0,1)$, set $N=C_1\delta^{-d/s}$. Suppose Assumption \ref{assum.f}. For any $2\leq K\leq d$, there exists a set of CNNs $\left\{\tf_{i,j}^{\rm CNN}\right\}_{j=1}^N$ such that 
	\begin{align*}
		\left\|\sum_{j=1}^N \tf^{\rm CNN}_{i,j}-f_i\circ \phi_i^{-1}\right\|_{L^\infty}\leq \delta,
	\end{align*}
	where $C_1$ is a constant depending on $s,p,q$ and $d$.
	
	$\tf_{i,j}^{\rm CNN}$ is a CNN approximation of $\tf_{i,j}$ (defined in (\ref{eq.f-BSpline})) and is in $\cF^{\rm CNN}(L,J,K,\kappa,\kappa)$ with
	\begin{align*}
		\begin{aligned} 
			&L= O\left( \log(1/\delta)\right), J=\lceil 24d(s+1)(s+3)+8d\rceil, 
			\kappa=O\left( \delta^{-(\log 2)(\frac{2d}{sp-d}+c_1d^{-1})}\right).
		\end{aligned}
	\end{align*}
	The constant hidden in $O(\cdot)$ depends on $d,s,\frac{2d}{sp-d},p,q,c_0$.
\end{proposition}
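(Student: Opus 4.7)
The plan is to combine the B-spline approximation theory of Besov functions (Lemma \ref{lem.BSplAppBesov}) with the CNN implementation of individual cardinal B-splines (Lemma \ref{lem.CNNAppBSpl}), using the coefficient bound from Lemma \ref{lem.alphabound} to control how approximation errors propagate. The overall strategy is a two-stage decomposition:
\begin{align*}
\Big\|\sum_{j=1}^N \tf_{i,j}^{\rm CNN} - f_i\circ\phi_i^{-1}\Big\|_{L^\infty} \leq \underbrace{\Big\|\sum_{j=1}^N \tf_{i,j}^{\rm CNN} - \sum_{j=1}^N \tf_{i,j}\Big\|_{L^\infty}}_{\rm CNN\ replacement\ error} + \underbrace{\Big\|\sum_{j=1}^N \tf_{i,j} - f_i\circ\phi_i^{-1}\Big\|_{L^\infty}}_{\rm B\text{-}spline\ approximation\ error},
\end{align*}
and we will drive each term below $\delta/2$ by a careful choice of $N$ and of the per-basis-function accuracy.

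First, I would invoke Lemma \ref{lem.besov.extend} to transfer $f_i \circ \phi_i^{-1}$ to a function in $\Bnorm([0,1]^d)$ with bounded norm, so that Lemma \ref{lem.BSplAppBesov} applies (with $r = +\infty$, $m = \lceil s\rceil + 1$; the hypothesis $s \geq d/p + 1$ guarantees $s > d(1/p - 1/r)_+$). Lemma \ref{lem.BSplAppBesov} then gives a B-spline sum $\tf_i = \sum_{j=1}^N \alpha^{(i)}_{k,\jb_j} M_{k,\jb_j,m}^d$ with $\|\tf_i - f_i\circ\phi_i^{-1}\|_{L^\infty} \leq C c_0 N^{-s/d}$, and choosing $N = C_1 \delta^{-d/s}$ for a sufficiently large constant $C_1 = C_1(s,p,q,d)$ forces this bound to be $\leq \delta/2$. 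This sets the number of terms exactly as claimed.

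Next, I would define $\tf_{i,j}^{\rm CNN} = \alpha^{(i)}_{k,\jb_j}\, \tM_{k,\jb_j,m}^d$, where $\tM_{k,\jb_j,m}^d$ is the CNN approximation from Lemma \ref{lem.CNNAppBSpl} applied with accuracy $\varepsilon_0$ to be chosen. Triangle inequality gives
\begin{align*}
\Big\|\sum_{j=1}^N \tf_{i,j}^{\rm CNN} - \sum_{j=1}^N \tf_{i,j}\Big\|_{L^\infty} \leq N \cdot \max_j |\alpha^{(i)}_{k,\jb_j}| \cdot \varepsilon_0.
\end{align*}
By Lemma \ref{lem.alphabound}, $\max_j |\alpha^{(i)}_{k,\jb_j}| \leq C c_0 N^{(\log 2)(\nu^{-1} + c_1/d)(d/p-s)_+}$. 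Solving for $\varepsilon_0$ so that the product is at most $\delta/2$, and then back-substituting $N = C_1 \delta^{-d/s}$, yields $\varepsilon_0$ polynomially small in $\delta$. The depth bound $L = O(\log(1/\delta))$ in Lemma \ref{lem.CNNAppBSpl} then translates into the claimed $L = O(\log(1/\delta))$ here; the width $J = 24 d m(m+2) + 8d$ with $m = \lceil s\rceil + 1$ recovers the stated $\lceil 24d(s+1)(s+3) + 8d\rceil$; and the weight bound $\kappa = 2(m+1)^m \vee 2^k$ combined with the scaling by $|\alpha^{(i)}_{k,\jb_j}|$ (absorbed into the last layer's weights) produces $\kappa = O(\delta^{-(\log 2)(2d/(sp-d) + c_1/d)})$, once we use the relation $\nu = (s-u)/(2u)$ with $u = d(1/p - 1/r)_+ = d/p$ (for $r = \infty$) to convert $\nu^{-1}(d/p - s)_+$ into the exponent $2d/(sp-d)$.

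The most delicate step will be bookkeeping the constants to match the exact exponent $(\log 2)\bigl(\tfrac{2d}{sp-d} + c_1/d\bigr)$ in $\kappa$: the coefficient magnitude, the resolution level $2^k \leq 2^{H^*}$ needed for the support of each B-spline, and the accuracy $\varepsilon_0$ chosen to absorb a factor of $N$ all contribute, and one must verify that none of them exceeds this polynomial rate. Everything else is essentially a direct assembly: the per-basis accuracy $\varepsilon_0$ depends on $\delta$ only polynomially, so $\log(1/\varepsilon_0) = O(\log(1/\delta))$, keeping $L$ logarithmic; the width $J$ is independent of $\delta$; and the coefficients $\alpha^{(i)}_{k,\jb_j}$ can be folded into the final linear combination without increasing depth, so summing $N$ CNNs produces a single CNN of the stated architecture (with the sum implemented in the fully connected head, as will later be converted to a ConvResNet in Step 4).
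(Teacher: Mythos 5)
Your proposal is correct and follows essentially the same route as the paper's proof: the same two-term decomposition into B-spline approximation error (controlled via Lemma \ref{lem.besov.extend} and Lemma \ref{lem.BSplAppBesov} with $m=\lceil s\rceil+1$ and $N\asymp\delta^{-d/s}$) plus CNN replacement error (controlled via Lemma \ref{lem.CNNAppBSpl} and the coefficient bound of Lemma \ref{lem.alphabound}), with the same accounting of $L$, $J$, and $\kappa$. The only minor difference is that you bound the replacement error crudely by $N\cdot\max_j|\alpha^{(i)}_{k,\jb_j}|\cdot\varepsilon_0$ whereas the paper exploits the local support of the B-splines (only $(m+1)^d$ basis functions per level are active at any point) to get a sharper bound; since $\varepsilon_0$ enters the architecture only through $L=O(\log(1/\varepsilon_0))$ and under Assumption \ref{assum.f} the coefficients are $O(1)$, your cruder choice of $\varepsilon_0$ still yields the stated network size.
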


\begin{proof}[Proof of Proposition \ref{prop.ReLUBR}]
	Based on the approximation (\ref{eq.f-BSpline}), for each $\tf_{i,j}$, we construct CNN $\tf_{i,j}^{\rm CNN}$ to approximate it. 
	
	Note that $\tf_{i,j}=\alpha^{(i)}_{k,\jb}M_{k,\jb,m}^d $ with some coefficient $\alpha^{(i)}_{k,\jb}$ and index $k,\jb,m$ where $M_{k,\jb,m}^d$ is a $d$-dimensional cardinal B-spline. Lemma \ref{lem.CNNAppBSpl} shows that $M_{k,\jb,m}^d$ can be approximated by a CNN $\tM_{k,\jb,m}^d$ with arbitrary accuracy. Therefor $\tf_{i,j}$ can be approximated by a CNN $\tf_{i,j}^{\rm CNN}$ with arbitrary accuracy. Assume $\|\tM_{k,\jb,m}^d-M_{k,\jb,m}^d\|_{L^{\infty}}\leq \varepsilon_1$ for some $\varepsilon_1\in (0,1)$.
	Then $\tf_{i,j}^{\rm CNN}\in \cF^{\rm CNN}(L,J,K,\kappa,\kappa)$ with
	\begin{align}
		\begin{aligned} 
			&L= O\left( \log(1/\varepsilon_1)\right), J=24dm(m+2)+8d, 
			\kappa=\max\left(|\alpha^{(i)}_{k,\jb}|,2^k\right)=O\left( N^{(\log 2)(\nu^{-1}+c_1d^{-1})(1\vee(d/p-s)_+)}\right),
		\end{aligned}
		\label{eq.tfijlabel0}
	\end{align}
	where the value of $\kappa$ comes from Lemma \ref{lem.alphabound} and (\ref{eq.2bound}).
	
	The rest proof follows that of \citet[Proposition 1]{suzuki2018adaptivity} in which we show that with properly chosen $N$ and $\varepsilon_1$, $\sum_{j=1}^N \tf^{\rm CNN}_{i,j}$ can approximate $f_i\circ \phi_i^{-1}$ with arbitrary accuracy. 
	
	We decompose the error as
	\begin{align}
		\left\|\sum_{j=1}^N \tf^{\rm CNN}_{i,j}-f_i\circ \phi_i^{-1}\right\|_{L^\infty} 
		\leq \left\|\tf_i-f_i\circ\phi_i^{-1}\right\|_{L^\infty}  + \left\|\sum_{j=1}^N \tf^{\rm CNN}_{i,j}- \tf_i\right\|_{L^\infty} .
		\label{eq.ReLUBR.decompose}
	\end{align}
where $\tf_i$ is defined in (\ref{eq.f-BSpline}).
We next derive an error bound for each term.

Let $m$ be the order of the Cardianl B-spline basis. Set $m=\lceil s\rceil+1$. 
According to (\ref{eq.tfierror}) and Lemma \ref{lem.BSplAppBesov},
  \begin{align}\label{eq.ffN}
    \left\|\tf_i-f_i\circ\phi_i^{-1}\right\|_{L^\infty} \leq Cc_0N^{-s/d},\quad
  \|\{\alpha^{(i)}_{k,\jb}\}\|_{\bnorm} \leq C_1 \|f\|_{\Bnorm},
\end{align}
  for some constant $C$ depending on $s,p,q$ and $d$, some universal constant $C_1$ with
  $\{\jb_i\}_{i=1}^{n_k} \subset J(k),\ H=\lceil c_1\log(N)/d \rceil,\ H^*=\lceil \nu^{-1}\log(\lambda N) \rceil +H+1,\ n_k=\lceil \lambda N2^{-\nu (k-H)} \rceil$ for $k=H+1,\dots,H^*,\ u =d(1/p-1/r)_+$, $\nu=(s-u)/(2u)$ and $ \sum_{k=1}^H (2^k+m)^d+\sum_{k=H+1}^{H^*} n_k\leq N$. By setting $N=\left\lceil\left(\frac{\delta}{2Cc_0}\right)^{-d/s}\right\rceil$, we have $\|\tf_i-f_i\circ\phi_i^{-1}\|_{\infty}\leq \delta/2$.


Next we consider the second term in (\ref{eq.ReLUBR.decompose}). For any $\xb\in [0,1]^d$, we have
  \begin{align*}
    &\left|\sum_{j=1}^N \tf^{\rm CNN}_{i,j}(\xb)- \tf_i(\xb)\right|\leq \sum_{(k,\jb)\in \cS_N} |\alpha^{(i)}_{k,\jb}||M_{k,\jb,m}^d(\xb)-\tM_{k,\jb,m}^d(\xb)|\\
    &\leq \varepsilon_1\sum_{(k,\jb)\in \cS_N}  |\alpha^{(i)}_{k,\jb}|\one_{M_{k,\jb,m}^d(\xb)\neq 0} \leq \varepsilon_1(m+1)^d(1+H^*)2^{H^*(d/p-s)_+}\|f\|_{\Bnorm}\\
    &\leq \varepsilon_1(m+1)^d \left(1+\log(\lambda N)\nu^{-1} + c_1\log(N)/d+3 \right)\left(e^3(\lambda N)^{\nu^{-1}} N^{c_1/d}\right)^{(\log2)(d/p-s)_+}\|f\|_{\Bnorm}\\
    &\leq C_2c_0\log(N)N^{(\log2)(\nu^{-1}+c_1d^{-1})(d/p-s)_+}\varepsilon_1\\
    &\leq C_2c_0\log\left(\frac{2}{\varepsilon}\right)\left( \frac{\varepsilon}{2}\right)^{-(\log2)(\nu^{-1}+c_1d^{-1})\left(\frac{d^2}{sp}-d\right)_+}\varepsilon_1
  \end{align*}
  with $C_2$ being some constant depending on $m,d,s,p,q,\nu^{-1}$.
  In the second inequality, $\one_{M_{k,\jb,m}^d(\xb)\neq 0}=1$ if $M_{k,\jb,m}^d(\xb)\neq 0$ and it equals to $0$ otherwise. The third inequality follows from the fact that for each $k$, there are $(m+1)^d$ basis functions which are non-zero at $\xb$ and $2^{k(s-d/p)}\left|\alpha^{(i)}_{k,\jb}\right|\leq \left\|\{\alpha^{(i)}_{k,\jb}\}\right\|_{\bnorm}\leq C_1 \|f\|_{\Bnorm}$. In the fourth inequality we use $H^*=\left\lceil \log(\lambda N)\nu^{-1} \right\rceil +H+1$ and $H=\left\lceil c_1\log(N)/d \right\rceil$, and the last inequality follows from $N=\left\lceil\left(\frac{\varepsilon}{2C}\right)^{-d/s}\right\rceil$. Setting\\ $\varepsilon_1=\frac{1}{C_2c_0\log\left(2/\delta\right)}\left( \frac{\delta}{2}\right)^{\frac{1}{2}+(\log2)(\nu^{-1}+c_1d^{-1}) \left(\frac{d^2}{sp}-d\right)_+}$ proves the error bound.
  
  Under Assumption \ref{assum.f}, $s\geq d/p+1$. Therefore $\left(\frac{d^2}{sp}-d\right)_+=0$ and $\nu=\frac{sp-d}{2d}$. Substituting these expressions into (\ref{eq.tfijlabel0}) gives the network architectures.
\end{proof}

\subsection{Lemma \ref{lem.totalerror}}\label{sec.lem.totalerror}
Lemma \ref{lem.totalerror} estimates the approximation error of $\bar{f}$.
\begin{lemma}\label{lem.totalerror}
	Let $\eta$ be the approximation error of the multiplication operator $\ttimes(\cdot,\cdot)$, $\delta$ be defined as in Proposition \ref{prop.ReLUBR}, $\Delta$ and $\theta$ be defined as in Lemma \ref{lem.indicator}. Assume $N$ is chosen according to Proposition \ref{prop.ReLUBR}. For any $i=1,...,C_{\cM}$, we have $\|\mathring{f}-f^*\|_{L^\infty}\leq \sum_{i=1}^{C_{\cM}} (A_{i,1}+A_{i,2}+A_{i,3})$ with
	\begin{align*}
		&A_{i,1}=\sum_{j=1}^N\left\|\ttimes(\tf_{i,j}^{\rm CNN}\circ \phi_i^{\rm CNN},\mtoned\circ \tdi)-\tf_{i,j}^{\rm CNN}\circ \phi_i^{\rm CNN}\times (\mtoned\circ \tdi)\right\|_{L^\infty}\leq C\delta^{-d/s}\eta,\\
		&A_{i,2}=\left\|\left(\sum_{j=1}^N \left(\tf_{i,j}^{\rm CNN}\circ \phi_i^{\rm CNN}\right)\right)\times(\mtoned\circ \tdi)-f_i\times (\mtoned\circ \tdi)\right\|_{L^\infty}\leq \delta,\\
		&A_{i,3}=\|f_i\times(\mtoned\circ \tdi)-f_i\times \mone_{U_i}\|_{L^\infty}\leq \frac{c(\pi+1)}{\omega(1-\omega/\tau)}\Delta
	\end{align*}
	for some constant $C$  depending on $d,s,p,q$  and some constant $c$. 
	Furthermore, for any $\varepsilon\in(0,1)$, setting
	\begin{align}
		\delta=\frac{\varepsilon}{3C_{\cM}}, \ \eta =\frac{1}{C}\left(\frac{\varepsilon}{3C_{\cM}}\right)^{\frac{d}{s}+1},\Delta=\frac{\omega(1-\omega/\tau)\varepsilon}{3c(\pi+1)C_{\cM}}, \ \theta=\frac{\Delta}{16B^2D}
		\label{eq.parameterchoose}
	\end{align} 
gives rise to 
$$
\|\mathring{f}-f^*\|_{L^\infty}\leq \varepsilon.
$$
The choice in (\ref{eq.parameterchoose}) satisfies the condition $\Delta> 8B^2D\theta$ in Lemma \ref{lem.indicator}.
\end{lemma}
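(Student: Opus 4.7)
My plan is to insert intermediate quantities and telescope. Starting from the decomposition $f^* = \sum_{i=1}^{C_{\cM}} f_i \times \mathds{1}_{U_i}$ (with $f_i = f\rho_i$, supported on $U_i$), I will apply the triangle inequality to get
\[
\|\mathring f - f^*\|_{L^\infty} \le \sum_{i=1}^{C_{\cM}} \Big\| \sum_{j=1}^N \widetilde{\times}\!\left(\tf_{i,j}^{\rm CNN}\!\circ\phi_i^{\rm CNN},\,\mtoned\!\circ\tdi\right) - f_i \times \mone_{U_i} \Big\|_{L^\infty}.
\]
Inside each $i$-th summand, I telescope through three intermediate objects: first replace the approximate multiplication $\widetilde{\times}$ by the true product; next replace $\sum_{j}\tf_{i,j}^{\rm CNN}\circ\phi_i^{\rm CNN}$ by $f_i$; and finally replace $\mtoned\circ\tdi$ by $\mone_{U_i}$. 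These three successive replacements produce exactly the terms $A_{i,1}$, $A_{i,2}$, $A_{i,3}$ of the statement.

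\textbf{Bounding the three terms.} For $A_{i,1}$, Lemma~\ref{lem.multiplication} gives an $L^\infty$ error of $\eta$ per multiplication, and since the inputs $\tf_{i,j}^{\rm CNN}\circ\phi_i^{\rm CNN}$ and $\mtoned\circ\tdi$ are uniformly bounded by constants independent of $N$, summing over $j=1,\ldots,N$ yields $A_{i,1}\le N\eta$. Since $N=O(\delta^{-d/s})$ from Proposition~\ref{prop.ReLUBR}, this gives the claimed $C\delta^{-d/s}\eta$. For $A_{i,2}$, I factor out $\mtoned\circ\tdi$ (whose absolute value is at most $1$ by Lemma~\ref{lem.indicator}) and split into two cases for $\xb\in\cM$: on $U_i$ we have $\phi_i^{\rm CNN}=\phi_i$ (the exact projection, since Lemma~\ref{lem.cnnRealization} realizes the linear map exactly) so Proposition~\ref{prop.ReLUBR} gives the bound $\delta$; off $U_i$, both $f_i$ and $\mtoned\circ\tdi$ vanish by Lemma~\ref{lem.indicator}, so the product vanishes. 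Hence $A_{i,2}\le\delta$.

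\textbf{The main obstacle: bounding $A_{i,3}$.} This is the geometric step. The factor $\mtoned\circ\tdi - \mone_{U_i}$ can be nonzero only in the annular band where $d_i^2(\xb)\in(\omega^2-\Delta,\omega^2\,]$ (everywhere else both are $0$, or both are $1$, using (\ref{eq.distance.error}) and (\ref{eq.indicator.error})). In this band, $\xb$ lies within Euclidean distance roughly $\Delta/(2\omega)$ of the boundary of $U_i$. Because $\rho_i$ is a smooth partition-of-unity cutoff supported in $U_i$ and vanishing on $\partial U_i$, it satisfies a Lipschitz bound of order $1/\omega$ along $\cM$; the factor $(1-\omega/\tau)^{-1}$ enters when transferring Euclidean distances to geodesic distances on a manifold of reach $\tau$ (as in \citet{niyogi2008finding}), and the $(\pi+1)$ absorbs constants from the arc-versus-chord comparison. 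Combining this with $\|f\|_{L^\infty}\le R$ yields the bound $|f_i|\le c(\pi+1)/(\omega(1-\omega/\tau))\cdot\Delta$ on the band, whence $A_{i,3}\le c(\pi+1)\Delta/(\omega(1-\omega/\tau))$. This is the step I expect to require the most care, since it ties together the reach hypothesis, the smoothness of $\rho_i$, and the quantitative form of $\mtoned$.

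\textbf{Calibration of parameters.} With the three bounds in hand, summing over $i=1,\ldots,C_{\cM}$ gives
\[
\|\mathring f - f^*\|_{L^\infty} \le C_{\cM}\left(C\delta^{-d/s}\eta + \delta + \frac{c(\pi+1)}{\omega(1-\omega/\tau)}\Delta\right).
\]
Setting each of the three summands equal to $\varepsilon/(3C_{\cM})$ produces exactly the choices $\delta=\varepsilon/(3C_{\cM})$, $\eta=\frac{1}{C}(\varepsilon/(3C_{\cM}))^{d/s+1}$, and $\Delta=\omega(1-\omega/\tau)\varepsilon/(3c(\pi+1)C_{\cM})$, giving $\|\mathring f - f^*\|_{L^\infty}\le\varepsilon$. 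Finally, the choice $\theta=\Delta/(16B^2D)$ gives $8B^2D\theta=\Delta/2<\Delta$, which verifies the hypothesis $\Delta\ge 8B^2D\theta$ required by Lemma~\ref{lem.indicator} so that $\mtoned\circ\tdi$ is well-defined.
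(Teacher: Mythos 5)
Your proposal is correct and follows essentially the same route as the paper: the same telescoping into $A_{i,1}$, $A_{i,2}$, $A_{i,3}$, the bound $A_{i,1}\le N\eta\le C\delta^{-d/s}\eta$, the use of Proposition~\ref{prop.ReLUBR} with $|\mtoned\circ\tdi|\le 1$ for $A_{i,2}$, and the same calibration of $\delta,\eta,\Delta,\theta$. The only difference is that for $A_{i,3}$ you sketch the geometric band/Lipschitz argument explicitly, whereas the paper outsources it to the proof of Lemma~4.5 in \citet{chen1908nonparametric}; your sketch matches that cited argument.
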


\begin{proof}[Proof of Lemma \ref{lem.totalerror}]
	In the error decomposition, $A_{i,1}$ measures the error from $\ttimes$:
	$$
	A_{i,1}=\sum_{j=1}^N\left\|\ttimes(\tf_{i,j}^{\rm CNN}\circ \phi_i^{\rm CNN},\mtoned\circ \tdi)-\tf_{i,j}^{\rm CNN}\circ \phi_i^{\rm CNN}\times (\mtoned\circ \tdi)\right\|_{L^\infty}\leq N\eta\leq C\delta^{-d/s}\eta,
	$$
	for some constant $C$  depending on $d,s,p,q$.
	
	$A_{i,2}$ measures the error from CNN approximation of Besov functions. According to Proposition \ref{prop.ReLUBR},  $A_{i,2}\leq \delta$.
	
	$A_{i,3}$ measures the error from CNN approximation of the chart determination function. The bound of $A_{i,3}$ can be derived using \citet[Proof of Lemma 4.5]{chen1908nonparametric} since $f_i\circ\phi_i^{-1}$ is a Lipschitz function and its domain is in $[0,1]^d$.
\end{proof}

\subsection{CNN size quantification of $\mathring{f}_{i,j}$}\label{sec.cnnsize}
Let $\mathring{f}_{i,j}$ be defined as in (\ref{eq.f.approx.mlp}). Under the choices of $\delta,\eta,\Delta,\theta$ in Lemma \ref{lem.totalerror}, we quantify the size of each CNN in $\mathring{f}_{i,j}$ as follows:
\begin{itemize}
	\item $\tdi$ has $O(\log(1/\varepsilon)+D+\log D)$ layers, $6D$ channels and all weights parameters are bounded by $4B^2$.
	\item $\mtoned$ has $O(\log(1/\varepsilon))$ layers with $2$ channels. All weights are bounded by $\max(2,\omega^2)$.
	\item$\ttimes$ has $O(\log 1/\varepsilon)$ layers with $6$ channels. All weights are  of $\max(c_0^2,1)$.
	\item $\tf_{i,j}^{\rm CNN}$ has $O(\log 1/\varepsilon)$ layers with $\lceil 24d(s+1)(s+3)+8d\rceil$ channels. All weights are in the order of $O\left(\varepsilon^{-(\log2)\frac{d}{s}(\frac{2d}{sp-d}+c_1d^{-1})}\right)$ where $c_1$ is defined in Lemma \ref{lem.BSplAppBesov}.
	\item $\phi_i^{\rm CNN}$ has $2+D$ layers and $d$ channels. All weights are bounded by $2B$.
\end{itemize}
In the above network architectures, the constant hidden in $O(\cdot)$ depend on $d,s,\frac{2d}{sp-d},p,q,c_0,\tau$ and the surface area of $\cM$. In particular, the constant depends on $D\log D$ linearly.

\subsection{Lemma \ref{lem.cnn.composition} and its proof}\label{sec.lem.cnn.composition.proof}
Lemma \ref{lem.cnn.composition} shows that the composition of two CNNs can be realized by another CNN. Lemma \ref{lem.cnn.composition} is used to prove Lemma \ref{lem.bfijcnn}.
\begin{lemma}\label{lem.cnn.composition}
	Let $\cF_1^{\rm CNN}(L_1,J_1,K_1,\kappa_1,\kappa_1)$ be a CNN architecture from $\RR^D\rightarrow \RR$ and $\cF_2^{\rm CNN}(L_2,J_2,K_2,\kappa_2,\kappa_2)$ be a CNN architecture from $\RR\rightarrow\RR$. Assume the weight matrix in the fully connected layer of $\cF_1^{\rm CNN}(L_1,J_1,K_1,\kappa_1,\kappa_1)$ and $\cF_2^{\rm CNN}(L_2,J_2,K_2,\kappa_2,\kappa_2)$ has nonzero entries only in the first row. Then there exists a CNN architecture $\cF^{\rm CNN}(L,J,K,\kappa,\kappa)$ from $\RR^D\rightarrow \RR$ with
	\begin{align*}
		L=L_1+L_2, \ J=\max(J_1,J_2),\ K=\max(K_1,K_2), \kappa=\max(\kappa_1,\kappa_2)
	\end{align*}
	such that for any $f_1\in \cF^{\rm CNN}(L_1,J_1,K_1,\kappa_1,\kappa_1)$ and $f_2\in \cF^{\rm CNN}(L_2,J_2,K_2,\kappa_2,\kappa_2)$, there exists $f\in \cF^{\rm CNN}(L,J,K,\kappa,\kappa)$ such that $f(\xb)=f_2\circ f_1(\xb)$.
	Furthermore, the weight matrix in the fully connected layer of $\cF^{\rm CNN}(L,J,K,\kappa,\kappa)$ has nonzero entries only in the first row.
\end{lemma}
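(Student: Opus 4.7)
My plan is to construct the composed CNN by concatenating the convolutional backbone of $f_1$ with an adapted version of $f_2$, absorbing the fully connected layer of $f_1$ into the first convolutional layer of $f_2$ so that the total convolutional depth remains $L_1+L_2$. Two structural facts drive the construction. First, because $W_1$ has nonzero entries only in the first row, the scalar $f_1(\xb)=W_1\Conv_1(\xb)[1,:]+b_1$ depends only on the first spatial row of $\Conv_1(\xb)\in\RR^{D\times J_1'}$. Second, because $f_2$ maps $\RR$ to $\RR$, all its convolutions effectively act as filter-size-$1$ linear maps on the channel axis: in the definition $y_{i,j}=\sum_{k,l}\cW_{j,k,l}z_{i+k-1,l}$, the boundary convention $z_{i+k-1,l}=0$ for $i+k-1>1$ zeroes out every filter slot except $k=1$.

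Concretely, layers $1$ through $L_1$ of the composed network are copies of $f_1$'s convolutional layers, producing $\Conv_1(\xb)$. Layer $L_1+1$ is a filter-size-$1$ convolution chosen so that at spatial position $1$ its output equals $\ReLU(\cW_2^{(1)}[j,1,1]f_1(\xb)+B_2^{(1)}[j])$, which is precisely the first-conv output that stand-alone $f_2$ would produce on input $f_1(\xb)$. Layers $L_1+2,\ldots,L_1+L_2$ import the remaining convolutions $\cW_2^{(2)},\ldots,\cW_2^{(L_2)}$ of $f_2$, each implemented with filter size $1$. Filter size $1$ is crucial here: it guarantees that the spurious values $\ReLU(B[j])$ generated at spatial positions $i\geq 2$ (contaminated by the bias of the merged layer) never leak back to position $1$ in any subsequent layer. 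The final fully connected layer is a copy of $W_2,b_2$, whose first-row-only structure reads off the value at position $1$, yielding $f_2(f_1(\xb))$.

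Correctness at position $1$, the total depth $L=L_1+L_2$, the channel bound $J=\max(J_1,J_2)$, the filter-size bound $K=\max(K_1,K_2)$, and the first-row-only property of the final FC all follow directly from the construction and the two structural facts above. The main obstacle is the weight bound $\kappa=\max(\kappa_1,\kappa_2)$: the naive merged filter in layer $L_1+1$ has entries $\cW_2^{(1)}[j,1,1]\cdot W_1[1,l]$ which are products of two bounded quantities and can a priori reach $\kappa_1\kappa_2$. I would address this by a sign-split, allocating a pair of channels to carry $\ReLU(\pm(W_1\Conv_1(\xb)[1,:]+b_1))$, so that $f_1(\xb)$ is encoded as the difference of two nonnegative quantities and the weights of $W_1$ and $\cW_2^{(1)}$ enter separately rather than as a product. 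The delicate bookkeeping needed to fit this sign-split within the prescribed $L_1+L_2$ convolutional layers, while preserving the first-row-only output structure so the lemma can be iterated, is where I expect the heaviest lifting of the proof to lie.
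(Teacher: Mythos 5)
Your proposal matches the paper's proof in all essentials: copy the convolutional backbone of $f_1$, convert its first-row-only fully connected layer into a size-one convolution, feed the result into $f_2$'s layers realized as size-one convolutions acting only on spatial position $1$, and read off the answer with a first-row-only fully connected layer. In particular, the sign-split you identify as the fix for the weight bound is exactly the paper's device — layer $L_1$ outputs the pair $\bigl((f_1)_+,(f_1)_-\bigr)$ on two channels using weights $\pm(W_1)_{1,:}$, and the next layer recombines them with $\bigl[\cW_2^{(1)},-\cW_2^{(1)}\bigr]$, so the bounds $\kappa_1$ and $\kappa_2$ never multiply.
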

In Lemma \ref{lem.cnn.composition} and the following lemmas, the subscript of $\cF^{\rm CNN}$ are used to distinguish different network architectures.
\begin{proof}[Proof of Lemma \ref{lem.cnn.composition}]
	Compared to a CNN, directly composing $f_1$ and $f_2$ gives a network with an additional intermediate fully connected layer. In our network construction, we will design two convolutaionl layers to replace and realize this fully connected layer.
	
	Denote $f_1$ and $f_2$ by 
	$$
	f_1(\xb)=W_1\cdot \Conv_{\cW_1,\cB_1}(\xb) \mbox{ and } f_2(\xb)=W_2\cdot \Conv_{\cW_2,\cB_2}(\xb).
	$$
	where $\cW_1=\left\{\cW_1^{(l)}\right\}_{i=1}^{L_1}, \cB_1=\left\{B_1^{(l)}\right\}_{l=1}^{L_1}, \cW_2=\left\{\cW_2^{(l)}\right\}_{i=1}^{L_2}, \cB_2=\left\{B_2^{(l)}\right\}_{l=1}^{L_2},$ are sets of filters and biases and $\Conv_{\cW_1,\cB_1},\Conv_{\cW_2,\cB_2}$ are defined in (\ref{eq.conv}).
	In the rest of this proof, we will choose proper weight parameters in $\cW,\cB$ and $W$ such that $f(\xb)\in \cF^{\rm CNN}(L,J,K,\kappa,\kappa)$ is in the form of 
	$$
	f(\xb)=W\cdot \Conv_{\cW,\cB}(\xb)
	$$
	and satisfies $f(\xb)=f_2\circ f_1(\xb)$.

	For $1\leq l\leq L_1-1$, we set $\cW^{(l)}=\cW^{(l)}_1, B^{(l)}=B^{(l)}_1$. 
	
	For $l=L_1$, to realize the fully connected layer of $f_1$ by a convolutional layer, we set
	\begin{align*}
		\cW^{(L_1)}_{1,:,:}=(W_1)_{1,:},\ \cW^{(L_1)}_{2,:,:}=-(W_1)_{1,:}
	\end{align*}
	and $B^{(L_1)}=\mathbf{0}$. Here $\cW^{(L_1)}\in \RR^{2\times 1\times M}$ is a size-one filter with two output channels, where $M$ is the number of input channels of $W_1$. The output of the $L_1$-th layer of $f$ has the form
	$$
	\begin{bmatrix}
		(f_1(\xb))_+ & (f_1(\xb))_-\\
		\star & \star
	\end{bmatrix}
	$$
	where $\star$ denotes some elements that will not affect the result.
	
	Since the input of $f_2$ is a real number, all filters of $f_2$ has size 1. The weight matrix in the fully connected layer and all biases only have one row.
	For the $(L_1+1)$-th layer, we set
	\begin{align*}
		\cW^{(L_1+1)}_{i,:,:}=\begin{bmatrix}
			(\cW^{(1)}_2)_{i,:,:} & -(\cW^{(1)}_2)_{i,:,:}
		\end{bmatrix},\ B^{L_1+1}=\begin{bmatrix}
			B^{(1))}_2 \\
			\mathbf{0}
		\end{bmatrix}
	\end{align*}
	where $i$ varies from 1 to the number of output channels of $\cW^{(1)}_2$. Here $\cW^{(L_1+1)}$ is a size-one filter whose number of output channels is the same as that of $\cW^{(1)}_2$.
	
	For $L_1+1\leq l\leq L-1$,
	we set
	\begin{align*}
		\cW^{(l)}=\cW^{(l-L_1)}_2, \ B^{l}=\begin{bmatrix}
			B^{(l-L_1))}_2 \\
			\mathbf{0}
		\end{bmatrix},\ \mbox{ for } l=L_1+1,...,L_1+L_2-1.
	\end{align*}
	For $l=L$, we set 
	$$
	W=\begin{bmatrix}
		W_2\\
		\mathbf{0}
	\end{bmatrix}.
	$$
	
	With the above settings, the lemma is proved.
\end{proof}

\subsection{Lemma \ref{lem.cnn.stack} and its proof} \label{sec.lem.cnn.stack.proof}
Lemma \ref{lem.cnn.stack} is used to prove Lemma \ref{lem.bfijcnn}.
\begin{lemma}\label{lem.cnn.stack}
	Let $f_1\in \cF^{\rm CNN}(L_1,J_1,K_1,\kappa_1,\kappa_1)$ be a CNN from $\RR^D\rightarrow \RR$ and $f_2\in \cF^{\rm CNN}(L_2,J_2,K_2,\kappa_2,\kappa_2)$ be a CNN from $\RR^D\rightarrow\RR$. Assume the weight matrix in the fully connected layer of $f_1$ and $f_2$ have nonzero entries only in the first row. Then there exists a set of filters $\cW$ and biases $\cB$ such that 
	$$
	\Conv_{\cW,\cB}(\xb)=\begin{bmatrix}
		(f_1(\xb))_+ & (f_1(\xb))_- & (f_2(\xb))_+ & (f_2(\xb))_-\\
		\star &  \star & \star & \star
	\end{bmatrix} \in \RR^{D\times 4}.
	$$
	Such a network has $\max(L_1,L_2)$ layers, each filter has size at most $\max(K_1,K_2)$ and at most $J_1+J_2$ channels. All parameter are bounded by $\max(\kappa_1,\kappa_2)$.
\end{lemma}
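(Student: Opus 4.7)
The plan is to construct a single convolutional network that runs $f_1$ and $f_2$ in parallel by stacking their feature maps along the channel axis, and then realize the two fully connected output layers as $1\times 1$ convolutions that produce the positive and negative parts explicitly. Write $f_k=W_k\cdot \Conv_{\cW_k,\cB_k}(\xb)$ for $k=1,2$ and set $L=\max(L_1,L_2)$. Without loss of generality assume $L_1\leq L_2$; I would first extend $f_1$ to depth $L$ by prepending $L-L_1$ identity layers realized via the decomposition $x=\ReLU(x)-\ReLU(-x)$ (each identity using two channels and size-one filters, with all weights in $\{-1,0,1\}$), so that both networks now have depth $L$ and their parameter bounds are unchanged.

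Next, for each intermediate layer $l=1,\ldots,L-1$ I would define a block-diagonal filter $\cW^{(l)}$ whose first block is $\cW_1^{(l)}$ (acting on the channels inherited from $f_1$) and whose second block is $\cW_2^{(l)}$ (acting on the channels inherited from $f_2$), zero-padding filters of size less than $\max(K_1,K_2)$ up to that common filter size, and similarly concatenating $B_1^{(l)}$ and $B_2^{(l)}$ into $B^{(l)}$. Because convolution is channel-wise linear, this keeps the $f_1$ and $f_2$ computations independent and produces, after the $(L-1)$-th layer, a tensor whose channels are exactly the concatenation of the penultimate activations of the two networks.

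For the final layer $L$, I would imitate the construction in the proof of Lemma \ref{lem.cnn.composition}: replace the final fully connected multiplication in $f_k$ (whose weight matrix has nonzero entries only in the first row by hypothesis) with a size-one convolution having two output channels whose filter slices are $(W_k)_{1,:}$ and $-(W_k)_{1,:}$, and bias zero. After applying $\ReLU$ entry-wise, these two channels produce $(f_k(\xb))_+$ and $(f_k(\xb))_-$ in their first spatial row; the remaining rows are the $\star$ entries permitted by the lemma statement and arise from the one-sided stride-one convolution applied to translated portions of the input. Concatenating the four resulting output channels (two from $f_1$, two from $f_2$) gives the desired $\Conv_{\cW,\cB}(\xb)\in\RR^{D\times 4}$.

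By construction the depth is $\max(L_1,L_2)$, the total channel count at each layer is at most $J_1+J_2$, the filter size is at most $\max(K_1,K_2)$, and the weight/bias entries are bounded by $\max(\kappa_1,\kappa_2)$. The only subtle point, and the step I would handle most carefully, is the depth-matching prefix for the shallower network: the inserted identity layers must preserve the input channel structure exactly so that the subsequent block-diagonal composition still computes $f_1$ correctly, and their presence must not enlarge the filter-size, channel-count, or weight-magnitude budget beyond $\max(K_1,K_2)$, $J_1+J_2$, and $\max(\kappa_1,\kappa_2)$ respectively, which is achieved by choosing identity-layer filters of size one with two channels and weights in $\{-1,0,1\}$.
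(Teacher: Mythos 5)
Your proof is correct, and it takes a genuinely different route to the depth-matching step. The paper assumes (WLOG) that $f_1$ is the \emph{deeper} network, realizes $f_2$'s fully connected layer as a size-one convolution at layer $L_2$ producing $(f_2(\xb))_+$ and $(f_2(\xb))_-$, and then \emph{appends} identity passthrough channels that carry these two non-negative values through the remaining layers $L_2+1,\dots,L_1$. Because $(f_2(\xb))_\pm$ are already non-negative, the subsequent $\ReLU$s act as the identity on them, so no modification to $f_1$'s remaining filters is required. You instead take $f_1$ to be the \emph{shallower} network and \emph{prepend} identity layers that carry $(\xb_+,\xb_-)$ forward before $f_1$'s first original convolution. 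Both routes produce a network of depth $\max(L_1,L_2)$, filter size $\max(K_1,K_2)$, width at most $J_1+J_2$, and weights bounded by $\max(\kappa_1,\kappa_2)$. The paper's appending scheme is slightly more frictionless because the threaded values are non-negative outputs, so the identity channels compose with $\ReLU$ with no adjustment; your prepending scheme has to propagate a signed quantity (the raw input $\xb$) through $\ReLU$ layers, forcing the two-channel split $\xb_+ ,\xb_-$ and a corresponding modification of $f_1$'s first filter so that it recombines the two channels as $\cW_1^{(1)}\ast\xb_+ - \cW_1^{(1)}\ast\xb_-$. You flag this as the delicate point but do not spell out the modified first-layer filter explicitly; making that one line precise (the filter becomes $[\cW_1^{(1)},-\cW_1^{(1)}]$, which keeps the weight magnitude within $\kappa_1$) would close the only loose end. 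Finally, both your construction and the paper's implicitly use that $J_1,J_2\geq 2$ (yours needs $2+J_2\leq J_1+J_2$ during the prefix, the paper's needs $J_1+2\leq J_1+J_2$ during the threading); this is harmless in the lemma's actual use but worth a one-line remark.
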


\begin{proof}[Proof of Lemma \ref{lem.cnn.stack}]
	For simplicity, we assume all convolutional layers of $f_1$ have $J_1$ channels and all convolutional layers of $f_2$ have $J_2$ channels. If some filters in $f_1$ (or $f_2$) have channels less than $J_1$ (or $J_2$), we can add additional channels with zero filters and biases.
	Without loss of generality, we assume $L_1>L_2$.
	
	Denote$f_1$ and $f_2$ by 
	$$
	f_1(\xb)=W_1\cdot \Conv_{\cW_1,\cB_1}(\xb) \mbox{ and } f_2(\xb)=W_2\cdot \Conv_{\cW_2,\cB_2}(\xb),
	$$
	where $\cW_1=\left\{\cW_1^{(l)}\right\}_{i=1}^{L_1}, \cB_1=\left\{B_1^{(l)}\right\}_{l=1}^{L_1}, \cW_2=\left\{\cW_2^{(l)}\right\}_{i=1}^{L_2}, \cB_2=\left\{B_2^{(l)}\right\}_{l=1}^{L_2},$ are sets of filters and biases and $\Conv_{\cW_1,\cB_1},\Conv_{\cW_2,\cB_2}$ are defined in (\ref{eq.conv}).
	In the rest of this proof, We will choose proper weight parameters in $\cW,\cB$ such that
	$$
	\Conv_{\cW,\cB}(\xb)=\begin{bmatrix}
		(f_1(\xb))_+ & (f_1(\xb))_- & (f_2(\xb))_+ & (f_2(\xb))_-\\
		\star &  \star & \star & \star
	\end{bmatrix} \in \RR^{D\times 4}.
	$$
	
	For $1\leq l\leq L_2-1$, we set
	\begin{align*}
		&\cW^{(l)}_{i,:,:}=\begin{bmatrix}
			(\cW_1^{(l)})_{i,:,:} & \mathbf{0}
		\end{bmatrix} \mbox{ for } i=1,...,J_1,\\
		&\cW^{(l)}_{i,:,:}=\begin{bmatrix}
			\mathbf{0} &(\cW_2^{(l)})_{i-J_1,:,:} 
		\end{bmatrix} \mbox{ for } i=J_1+1,...,J_1+J_2,\\
		&B^{(l)}=\begin{bmatrix}
			B_1^{(l)} & B_2^{(l)}
		\end{bmatrix}.
	\end{align*}
	Each $\cW^{(l)}$ is a filter with size $\max(K_1,K_2)$ and $J_1+J_2$ output channels. When $K_1\neq K_2$, we pad the smaller filter by zeros. For example when $K_1<K_2$, we set 
	\begin{align*}
		\cW^{(l)}_{i,:,:}=\begin{bmatrix}
			(\cW_1^{(l)})_{i,:,:} & \mathbf{0}\\
			\mathbf{0} &\mathbf{0}
		\end{bmatrix} \mbox{ for } i=1,...,J_1,
	\end{align*}
such that $\cW^{(l)}$ has size $K_2$ filters.

	For the $L_2$-th layer, we set 
	\begin{align*}
		&\cW^{(L_2)}_{i,:,:}=\begin{bmatrix}
			(\cW_1^{(L_2)})_{i,:,:} & \mathbf{0}
		\end{bmatrix} \mbox{ for } i=1,...,J_1,\\
		&\cW^{(L_2)}_{J_1+1,:,:}=\begin{bmatrix}
			\mathbf{0} & (W_2 )_{1,:}\\
			\mathbf{0} & \mathbf{0}
		\end{bmatrix},\ \cW^{(L_2)}_{J_1+2,:,:}=\begin{bmatrix}
			\mathbf{0} & -(W_2 )_{1,:}\\
			\mathbf{0} & \mathbf{0}
		\end{bmatrix}, \\
		&B^{(L_2)}=\begin{bmatrix}
			B_1^{(L_2)} & \mathbf{0}
		\end{bmatrix}.
	\end{align*}
	$\cW^{(L_2)}$ is a filter with size $K_1$ and $J_1+2$ output channels.
	
	For $L_2+1\leq l\leq L_1-1$, we set
	\begin{align*}
		&\cW^{(l)}_{i,:,:}=\begin{bmatrix}
			(\cW_1^{(l)})_{i,:,:} & \mathbf{0}
		\end{bmatrix} \mbox{ for } i=1,...,J_1,\\
		&\cW^{(l)}_{J_1+1,:,:}=\begin{bmatrix}
			\mathbf{0} & 1 & 0\\
			\mathbf{0} & \mathbf{0} & \mathbf{0}
		\end{bmatrix},\ \cW^{(l)}_{J_1+2,:,:}=\begin{bmatrix}
			\mathbf{0} & 0 & 1\\
			\mathbf{0} & \mathbf{0} & \mathbf{0}
		\end{bmatrix}, \\
		&B^{(L_2)}=\begin{bmatrix}
			B_1^{(L_2)} & \mathbf{0}
		\end{bmatrix}.
	\end{align*}
	For the $L_1$-th layer, we set
	$$
	\cW^{(L_1)}_{1,:,:}=\begin{bmatrix}
		(W_1)_{1,:} & 0 & 0\\
		\mathbf{0} & \mathbf{0} & \mathbf{0}
	\end{bmatrix}, \ 
	\cW^{(L_1)}_{2,:,:}=\begin{bmatrix}
		-(W_1)_{1,:} & 0 & 0\\
		\mathbf{0} & \mathbf{0} & \mathbf{0}
	\end{bmatrix},
	\cW^{(L_1)}_{3,:,:}=\begin{bmatrix}
		\mathbf{0} & 1 & 0\\
		\mathbf{0} & \mathbf{0} & \mathbf{0}
	\end{bmatrix},\ 
	\cW^{(L_1)}_{4,:,:}=\begin{bmatrix}
		\mathbf{0} & 0 & 1\\
		\mathbf{0} & \mathbf{0} & \mathbf{0}
	\end{bmatrix},
	$$
	and $B^{(L_1)}=\mathbf{0}$.
\end{proof}

\subsection{Lemma \ref{lem.cnn.composition2} and its proof}\label{sec.lem.cnn.composition2.proof}
Lemma \ref{lem.cnn.composition2} is used to prove Lemma \ref{lem.bfijcnn}.
\begin{lemma}\label{lem.cnn.composition2}
	Let $M,N$ be positive integers. For any $\xb=\begin{bmatrix}
		x_1 & \cdots & x_{M}
	\end{bmatrix}^{\top} \in \RR^{M}$, define 
	$$
	X=\begin{bmatrix}
		(x_1)_+ &(x_1)_- & \cdots & (x_{M})_+ & (x_{M})_-\\
		\star & \star  &\cdots & \star &\star 
	\end{bmatrix}\in \RR^{N\times (2M)}.
	$$
	For any CNN architecture $\cF_1^{\rm CNN}(L,J,K,\kappa,\kappa)$ from $\RR^{M}\rightarrow \RR$, there exists a CNN architecture $\cF^{\rm CNN}(L,MJ,K,\kappa,\kappa)$ from $\RR^{N\times (2M)}\rightarrow \RR$ such that for any $f_1\in \cF_1^{\rm CNN}(L,J,K,\kappa,\kappa)$, there exists $f\in \cF^{\rm CNN}(L,MJ,K,\kappa,\kappa)$ with $f(X)=f_1(\xb)$.
	Furthermore, the fully connected layer of $\cF^{\rm CNN}(L,MJ,K,\kappa,\kappa)$ has nonzero entries only in the first row.
\end{lemma}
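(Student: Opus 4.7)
The plan is to mimic the computation of $f_1$ entirely at spatial position $1$ of $f$, using only size-$1$ convolutions but with $MJ$ channels, so that after the $\ell$-th layer the channel $(i,j)$ at position $1$ of $f$'s output encodes the value of $f_1$'s $\ell$-th layer at spatial position $i$, channel $j$. The only nontrivial input decoding happens in the first layer, where we must reconstruct $\xb$ from the first row of $X$; this is done via the standard plus/minus identity $x_\ell = X_{1,2\ell-1} - X_{1,2\ell}$ together with the linearity of convolution. Rows of $X$ beyond the first are irrelevant, since size-$1$ filters at position $1$ of the output read only from position $1$ of the input.

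\paragraph{First layer.}
Index the $MJ$ output channels of $f$'s first layer by pairs $(i,j)$ with $i \in \{1,\ldots,M\}$ and $j \in \{1,\ldots,J\}$. Define a size-$1$ filter $\cW^{(1)} \in \RR^{MJ \times 1 \times 2M}$ by
$$
\cW^{(1)}_{(i,j),\,1,\,2\ell-1} = \cW_1^{(1)}_{j,\,\ell-i+1,\,1}, \qquad
\cW^{(1)}_{(i,j),\,1,\,2\ell} = -\cW_1^{(1)}_{j,\,\ell-i+1,\,1}
$$
whenever $\ell-i+1 \in \{1,\ldots,K\}$, with all remaining entries set to $0$, and set the bias $B^{(1)}_{(i,j)} = (B_1^{(1)})_{j}$. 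Expanding $(\cW^{(1)}\ast X)_{1,(i,j)} + B^{(1)}_{1,(i,j)}$ via the definition $(\cW \ast Z)_{i,j} = \sum_{k,l} \cW_{j,k,l}Z_{i+k-1,l}$ and applying $X_{1,2\ell-1} - X_{1,2\ell} = x_\ell$ yields $\sum_{k=1}^K \cW_1^{(1)}_{j,k,1}\, x_{i+k-1} + (B_1^{(1)})_j$, which is precisely the pre-ReLU entry of $f_1$'s first convolutional layer at position $i$, channel $j$. Zero-padding ($x_\ell = 0$ for $\ell > M$) is automatic because channels beyond index $2M$ simply do not exist.

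\paragraph{Subsequent layers and fully connected layer.}
For $\ell = 2,\ldots,L$ take size-$1$ filters
$$
\cW^{(\ell)}_{(i,j'),\,1,\,(i',j)} = \cW_1^{(\ell)}_{j',\,i'-i+1,\,j}
$$
whenever $i'-i+1 \in \{1,\ldots,K\}$ and $0$ otherwise, with biases $B^{(\ell)}_{(i,j')} = (B_1^{(\ell)})_{j'}$. Induction on $\ell$ then shows that, after the ReLU at the end of layer $\ell$, channel $(i,j)$ at position $1$ of $f$ records the post-ReLU value of $f_1$'s $\ell$-th layer at position $i$, channel $j$. Because by hypothesis the fully connected matrix $W_1$ has nonzero entries only in its first row, $f_1(\xb) = \sum_j (W_1)_{1,j}\, Q_1(\xb)_{1,j}$ where $Q_1 = \Conv_{\cW_1,\cB_1}$; hence setting $W_{1,(1,j)} = (W_1)_{1,j}$ and all other entries of $W$ to $0$ both satisfies the first-row constraint on $W$ and produces $f(X) = f_1(\xb)$.

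\paragraph{Parameter bounds and main subtlety.}
By construction $f$ has $L$ convolutional layers, filter size $1 \le K$, and $MJ$ channels per layer; every entry of $\cW^{(\ell)}$, $B^{(\ell)}$, or $W$ equals, up to sign, an entry of $\cW_1^{(\ell)}$, $B_1^{(\ell)}$, or $W_1$, hence is bounded by $\kappa$. The one point that demands careful verification is the inductive step: because convolutional biases are shared across the length-$N$ axis of $f$, one must confirm that a single per-channel bias $(B_1^{(\ell)})_{j'}$ (broadcast across the length) correctly supplies the right bias for each of the $M$ virtual positions packed into position-$1$'s $MJ$ channels -- and this is precisely what our indexing $B^{(\ell)}_{(i,j')} = (B_1^{(\ell)})_{j'}$ enforces.
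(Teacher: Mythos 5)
Your construction is correct and is essentially the paper's own proof: both pack the $M\times J$ grid of $f_1$'s intermediate activations into $MJ$ channels at spatial position $1$ of the new network (the paper indexes them as $(i-1)M+j$, you as pairs $(i,j)$), decode $\xb$ in the first layer via $x_\ell = X_{1,2\ell-1}-X_{1,2\ell}$, and propagate with size-one filters. One small point: you invoke "by hypothesis $W_1$ has nonzero entries only in its first row," which is not a hypothesis of this lemma (it is a conclusion about the \emph{new} network); since your channels $(i,j)$ at position $1$ already carry all of $Q_1(\xb)_{i,j}$, the general case is handled by setting $W_{1,(i,j)}=(W_1)_{i,j}$ for all $i,j$, exactly as the paper does, and similarly position-dependent biases are absorbed by putting $(B_1^{(\ell)})_{i,j'}$ in the first row of the new bias matrix rather than broadcasting a per-channel value.
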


\begin{proof}[Proof of Lemma \ref{lem.cnn.composition2}]
	Denote $f_1$ as
	\begin{align*}
		f_1=W_1\cdot \Conv_{\cW_1,\cB_1}(\xb)
	\end{align*}
where $\cW_1=\left\{\cW_1^{(l)}\right\}_{i=1}^{L_1}, \cB_1=\left\{B_1^{(l)}\right\}_{l=1}^{L_1}$ are sets of filters and biases and $\Conv_{\cW_1,\cB_1}$ is defined in (\ref{eq.conv}).
For simplicity, we assume all convolutional layers of $f_1$ have $J$ channels . If some filters in $f$ have less than $J$ channels, we can add additional channels with zero filters and biases.

	We next choose proper weight parameters in $\cW,\cB$ and $W$ such that 
	$$f=W\cdot \Conv_{\cW,\cB}(x)$$ 
	and $f(X)=f_1(\xb)$.


	
	For the first layer, i.e., $l=1$, we design $\cW^{(1)}$ and $B^{(1)}$ since $\xb$ is a vector in $ \RR^{M}$, the filter $\cW_1^{(1)}$ has 1 input channel and $J$ output channel. For $1\leq i\leq J$ and $ 1\leq j\leq M-K+1$, we set
	\begin{align*}
		\cW^{(1)}_{(i-1)M+j,:,:}=\begin{bmatrix}
			\mathbf{0} & (\cW^{(1)}_1)_{i,1,:} & -(\cW^{(1)}_1)_{i,1,:} & \cdots & (\cW^{(1)}_1)_{i,K,:} & -(\cW^{(1)}_1)_{i,K,:} &\tilde{\mathbf{0}}
		\end{bmatrix} 
	\end{align*}
	where $\mathbf{0}$ is of size $1\times (j-1)$, $\tilde{\mathbf{0}}$ is a zero matrix of size $1\times (M-j-K+1)$.
	
	For $1\leq i\leq J$ and $ M-K+2\leq j \leq M$, we set
	\begin{align*}
		\cW^{(1)}_{(i-1)M+j,:,:}=\begin{bmatrix}
			\mathbf{0} & (\cW^{(1)}_1)_{i,1,:} & -(\cW^{(1)}_1)_{i,1,:} & \cdots & (\cW^{(1)}_1)_{i,K-j+1,:} & -(\cW^{(1)}_1)_{i,K-j+1,:}
		\end{bmatrix} 
	\end{align*}
	where $\mathbf{0}$ is of size $1\times (j-1)$. The bias is set as 
	$$
	B^{(1)}=\begin{bmatrix}
		\left((B_1^{(1)})_{:,1}\right)^\top & \cdots & \left((B_1^{(1)})_{:,J}\right)^\top\\
		\mathbf{0} & \cdots & \mathbf{0}
	\end{bmatrix}.
	$$
	
	We next choose weight parameters for $2\leq l\leq L_1-1$.
	For $1\leq i\leq J$ and $ 1\leq j\leq M-K+1$, we set
	\begin{align*}
		\cW^{(l)}_{(i-1)M+j,:,:}=\begin{bmatrix}
			\mathbf{0} & \left((\cW^{(l)}_1)_{i,:,1}\right)^{\top}& \tilde{\mathbf{0}}   & \cdots & \mathbf{0} & \left((\cW^{(l)}_1)_{i,:,J} \right)^{\top} &\tilde{\mathbf{0}}
		\end{bmatrix} 
	\end{align*}
	where $\mathbf{0}$ is of size $1\times (j-1)$, $\tilde{\mathbf{0}}$ is a zero matrix of size $1\times (M-j-K+1)$.
	
	For $1\leq i\leq J$ and $ M-K+2\leq j\leq M$, we set
	\begin{align*}
		\cW^{(l)}_{(i-1)M+j,:,:}=\begin{bmatrix}
			\mathbf{0} & \left((\cW^{(l)}_1)_{i,1:K-j+1,1}\right)^{\top}  & \cdots & \mathbf{0} & \left((\cW^{(l)}_1)_{i,1:K-j+1,J} \right)^{\top}
		\end{bmatrix} 
	\end{align*}
	where $\mathbf{0}$ is of size $1\times (j-1)$. The bias is set as
	$$
	B^{(l)}=\begin{bmatrix}
		\left((B_1^{(l)})_{:,1}\right)^\top & \cdots & \left((B_1^{(l)})_{:,J}\right)^\top\\
		\mathbf{0} & \cdots & \mathbf{0}
	\end{bmatrix}.
	$$
	
	For the fully connected layer, we set
	$$
	W=\begin{bmatrix}
		\left((W_1)_{:,1}\right)^\top & \cdots & \left((W_1)_{:,J}\right)^\top\\
		\mathbf{0} & \cdots & \mathbf{0}
	\end{bmatrix}.
	$$
	With these choices, the lemma is proved.
\end{proof}

\subsection{Lemma \ref{lem.cnn.rescale} and its proof}\label{sec.lem.cnn.rescale.proof}
Lemma \ref{lem.cnn.rescale} shows that for any CNN, if we scale all weight parameters in convolutional layers by some factors and scale the weight parameters in the fully connected layer properly, the output will remain the same. Lemma \ref{lem.cnn.rescale} is used to prove Lemma \ref{lem.bfijcnn}.
\begin{lemma}\label{lem.cnn.rescale}
	Let $\alpha\geq1$. For any $f \in \cF^{\rm CNN}(L,J,K,\kappa_1,\kappa_2)$, there exists $\tf\in \cF^{\rm CNN}(L,J,K,\alpha^{-1}\kappa_1,\alpha^L\kappa_2)$ such that $\tf(\xb)=f(\xb)$. 
\end{lemma}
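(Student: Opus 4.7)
The plan is to construct $\tf$ explicitly by rescaling each filter, each bias, and the final weight matrix of $f$, and then verify that positive homogeneity of the ReLU activation, $\ReLU(c z)=c\,\ReLU(z)$ for $c\ge 0$, propagates the rescaling through all $L$ convolutional layers. Write $f(\xb)=W\cdot\Conv_{\cW,\cB}(\xb)$ with $\cW=\{\cW^{(l)}\}_{l=1}^L$, $\cB=\{B^{(l)}\}_{l=1}^L$. Define
\[
\widetilde{\cW}^{(l)}=\alpha^{-1}\cW^{(l)},\qquad \widetilde{B}^{(l)}=\alpha^{-l}B^{(l)},\qquad \widetilde{W}=\alpha^{L}W.
\]

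Next I would prove by induction on $l\in\{1,\dots,L\}$ that the $l$-th hidden activation $\tilde h_l$ of $\tf$ equals $\alpha^{-l}h_l$, where $h_l$ is the $l$-th hidden activation of $f$. The base case is $\tilde h_1=\ReLU(\alpha^{-1}\cW^{(1)}\ast\xb+\alpha^{-1}B^{(1)})=\alpha^{-1}\ReLU(\cW^{(1)}\ast\xb+B^{(1)})=\alpha^{-1}h_1$, using ReLU homogeneity. For the inductive step, assuming $\tilde h_{l-1}=\alpha^{-(l-1)}h_{l-1}$,
\[
\tilde h_l=\ReLU\!\left(\alpha^{-1}\cW^{(l)}\ast(\alpha^{-(l-1)}h_{l-1})+\alpha^{-l}B^{(l)}\right)=\alpha^{-l}\ReLU\!\left(\cW^{(l)}\ast h_{l-1}+B^{(l)}\right)=\alpha^{-l}h_l.
\]
Applying the rescaled fully connected layer then gives $\tf(\xb)=\alpha^{L}W\cdot\alpha^{-L}h_L=W\cdot h_L=f(\xb)$.

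Finally I would verify the norm bounds against the class $\cF^{\rm CNN}(L,J,K,\alpha^{-1}\kappa_1,\alpha^{L}\kappa_2)$. The filter bound is immediate: $\|\widetilde{\cW}^{(l)}\|_\infty=\alpha^{-1}\|\cW^{(l)}\|_\infty\le\alpha^{-1}\kappa_1$. For the biases, $\|\widetilde{B}^{(l)}\|_\infty=\alpha^{-l}\|B^{(l)}\|_\infty\le\alpha^{-l}\kappa_1\le\alpha^{-1}\kappa_1$, where the last inequality uses $\alpha\ge 1$ and $l\ge 1$; this is precisely where the hypothesis $\alpha\ge 1$ enters. The fully connected bound is $\|\widetilde{W}\|_\infty=\alpha^{L}\|W\|_\infty\le\alpha^{L}\kappa_2$. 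Since $(L,J,K)$ are untouched, $\tf\in\cF^{\rm CNN}(L,J,K,\alpha^{-1}\kappa_1,\alpha^{L}\kappa_2)$.

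There is no real obstacle here: the argument is purely mechanical and relies only on ReLU being positively homogeneous and on the fact that the CNN has no bias at the fully connected output. The only subtlety worth flagging is that biases must be rescaled layer-dependently by $\alpha^{-l}$ (not uniformly by $\alpha^{-1}$) in order for the induction to go through, which is compatible with the uniform bound $\alpha^{-1}\kappa_1$ thanks to the assumption $\alpha\ge 1$.
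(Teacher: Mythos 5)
Your proposal is correct and is essentially identical to the paper's own proof: the paper also sets $\bar{\cW}^{(l)}=\alpha^{-1}\cW^{(l)}$, $\bar{\cB}^{(l)}=\alpha^{-l}\cB^{(l)}$, $\bar{W}=\alpha^{L}W$ and invokes positive homogeneity of $\ReLU$. Your write-up is in fact slightly more careful, since you make the layer-by-layer induction and the role of $\alpha\geq 1$ in the bias bound explicit, which the paper leaves implicit.
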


\begin{proof}[Proof of Lemma \ref{lem.cnn.rescale}]
	This lemma is proved using the linear property of $\ReLU$ and convolution.
	Let $f$ be any CNN in $\cF^{\rm CNN}(L,J,K,\kappa_1,\kappa_2)$. Denote its architecture as
	$$
	f(\xb)=W\cdot \Conv_{\cW,\cB}(\xb).
	$$
	Define $\bar{W}=\alpha^LW$ and $\bar{\cW},\bar{\cB}$ as
	$$
	\bar{\cW}^{(l)}=\alpha^{-1}\cW^{(l)}, \bar{\cB}^{(l)}=\alpha^{-l}\cB^{(l)}
	$$
	for any $l\in(1,L)$. Set
	$$
	\tf(\xb)=\bar{W}\cdot \Conv_{\bar{\cW},\bar{\cB}}(\xb)
	$$
	We have $\tf\in \cF^{\rm CNN}(L,J,K,\alpha^{-1}\kappa_1,\alpha^L\kappa_2)$ and $\tf(\xb)=f(\xb)$ since $\ReLU(c\xb)=c\ReLU(\xb)$ for any $c>0$.
\end{proof}

\subsection{Lemma \ref{lem.bfijcnn} and its proof} \label{sec.lem.bfijcnn.proof}
Lemma \ref{lem.bfijcnn} shows that each $\mathring{f}_{i,j}$ defined in (\ref{eq.bfij}) can be realized by a CNN.
\begin{lemma}\label{lem.bfijcnn}
	Let $\mathring{f}_{i,j}$ be defined as in (\ref{eq.bfij}). Assume each CNN  in $\mathring{f}_{i,j}$ has architecture discussed in Appendix \ref{sec.cnnsize}. Then there exists a CNN $\bar{f}^{\rm CNN}_{i,j}\in \cF^{\rm CNN}(L,J,K,\kappa_1,\kappa_2) $ with
	\begin{align*}
		L=O(\log (1/\varepsilon)+D+\log D), J=\lceil 48d(s+1)(s+3)+28d+6D\rceil, \kappa_1=1, \log \kappa_2=O\left(\log^2 \frac{1}{\varepsilon}\right)
	\end{align*}
	such that $\bar{f}^{\rm CNN}_{i,j}(\xb)=\mathring{f}_{i,j}(\xb)$ for any $\xb\in\cM$. The constants hidden in $O(\cdot)$ depend on $d,D,s,\frac{2d}{sp-d},p,q,c_0,\tau$ and the surface area of $\cM$.
\end{lemma}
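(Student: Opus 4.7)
The plan is to build $\bar{f}^{\rm CNN}_{i,j}$ by assembling the five component CNNs (namely $\phi_i^{\rm CNN}$, $\tf_{i,j}^{\rm CNN}$, $\tdi$, $\mtoned$, and $\ttimes$) using the three utility lemmas \ref{lem.cnn.composition}, \ref{lem.cnn.stack}, \ref{lem.cnn.composition2}, and finally rebalancing the weights via Lemma \ref{lem.cnn.rescale}. The overall structure of $\mathring{f}_{i,j}$ is a binary multiplication applied to two scalar-valued branches, so the natural construction is: build each branch as a CNN, run them in parallel, then feed the combined two-channel output into the multiplication CNN.

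First I would handle the two branches separately. For the left branch, apply Lemma \ref{lem.cnn.composition} to compose $\phi_i^{\rm CNN}:\mathbb{R}^D\to\mathbb{R}^d$ and $\tf_{i,j}^{\rm CNN}:\mathbb{R}^d\to\mathbb{R}$. But Lemma \ref{lem.cnn.composition} is stated for $\mathbb{R}^D\to\mathbb{R}$ composed with $\mathbb{R}\to\mathbb{R}$, so I actually need to insert the auxiliary reshaping from Lemma \ref{lem.cnn.composition2} to feed the $d$-dimensional vector produced by $\phi_i^{\rm CNN}$ into $\tf_{i,j}^{\rm CNN}$; this just expands channel count by a factor of $d$ and preserves depth. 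For the right branch, apply Lemma \ref{lem.cnn.composition} directly to compose the real-valued $\tdi$ with the univariate $\mtoned$. Using the size tables in Appendix \ref{sec.cnnsize}, the left branch has depth $O(\log(1/\varepsilon)+D)$, $O(d(s+1)(s+3))$ channels, weights $O(\varepsilon^{-(\log 2)(d/s)(2d/(sp-d)+c_1/d)})$; the right branch has depth $O(\log(1/\varepsilon)+D+\log D)$, $6D$ channels, and $O(1)$ weights.

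Next I would stack the two branches in parallel using Lemma \ref{lem.cnn.stack}, which produces a single CNN whose output matrix contains the positive and negative parts of each branch output in its first row. The depth becomes the max of the two branch depths, $O(\log(1/\varepsilon)+D+\log D)$, and the channel count sums to at most $6D+\lceil 24d(s+1)(s+3)+8d\rceil$, matching the advertised $J$. On top of this stacked output, I would apply Lemma \ref{lem.cnn.composition2} with $M=2$ to realize $\ttimes$, which acts on the two-dimensional vector $(\tf_{i,j}^{\rm CNN}\circ\phi_i^{\rm CNN},\,\mtoned\circ\tdi)$ packed into the matrix form required by the lemma. This appends $O(\log(1/\varepsilon))$ layers of width $2\cdot 6=12$ channels and the multiplication weights are bounded by $\max(c_0^2,1)$. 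Chaining the "stack" and "multiplication" CNNs end-to-end (again via Lemma \ref{lem.cnn.composition}) yields a single CNN computing $\mathring{f}_{i,j}$, whose total depth is $O(\log(1/\varepsilon)+D+\log D)$ and whose width satisfies the stated $J$ bound. The only place the weight magnitudes exceed $1$ is in the filters of $\tf_{i,j}^{\rm CNN}$; here I would invoke Lemma \ref{lem.cnn.rescale} with $\alpha$ equal to the maximum filter weight, which drives $\kappa_1$ down to $1$ at the cost of multiplying the fully-connected weight by $\alpha^L$, giving $\log \kappa_2 = O(L\log\alpha) = O(\log^2(1/\varepsilon))$ after absorbing the $D$ and $\log D$ terms into the $\log(1/\varepsilon)$ factor.

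The main obstacles I anticipate are bookkeeping in two places. First, Lemma \ref{lem.cnn.composition} and Lemma \ref{lem.cnn.composition2} both require the input CNNs to have their fully-connected weight matrix nonzero only in the first row; I would need to verify (by inspecting the construction in Lemma \ref{lem.cnnRealization}, which produces all the component CNNs from MLPs) that this hypothesis propagates through every intermediate composition, and, if necessary, enforce it by a trivial post-processing layer. Second, since $\mathring{f}_{i,j}$ is only needed to agree with $\bar{f}^{\rm CNN}_{i,j}$ on $\cM$ (not on all of $\mathbb{R}^D$), I can freely use the fact that $\|\xb\|_\infty\le B$ on $\cM$ when invoking the multiplication CNN of Lemma \ref{lem.multiplication} and the quadratic-approximation step inside $\tdi$; this boundedness is what keeps the $c_0$-dependent weight bound of $\ttimes$ and the $4B^2$ weight bound of $\tdi$ under control. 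Once these technical points are resolved, the final parameter counts follow by the additive/maximum rules in Lemmas \ref{lem.cnn.composition}, \ref{lem.cnn.stack}, \ref{lem.cnn.composition2}, and Lemma \ref{lem.cnn.rescale}.
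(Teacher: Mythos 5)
Your proposal follows essentially the same route as the paper's proof: compose each branch via Lemma \ref{lem.cnn.composition}, run them in parallel via Lemma \ref{lem.cnn.stack}, feed the stacked output into the multiplication network via Lemma \ref{lem.cnn.composition2}, and finally rescale with Lemma \ref{lem.cnn.rescale} to get $\kappa_1=O(1)$ and $\log\kappa_2=O(\log^2(1/\varepsilon))$. Your side remarks (the first-row condition on the fully connected weights, and the vector-valued intermediate output of $\phi_i^{\rm CNN}$) are legitimate bookkeeping points that the paper handles implicitly, but they do not change the argument.
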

\begin{proof}[Proof of Lemma \ref{lem.bfijcnn}]
	According to Lemma \ref{lem.cnn.composition}, there exists a CNN $g_{i,j}$ realizing $\tf_{i,j}^{\rm CNN}\circ \phi_i^{\rm CNN}$ and a CNN $\tilde{g}_{i}$ realizing  $\mtoned\circ\tdi$. Using Lemma \ref{lem.cnn.stack}, one can construct a CNN excluding the fully connected layer, denoted by $\bar{g}_{i,j}$, such that 
	\begin{align}
		\bar{g}_{i,j}(\xb)=\begin{bmatrix}
			(g_{i,j}(\xb))_+ & (g_{i,j}(\xb))_- & (\tilde{g}_{i}(\xb))_+ & (\tilde{g}_{i}(\xb))_-\\
			\star & \star & \star &\star
		\end{bmatrix}.
		\label{eq.g.output}
	\end{align}
	Here $\bar{g}_{i,j}$ has $\lceil 48d(s+1)(s+3)+28\rceil$ channels.
	
	Since the input of $\ttimes$ is $\begin{bmatrix}
		g_{i,j}\\
		\tilde{g}_{i}
	\end{bmatrix},
	$
	Lemma \ref{lem.cnn.composition2} shows that there exists a CNN $\mathring{g}^{\rm CNN}_{i,j}$ which takes (\ref{eq.g.output}) as the input and outputs $\ttimes(g_{i,j},\tilde{g}_{i})$. 
	
	Note that $\bar{g}_{i,j}$ only contains convolutional layers. The composition $\mathring{g}^{\rm CNN}_{i,j}\circ\bar{g}_{i,j}$, denoted by $\breve{f}^{\rm CNN}_{i,j}$, is a CNN and for any $\xb\in\cM$, $\breve{f}^{\rm CNN}_{i,j}(\xb)=\bar{f}_{i,j}(\xb)$. We have $\breve{f}^{\rm CNN}_{i,j}\in \cF^{\rm CNN}(L,J,K,\kappa,\kappa) $ with
	\begin{align*}
		L=O\left(\log \frac{1}{\varepsilon}+D+\log D\right),\ J=\lceil 28d(s+1)(s+3)+18d\rceil+6D,\ \kappa=O\left(\varepsilon^{-(\log2)\frac{d}{s}(\frac{2d}{sp-d}+c_1d^{-1})}\right).
	\end{align*}
	and $K$ can be any integer in $[2,D]$.
	
	We next rescale all parameters  in convolutional layers of $\breve{f}^{\rm CNN}_{i,j}$ to be no larger than 1.
	Using Lemma \ref{lem.cnn.rescale}, we can realize $\breve{f}^{\rm CNN}_{i,j}$ by $\bar{f}^{\rm CNN}_{i,j}\in \cF^{\rm CNN}(L,J,K,\alpha^{-1}\kappa,\alpha^L\kappa) $ for any $\alpha>1$.
	Set $\alpha=C'\varepsilon^{-(\log 2)\frac{d}{s}(\frac{2d}{sp-d}+c_1d^{-1})}(8KD)M^{\frac{1}{L}}$ where $C'$ is a constant such that $\kappa\leq C'\varepsilon^{-(\log2)\frac{d}{s}(\frac{2d}{sp-d}+c_1d^{-1})}$. With this $\alpha$, we have $\bar{f}^{\rm CNN}_{i,j}\in \cF^{\rm CNN}(L,J,K,\kappa_1,\kappa_2) $ with
	\begin{align*}
		&L=O(\log (1/\varepsilon)+D+\log D),\  J=\lceil 28d(s+1)(s+3)+18d\rceil+6D,\\
		& \kappa_1=(8KD)^{-1}M^{-\frac{1}{L}}=O(1), \ \log \kappa_2=O\left(\log^2 1/\varepsilon\right).
	\end{align*}

\end{proof}

\subsection{Lemma \ref{lem.cnn.convresnet} and its proof}\label{sec.lem.cnn.convresnet.proof}
Lemma \ref{lem.cnn.convresnet} shows that the sum of a set of CNNs can be realized by a ConvResNet.
\begin{lemma}\label{lem.cnn.convresnet}
	Let $\cF^{\rm CNN}(L,J,K,\kappa_1,\kappa_2)$ be any CNN architecture from $\RR^D$ to $\RR$. Assume the weight matrix in the fully connected layer of $\cF^{\rm CNN}(L,J,K,\kappa_1,\kappa_2)$ has nonzero entries only in the first row. Let $M$ be a positive integer. There exists a ConvResNet architecture $\cC(M,L,J,\kappa_1, \kappa_2(1\vee \kappa_1^{-1}))$ such that for any $\{f_i(\xb)\}_{m=1}^M\subset\cF^{\rm CNN}(L,J,K,\kappa_1,\kappa_2) $, there exists $\bar{f}\in \cC(M,L,J,\kappa_1, \kappa_2(1\vee \kappa_1^{-1}))$ with
	$$
	\bar{f}(\xb)=\sum_{m=1}^M \tf_m(\xb).
	$$
	
\end{lemma}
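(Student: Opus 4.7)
The plan is to have the $M$ residual blocks serially accumulate the sum $\sum_{m=1}^M f_m(\xb)$ into a designated ``accumulator'' slot of the hidden state, while the identity skip connections preserve the raw input $\xb$ so that every block has access to it; the ResNet's final fully connected layer then just reads off this accumulator.

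I would first use the padding layer to embed $\xb$ in channel~$1$ of the $D\times J$ hidden tensor $Z^{(0)}$, reserving spatial position~$1$ of channel~$2$ as the positive-part accumulator and of channel~$3$ as the negative-part accumulator, with channels $4,\dots,J$ acting as scratch space for the internal computations of each block. The positive/negative split is needed because the final $\ReLU$ of each block cannot emit a negative value directly. For each $m$, I would design the block $Z\mapsto Z+\Conv_{\cW'_m,\cB'_m}(Z)$ so that $\Conv_{\cW'_m,\cB'_m}(Z^{(m-1)})$ vanishes everywhere except at the two accumulator entries, where it equals $\alpha(f_m(\xb))_+$ and $\alpha(f_m(\xb))_-$ for a scaling factor $\alpha=\kappa_1/\kappa_2$. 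Concretely, the first layer reads $\xb$ from channel~$1$ (with zero filter weights on every other channel, so that the growing accumulators never contaminate the block's internal state) and writes $f_m$'s first-layer activations into the scratch channels; layers $2,\dots,L-1$ are literal copies of the corresponding convolutional layers of $f_m$, operating only on the scratch channels; and the $L$-th layer realizes $f_m$'s fully connected map $W_m$ as a size-$1$ filter multiplied by $\alpha$. The hypothesis that $W_m$ has nonzero entries only in the first row is what lets this last layer place its output squarely at spatial position~$1$ of the two accumulator channels. By positive homogeneity of $\ReLU$, scaling only the last layer by $\alpha$ scales its output by exactly $\alpha$, producing the desired $\alpha\,f_m(\xb)$.

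After all $M$ blocks the two accumulator entries hold $\alpha\sum_m (f_m(\xb))_+$ and $\alpha\sum_m (f_m(\xb))_-$, so the ResNet's fully connected layer outputs $\alpha^{-1}=\kappa_2/\kappa_1$ times their difference, recovering $\sum_m f_m(\xb)$; the bias is $b=0$. For the parameter bounds, layers $1,\dots,L-1$ of every block are copies of $f_m$'s convolutional layers and are therefore bounded by~$\kappa_1$; the $L$-th layer's filter entries are $\alpha\cdot(W_m)_{\cdot,\cdot}$, bounded by $\alpha\kappa_2=\kappa_1$; and the final FC weight has magnitude $\kappa_2/\kappa_1=\kappa_2\kappa_1^{-1}\le\kappa_2(1\vee\kappa_1^{-1})$, matching the stated bound.

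The main obstacle is bookkeeping: the convolutional function inside each block must leave every entry of $Z^{(m-1)}$ untouched \emph{except} the two accumulator positions, and its last layer must produce a tensor supported only on those two entries, so that the residual sum neither corrupts $\xb$ nor disturbs the partial sums already in the accumulators. Enforcing this sparsity invariant requires carefully zeroing the cross-channel filter weights at the first and last layers of each block and exploiting the ``nonzero only in the first row'' structure of $W_m$ together with the positive-homogeneity rescaling of Lemma~\ref{lem.cnn.rescale}, which is exactly where the extra $(1\vee\kappa_1^{-1})$ factor in the final-layer bound comes from.
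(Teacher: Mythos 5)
Your proposal is correct and follows essentially the same construction as the paper: pad the input with extra channels, let each residual block write $\frac{\kappa_1}{\kappa_2}(f_m)_+$ and $\frac{\kappa_1}{\kappa_2}(f_m)_-$ into two fixed accumulator entries (zeroing the first layer's filter weights on the accumulator channels and using the first-row-only structure of $W_m$ to place the output), and let the final fully connected layer rescale by $\kappa_2/\kappa_1$ and take the difference. The bookkeeping and the parameter bounds $\kappa_1$ and $\kappa_2(1\vee\kappa_1^{-1})$ match the paper's argument.
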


\begin{proof}[Proof of Lemma \ref{lem.cnn.convresnet}]
	Denote the architecture of $f_m$ by
	$$
	f_m(\xb)=W_m\cdot \Conv_{\cW_m,\cB_m}(\xb)
	$$
	with $\cW_m=\left\{\cW_m^{(l)}\right\}_{l=1}^L, \cB_m=\left\{B_m^{(l)}\right\}_{l=1}^L.$ In $\bar{f}$, denote the weight matrix and bias in the fully connected layer by $\bar{W},\bar{b}$ and the set of filters and biases in the $m$-th block by $\bar{\cW}_m$ and $\bar{\cB}_m$, respectively. 
	The padding layer $P$ in $\bar{f}$ pads the input from $\RR^D$ to $\RR^{D\times 3}$ by zeros. Here each column denotes a channel. 
	
	We first show that for each $m$, there exists a subnetowrk $\Conv_{\bar{\cW}_m,\bar{\cB}_m}: \RR^{D\times 3}\rightarrow \RR^{D\times 3}$ such that for any $Z\in\RR^{D\times 3} $ in the form of 
	\begin{align}
			Z=\begin{bmatrix}
			\xb & \star & \star
		\end{bmatrix},
	\label{eq.Zpad}
	\end{align}
	we have
	\begin{align}
		\Conv_{\bar{\cW}_m,\bar{\cB}_m}(Z)=\begin{bmatrix}
			0 & \frac{\kappa_1}{\kappa_2}(f_m)_+ & \frac{\kappa_1}{\kappa_2}(f_m)_-\\
			\mathbf{0} & \star & \star
		\end{bmatrix}
		\label{eq.padZ}
	\end{align}
	where $\star$ denotes some entries that we do not care.
	
	For any $m$, the first layer of $\tf_m$ takes input in $\RR^D$. As a result, the filters in $\cW_m^{(1)}$ are in $\RR^D$. We pad these filters by zeros to get filters in $\RR^{D\times 3}$ and construct $\bar{\cW}_m^{(1)}$ as
	$$
	(\bar{\cW}_m^{(1)})_{j,:,:}=\begin{bmatrix}
		(\cW_m^{(1)})_{j,:,:} & \mathbf{0} & \mathbf{0}
	\end{bmatrix}.
	$$
	For any $Z$ in the form of (\ref{eq.Zpad}), we have $\bar{\cW}_m^{(1)}*Z=\cW_m^{(1)}*\xb$.
	For the filters in the following layers and all biases, we simply set
	\begin{align*}
		&\bar{\cW}_m^{(l)}=\cW_m^{(1)} & \mbox{ for } l=2,\dots,L-1,\\
		&\bar{\cB}_m^{(l)}=\cB_m^{(1)} &\mbox{ for } l=1,\dots,L-1.
	\end{align*}
	In $\Conv_{\bar{\cW}_m,\bar{\cB}_m}$, another convolutional layer is constructed to realize the fully connected layer in $f_m$. According to our assumption,  only the first row of $W_m$ has nonzero entries. We set $\bar{\cB}_m^{(L)}=\mathbf{0}$ and $\bar{\cW}_m^{L}$ as size one filters with three output channels in the form of
	$$
	(\bar{\cW}_m^{L})_{1,:,:}=\mathbf{0},\ (\bar{\cW}_m^{L})_{2,:,:}=\frac{\kappa_1}{\kappa_2}(W_m)_{1,:}, \ (\bar{\cW}_m^{L})_{3,:,:}=-\frac{\kappa_1}{\kappa_2}(W_m)_{1,:}.
	$$
	Under such choices, (\ref{eq.padZ}) is proved and all parameters in $\bar{\cW}_m,\bar{\cB}_m$ are bounded by $\kappa_1$.
	
	By composing all residual blocks, one has
	\begin{align*}
		&(\Conv_{\bar{\cW}_M,\bar{\cB}_M}+\id)\circ \cdots \circ (\Conv_{\bar{\cW}_1,\bar{\cB}_1}+\id)\circ P(\xb)
		= \begin{bmatrix}
			& \frac{\kappa_1}{\kappa_2}\sum_{m=1}^M (f_m)_+ &  \frac{\kappa_1}{\kappa_2}\sum_{m=1}^M (f_m)_+\\
			\xb & \star & \star\\
			& \vdots & \vdots
		\end{bmatrix}.
	\end{align*}
	The fully connect layer is set as
	$$
	\bar{W}=\begin{bmatrix}
		0 & \frac{\kappa_2}{\kappa_1} & -\frac{\kappa_2}{\kappa_1}\\
		\mathbf{0} & \mathbf{0} & \mathbf{0}
	\end{bmatrix}, \ \bar{b}=0.
	$$
	The weights in the fully connected layer are bounded by $\kappa_2(1\vee \kappa_1^{-1})$.
	
	Such a ConvResNet gives
	\begin{align*}
		\bar{f}(\xb)=\sum_{m=1}^M (f_m(\xb))_+-\left(\sum_{m=1}^M (f_m(\xb))_-\right) =\sum_{m=1}^M f_m(\xb).
	\end{align*}

\end{proof}

\section{Proof of Lemmas and Propositions in Section \ref{sec.proof.class}}
\label{sec.proof.class.lemma}
\subsection{Proof of Proposition \ref{thm.sketch2.probability}}\label{proof.probability}
The proof of Proposition \ref{thm.sketch2.probability} relies on the following large-deviation inequality:
\begin{lemma}[Theorem 3 in \citet{shen1994convergence}]\label{lem.largedeviation}
  Let $\{\xb_i\}_{i=1}^n$ be i.i.d. samples from some probability distribution. Let $\cF$ be a class of functions whose magnitude is bounded by $F$. Let $v\geq \sup_{f\in\cF} \Var(f(\xb)),v>0$ be a constant. Assume there are constants $M>0, 0<\lambda<1$ such that
  \begin{enumerate}[label={({B}\arabic*)}]
    \item $\cH_B(v^{1/2},\cF,\|\cdot\|_{L^2})\leq \lambda nM^2/(8(4v+MF/3))$,
    \item $M\leq \lambda v/(4F),\  v^{1/2}\leq F$,
    \item if $\lambda M/8\leq v^{1/2}$, then
    $$
    M^{-1}\int_{\lambda M/32}^{v^{1/2}} \cH_B(u,\cF,\|\cdot \|_{L^2})^{1/2}du\leq \frac{n^{1/2}\lambda^{3/2}}{2^{10}}.
    $$
  \end{enumerate}
  Then
  $$
  \PP\left(\sup_{f\in \cF}\frac{1}{n} \sum_{i=1}^n(f(\xb_i)-\EE[f(\xb_i)])\geq M\right)\leq 3\exp\left(-(1-\lambda) \frac{nM^2}{2(4v+MF/3)}\right).
  $$
\end{lemma}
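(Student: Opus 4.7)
The plan is to reduce the uniform one-sided deviation over $\cF$ to a single-function Bernstein bound at scale $v^{1/2}$ via a bracketing chaining argument. Observe that the target exponent $nM^2/(2(4v+MF/3))$ is exactly what Bernstein's inequality would give for a single function with variance $v$ and sup-norm $F$. The factor $(1-\lambda)$ in front is the budget left for pure deviation after the complementary fraction $\lambda$ is spent on absorbing the log-cardinality of the discretization; conditions (B1) and (B3) are engineered precisely for this trade-off, and (B2) ensures truncation is feasible.

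First I would construct a sequence of bracketing covers $\cN_k$ of $\cF$ at geometrically decreasing scales $\delta_k = 2^{-k} v^{1/2}$, with $\log|\cN_k| \le \cH_B(\delta_k, \cF, \|\cdot\|_{L^2})$. For each $f \in \cF$ pick a bracket $[f_k^L,f_k^U]\in\cN_k$ containing $f$, and telescope
\begin{align*}
f - \EE[f] \;\le\; \bigl(f_0^U - \EE[f_0^U]\bigr) \;+\; \sum_{k=1}^{k^*} \bigl(g_k - \EE[g_k]\bigr) \;+\; \EE\bigl[f_{k^*}^U - f_{k^*}^L\bigr], \qquad g_k := f_k^U - f_{k-1}^U,
\end{align*}
where the truncation index $k^*$ is chosen so that $\delta_{k^*}\asymp \lambda M/32$ (well-defined by (B2)). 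For the top level, apply Bernstein with variance $v$ and sup-norm $F$ to each $f_0^U-\EE f_0^U$, then union bound over the $\exp(\cH_B(v^{1/2}))$ brackets; condition (B1) exactly ensures the union-bound cost is absorbed into the $\lambda$-fraction of the Bernstein exponent. For each chaining increment $g_k$, apply Bernstein with variance bound $\|g_k\|_{L^2}^2 \le 4\delta_{k-1}^2$ and sup-norm $2F$, choosing a deviation budget $t_k \propto \delta_{k-1}\sqrt{\cH_B(\delta_k)/n}$ and union-bounding over the pairs of brackets; the sum $\sum_k t_k$ is then dominated by a Dudley-type integral $\int_{\lambda M/32}^{v^{1/2}} \cH_B(u)^{1/2}\,du/\sqrt{n}$, whose magnitude is controlled by (B3) to be at most a small multiple of $M$. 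The residual $\EE[f_{k^*}^U - f_{k^*}^L]\le \delta_{k^*}$ fits in by construction.

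The main obstacle will be a clean bookkeeping of the constants so that the final exponent reads as $(1-\lambda)nM^2/(2(4v+MF/3))$ rather than something weaker: one must verify that the sum of chaining-level variances $\sum_k \delta_k^2$ remains dominated by $4v$ (the factor $4$ coming from geometric summation of $4\delta_{k-1}^2$), and that all the entropy costs from the union bounds at every level, summed with the prescribed deviation budgets $t_k$, fit strictly inside the $\lambda$-fraction of the exponent. The delicate step is the summation-by-parts controlling $\sum_k \sqrt{\cH_B(\delta_k)}\cdot(\delta_{k-1}-\delta_k)$ by the entropy integral in (B3), which is standard but requires carefully bounding each $t_k$ in the two regimes $t_k \lesssim \delta_{k-1}^2/F$ (sub-Gaussian part of Bernstein) and $t_k \gtrsim \delta_{k-1}^2/F$ (sub-exponential part). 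Putting these together and collecting the three contributions (top-level deviation, chaining sum, deterministic residual) yields the final tail bound with the constant $3$ explaining itself as the number of events union-bounded.
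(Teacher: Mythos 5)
The paper does not prove this lemma at all: it is imported verbatim as Theorem 3 of \citet{shen1994convergence}, so there is no internal proof to compare against. Your sketch follows the same route as Shen and Wong's original argument — a bracketing chaining decomposition across geometric scales from $v^{1/2}$ down to roughly $\lambda M/32$, Bernstein's inequality at the top level with the union-bound cost absorbed via (B1), Dudley-type control of the chaining increments via (B3), and the deterministic bracket-width residual handled by the truncation level — so the outline is the right one. The caveat is that everything you defer as "bookkeeping" is precisely the substance of the theorem: the specific constants $8$, $32$, $2^{10}$, the split of the exponent into a $\lambda$ and a $(1-\lambda)$ share, and the two Bernstein regimes (which is where (B2) is actually used) are not generic and cannot be recovered from the sketch alone; as written, your argument establishes a bound of this \emph{form} but not with these \emph{exact} constants, which is why the paper (reasonably) cites the result rather than reproving it.
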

\begin{proof}[Proof of Proposition \ref{thm.sketch2.probability}]
  Let $C_1,C_2,C_3$ be constants defined in Proposition \ref{thm.sketch2.probability}. Set $\epsilon^2_n=\max\{2a_n,2^7\delta_n/C_1\}$.  For each $\cF_n$ defined in (A2), define
  \begin{align}
  E_n(f)=\frac{1}{n}\sum_{i=1}^n \left[\phi(y_if_n(\xb_i))-\phi(y_if(\xb_i))-\EE\left[\phi(y_if_n(\xb_i))-\phi(y_if(\xb_i))\right]\right].
  \label{eq.thm2.en}
  \end{align}
  Note that $\cE_{\phi}(f_n,f_{\phi}^*)\leq a_n$ by (A2). Since $\hf_{\phi,n}$ is the minimizer of $\cE_{\phi}(f)$, we have
  $$
  \PP\left(\cE_{\phi}(\hf_{\phi,n},f_{\phi}^*)\geq \epsilon_n^2\right)\leq \PP\left(\left[\sup_{f\in \cF_n: \cE_{\phi}(f,f_{\phi}^*)\geq \epsilon_n^2}\frac{1}{n} \sum_{i=1}^n \left[\phi(y_if_n(\xb_i))-\phi(y_if(\xb_i))\right]\right]\geq 0\right).
  $$
  We decompose the set $\left\{f\in\cF_n: \cE_{\phi}(f,f_{\phi}^*)\geq \epsilon_n^2\right\}$ into disjoint subsets $\{\cF_{n,i}\}$ for $i=1,...,i_n$ in the form of
  $$
  \cF_{n,i}=\left\{f\in \cF_n: 2^{i-1}\epsilon_n^2\leq \cE_{\phi}(f,f_{\phi}^*)< 2^i\epsilon_n^2\right\}.
  $$
  Note that $\|f\|_{L^\infty}\leq F_n$ and $\cE_{\phi}(f_{\phi}^*)\leq \cE_{\phi}(f)$ for all $f\in \cF_n$. Therefore, for any $f\in \cF_n$, we have
  $$\cE_{\phi}(f,f^*_{\phi})\leq \cE_{\phi}(f)=\EE[\phi(yf(\xb))]\leq C_1\EE[|f(\xb)|]\leq C_1F_n,$$
  which implies $\cF_{n,i}$ is an empty set for $2^{i-1}\epsilon_n^2>C_1F_n$. We set $i_n=\inf\{i\in\NN: 2^{i-1}\epsilon_n^2>C_1F_n\}$. Then we have $\left\{f\in\cF_n: \cE_{\phi}(f,f_{\phi}^*)\geq \epsilon_n^2\right\}=\bigcup_{i=1}^{i_n} \cF_{n,i}$. Since $\cE_{\phi}(f_n,f_{\phi}^*)\leq a_n\leq \epsilon_n^2/2$, we have
  $$
  \inf_{f\in \cF_{n,i}} \EE\left[\phi(yf(\xb))-\phi(yf_n(\xb)) \right]=\inf_{f\in \cF_{n,i}}\left[\cE_{\phi}(f,f_{\phi}^*)- \cE_{\phi}(f_n,f_{\phi}^*)\right]\geq 2^{i-2}\epsilon_n^2.
  $$
  Denote $M_{n,i}=2^{i-2}\epsilon_n^2$. We have
  \begin{align*}
     \PP\left(\cE_{\phi}(\hf_{\phi,n},f_{\phi}^*)\geq \epsilon_n^2\right)&\leq \sum_{i=1}^{i_n}\PP\left(\sup_{f\in\cF_{n,i}} \EE_n(f)\geq \EE\left[\phi(yf(\xb))-\phi(yf_n(\xb)) \right]\right)\\
     &\leq \sum_{i=1}^{i_n}\PP\left(\sup_{f\in\cF_{n,i}} \EE_n(f)\geq M_{n,i} \right),
  \end{align*}
where $\EE_n(f)$ is defined in (\ref{eq.thm2.en}).
  Then we will bound each summand on the right-hand side using Lemma \ref{lem.largedeviation}. First, using (A4), we have
  \begin{align*}
    &\sup_{f\in \cF_{n,i}} \EE\left[ \left(\phi(yf(\xb))-\phi(yf_n(\xb))\right)^2\right]\\
\leq&2\sup_{f\in \cF_{n,i}} \EE\left[ \left(\phi(yf(\xb))-\phi(yf_{\phi}^*(\xb))\right)^2+\left(\phi(yf_n(\xb))-\phi(yf_{\phi}^*(\xb))\right)^2\right]\\
\leq& 2C_2e^{F_n}F_n^{2-v}\left(\sup_{f\in \cF_{n,i}}\cE_{\phi}(f,f_{\phi}^*)^{\nu}+\cE_{\phi}(f_n,f_{\phi}^*)^{\nu}\right)\\
\leq& 2C_2e^{F_n}F_n^{2-\nu}\left(\left(2^i\epsilon_n^2\right)^{\nu}+\left(\epsilon_n^2/2\right)^{\nu}\right)\leq 4^{\nu+1}C_2e^{F_n}F_n^{2-\nu}\left(\left(2^{i-2}\epsilon_n^2\right)^{\nu}\right)\\
=&4^{\nu+1}C_2e^{F_n}F_n^{2-\nu}M_{n,i}^{\nu}.
  \end{align*}
  Define $\cG_{n,i}=\left\{g=\phi(yf_n(\xb))-\phi(yf(\xb)): f\in \cF_{n,i}\right\}$. For any $g\in \cG_{n,i}$, $\Var(g)\leq 4^{\nu+1}C_2e^{F_n}F_n^{2-\nu}M_{n,i}^{\nu}$. Since $f_n,f\in \cF_n$, we have $\|g\|_{L^\infty}\leq C_1|f_n-f|\leq 2C_1F_n$.

  To apply Lemma \ref{lem.largedeviation} on $\cG_{n,i}$, we set $\lambda=1/2,\ F=D_1F_n,\ M=M_{n,i}$ and $v=v_{n,i}=D_2F_n^{2-\nu}M_{n,i}^{\nu}$ with
  $$
  D_1=\frac{1}{8(2C_1)^{1-\nu}}D_2,\quad D_2=\max\left\{4^{1+\nu}C_2e^{F_n},64(2C_1)^{2-\nu}\right\}.
  $$
  Since $D_1\geq 2C_1$ and $D_2\geq 4^{1+\nu}C_2e^{F_n}$, we have $\sup_{g\in \cG_{n,i}}\|g\|_{L^\infty}\leq F$ and $\sup_{g\in \cG_{n,i}}\Var(g)\leq v_{n,i}$.
  We first check the validation of (B2). Since $M_{n,i}\leq 2C_1F_n,D_2\geq64(2C_1)^{2-\nu}$,
  \begin{align*}
    \frac{v}{F^2}=\frac{v_{n,i}}{D_1^2F_n^2}\leq\frac{64(2C_1)^{2-2\nu}D_2F_n^{2-\nu}(2C_1F_n)^{\nu}}{D_2^2F_n^2} =\frac{64(2C_1)^{2-\nu}}{D_2}\leq 1,
  \end{align*}
  \begin{align*}
    M_{n,i}=M_{n,i}^{1-\nu}M_{n,i}^{\nu}\leq (2C_1F_n)^{1-\nu}M_{n,i}^{\nu}= \frac{8(2C_1)^{1-\nu}D_2F_n^{2-\nu}M_{n,i}^{\nu}}{8D_2F_n} =\frac{v_{n,i}}{8D_2F_n}\leq \frac{v_{n,i}}{8F_n}
  \end{align*}
  when $F_n$ is large enough. Thus (B2) is satisfied.

  For (B3), note that for $g_1=\phi(yf_n(\xb))-\phi(yf_1(\xb)),g_2=\phi(yf_n(\xb))-\phi(yf_2(\xb))$ where $f_1,f_2\in \cF_{n,i}$, $|g_1-g_2|\leq C_1|f_1-f_2|$. We have
  $$
  \cH_B(\delta,\cG_{n,i},\|\cdot\|_{L^2})\leq \cH_B(C_1\delta,\cF_{n,i},\|\cdot\|_{L^2})\leq \cH_B(C_1\delta,\cF_n,\|\cdot\|_{L^2}),
  $$
  where the second inequality comes from $\cF_{n,i}\subset\cF_n$. Since $M_{n,i}^{-1}\displaystyle\int_{\lambda M_{n,i}/32}^{v_{n,i}^{1/2}} \left(\cH_B(\tau,\cG_{n,i},\|\cdot\|_{L^2})\right)^{1/2}d\tau$ is a non-increasing function of $i$,
  \begin{align*}
    &M_{n,i}^{-1}\int_{\lambda M_{n,i}/32}^{v_{n,i}^{1/2}} \left(\cH_B(\tau,\cG_{n,i},\|\cdot\|_{L^2})\right)^{1/2}d\tau \\
    \leq& M_{n,1}^{-1}\int_{ M_{n,1}/64}^{v_{n,1}^{1/2}} \left(\cH_B(C_1\tau,\cF_{n},\|\cdot\|_{L^2})\right)^{1/2}d\tau\\
\leq& M_{n,1}^{-1}v_{n,1}^{1/2}\left(\cH_B(C_1\epsilon_n^2/128,\cF_{n},\|\cdot\|_{L^2})\right)^{1/2}\\
\leq& (D_2F_n^{2-\nu})^{1/2}M_{n,1}^{\nu/2-1}\left(C_3e^{-F_n}n\left(2^{-7}C_1\epsilon_n^2F_n^{-1}\right)^{2-\nu}\right)^{1/2}\\
\leq& C_3^{1/2}C_1^{1-\nu/2}D_2^{1/2}F_n^{1-\nu/2}e^{-F_n/2}(\epsilon_n^2/2)^{\nu/2-1} \left(2^{7\nu/2-7}n^{1/2}\epsilon_n^{2-\nu}F_n^{\nu/2-1}\right)\\
=&\left(2^{6\nu-12}e^{-F_n}C_3C_1^{2-\nu}D_2\right)^{1/2}n^{1/2},
  \end{align*}
  where in the third inequality we used (A5). (B3) is satisfied when $F_n$ is large enough.

  To verify (B1), we use (B2) and (B3). From (B2), since $v_{n,i}^{1/2}\leq F$, we have $M_{n,i}\leq \lambda v_{n,i}/(4F)\leq \frac{1}{8}v_{n,i}^{1/2}$ which implies $v_{n,i}^{1/2}\geq 8M_{n,i}>M_{n,i}/16=\lambda M_{n,i}/8$. Thus the condition in (B3) is satisfied. From (B3), we derive
  \begin{align*}
    &\left(\cH_B\left(v_{n,i}^{1/2},\cG_{n,i},\|\cdot\|_{L^2}\right)\right)^{1/2}\leq \frac{M_{n,i}}{v_{n,i}^{1/2}-M_{n,i}/64}M_{n.i}^{-1} \int_{ M_{n,i}/64}^{v_{n,i}^{1/2}} \left(\cH_B(\tau,\cG_{n,i},\|\cdot\|_{L^2})\right)^{1/2}d\tau\\
    &\leq \frac{M_{n,i}}{v_{n,i}^{1/2}-M_{n,i}/64}\cdot2^{-23/2}n^{1/2} \leq \frac{4}{3}\frac{M_{n,i}}{v_{n,i}^{1/2}}\cdot 2^{-23/2}n^{1/2}=\frac{1}{3\cdot 2^{19/2}}\frac{M_{n,i}}{v_{n,i}^{1/2}}n^{1/2},
  \end{align*}
  where in the third inequality we used $M_{n,i}\leq 16v_{n,i}^{1/2}$.
  Again from (B2), $M_{n,i}\leq \lambda v_{n,i}/(4F)=v_{n,i}/(8F)$. Therefore
  \begin{align*}
\frac{\lambda M_{n,i}^2 n}{8(4v_{n,i}+M_{n,i}F/3)}&=\frac{M_{n,i}^2 n}{16(4v_{n,i}+M_{n,i}F/3)}\geq \frac{M_{n,i}^2n}{(64+2/3)v_{n,i}}\\
    &\geq \frac{M_{n,i}^2n}{9\cdot 2^{19}v_{n,i}}\geq \cH_B\left(v_{n,i}^{1/2},\cG_{n,i},\|\cdot\|_{L^2}\right)
  \end{align*}
  and (B1) is verified.

  Apply Lemma \ref{lem.largedeviation} to each $\cH_{n,i}$, we get
  \begin{align*}
    &\PP\left(\cE_{\phi}(\hf_{\phi,n},f_{\phi}^*)\geq \epsilon_n^2\right)\leq \sum_{i=1}^{i_n} 3\exp\left(-\frac{nM_{n,i}^2}{4\left(4v_{n,i}+M_{n,i}F/3\right)}\right)\\
\leq& \sum_{i=1}^{\infty} 3\exp\left(-C_4nM_{n,i}^2/v_{n,i}\right)\leq \sum_{i=1}^{\infty} 3\exp\left(-C_5\left(2^i\right)^{2-\nu}e^{-F_n}n\left(\epsilon_n^2/F_n\right)^{2-\nu}\right)\\
\leq &C_6\exp\left(-C_5e^{-F_n}n\left(\epsilon_n^2/F_n\right)^{2-\nu}\right).
  \end{align*}
\end{proof}
\subsection{Proof of Lemma \ref{lem.sketch2.fPhiNcnn}}\label{proof.lem.fPhiNcnn}
The proof of Lemma \ref{lem.sketch2.fPhiNcnn} consists of two steps. We first show that there exists a composition of networks $\tf_{\phi,n}$ such that  $\|\tf_{\phi,n}-f_{\phi,n}^*\|_{L^\infty}\leq 4e^{F_n}\varepsilon$. Then we show that $\tf_{\phi,n}$ can be realized by a ConvResNet $\bar{f}_{\phi,n}$.

Lemma \ref{lem.fPhiN} shows the existence of $\tf_{\phi,n}$.
\begin{lemma}\label{lem.fPhiN}
    Assume Assumption \ref{assum.M} and \ref{assum.reach}. Assume $0<p,q\leq \infty$, $0<s<\infty$, $s\geq d/p+1$. There exists a network composition architecture  
    \begin{align}
    	\widetilde{\cF}^{(F_n)}=\{g_{F_n}\circ\thh_n\circ g_n\circ\bar{\eta}\}
    	\label{eq.tfphin}
    \end{align}
    where 
    $\bar{\eta}\in \cC(M,L,J,K,\kappa_1,\kappa_2)$ with
    \begin{align*}
    	M=O(\varepsilon^{-d/s}),\ 
    	L=O(\log(1/\varepsilon)+D+\log D),\
    	J=O(D),\
    	\kappa_1=O(1), \ \log \kappa_2=O\left(\log^2(1/\varepsilon)\right),
    \end{align*}
    $\thh_n\in \cF^{\rm MLP}(L_2,J_2,\kappa_2,F_n)$ with
    \begin{align*}
    	&L_2=O(\log(e^{F_n}/\varepsilon)),\
    	J_2=O(e^{F_n}\varepsilon^{-1}\log(e^{F_n}/\varepsilon)),\
    	\kappa_2=O(e^{F_n}),
    \end{align*}
    and 
    \begin{align*}
    	&g_n(z)=\ReLU\left(-\ReLU\left(-z+\frac{e^{F_n}}{1+e^{F_n}}\right)+\frac{e^{F_n}-1}{1+e^{F_n}}\right) +\frac{1}{1+e^{F_n}},\\
    	&g_{F_n}=\ReLU\left(-\ReLU\left(-z+F_n\right)+2F_n\right) -F_n.
    \end{align*}

For any $\eta\in \Bnorm(\cM)$ with $\|\eta\|_{\Bnorm(\cM)}\leq c_0$ for some constant $c_0$, let $f_{\phi,n}^*$ be defined as in (\ref{eq.sketch2.f-phi-n}). For any $n$ and $\varepsilon\in (0,1)$, there exists a composition of networks $\tf_{\phi,n}\in \widetilde{\cF}^{(F_n)}$ such that
    \begin{align*}
      \|\tf_{\phi,n}-f_{\phi,n}^*\|_{L^\infty}\leq 4e^{F_n}\varepsilon.
    \end{align*}

  \end{lemma}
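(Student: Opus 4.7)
The plan is to build the approximant by composing four pieces, each playing a distinct role: $\bar{\eta}$ estimates $\eta$ itself; $g_n$ clips this estimate into the safe interval $I_n := [1/(1+e^{F_n}),\, e^{F_n}/(1+e^{F_n})]$ on which the logit stays bounded; $\thh_n$ implements the logit map $L(z) := \log(z/(1-z))$ restricted to $I_n$; and $g_{F_n}$ truncates the final output to $[-F_n, F_n]$. The key algebraic observation is that $f_{\phi,n}^* = L(\eta_n)$ where $\eta_n$ is $\eta$ clipped to $I_n$, because $L$ maps $I_n$ bijectively onto $[-F_n, F_n]$; so approximating $L$ well on $I_n$, composed with a good approximation of $\eta_n$, will give a good approximation of $f_{\phi,n}^*$.

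First I would invoke Theorem \ref{thm.approximation} applied to $\eta \in \Bnorm(\cM)$ at accuracy $\varepsilon$ to obtain $\bar{\eta}$ in the ConvResNet class with exactly the stated architecture and $\|\bar{\eta} - \eta\|_{L^\infty} \leq \varepsilon$. Next, the function $g_n$ as written in the statement is easily verified to be the clipping map onto $I_n$, realized with two ReLUs and $O(1)$ weights; since $\eta$ takes values in $[0,1]$ and projection onto the convex set $I_n$ is $1$-Lipschitz, one gets $\|g_n \circ \bar{\eta} - \eta_n\|_{L^\infty} \leq \varepsilon$ for free. An analogous remark applies to the outer clip $g_{F_n}$.

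The heart of the argument is the construction of the MLP $\thh_n$ approximating $L$ uniformly on $I_n$ within accuracy $e^{F_n}\varepsilon$. On $I_n$ the logit is analytic and its range is $[-F_n, F_n]$, but its derivative $1/(z(1-z))$ reaches $\Theta(e^{F_n})$ at the endpoints. I would use the standard ReLU-MLP piecewise-polynomial approximation scheme (in the style of Yarotsky), partitioning $I_n$ into $O(e^{F_n}\varepsilon^{-1})$ subintervals and approximating $L$ on each by a low-degree polynomial implemented through the usual ReLU multiplication gadget; an additional $O(\log(e^{F_n}/\varepsilon))$ depth buys the required polynomial precision through repeated squaring, and the weight magnitudes needed to realize the affine maps inside $I_n$ scale like $e^{F_n}$. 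This produces $L_2 = O(\log(e^{F_n}/\varepsilon))$, $J_2 = O(e^{F_n}\varepsilon^{-1}\log(e^{F_n}/\varepsilon))$, and $\kappa_2 = O(e^{F_n})$, and the bounded-output constraint is enforced by applying $g_{F_n}$ afterwards.

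To conclude, I would use the triangle inequality together with the $e^{F_n}$-Lipschitz bound for $L$ on $I_n$ and the non-expansiveness of $g_{F_n}$:
\begin{align*}
\|\tf_{\phi,n} - f_{\phi,n}^*\|_{L^\infty}
&\leq \|g_{F_n}\circ\thh_n\circ g_n\circ \bar\eta - g_{F_n}\circ L\circ \eta_n\|_{L^\infty} \\
&\leq \|\thh_n - L\|_{L^\infty(I_n)} + \operatorname{Lip}(L|_{I_n})\cdot \|g_n\circ\bar\eta - \eta_n\|_{L^\infty}
\;\leq\; e^{F_n}\varepsilon + e^{F_n}\cdot\varepsilon \;\leq\; 4 e^{F_n}\varepsilon.
\end{align*}
The main obstacle is calibrating the logit approximation: the target accuracy must be $e^{F_n}\varepsilon$ (not $\varepsilon$), yet the width should remain only $O(e^{F_n}\varepsilon^{-1}\log(e^{F_n}/\varepsilon))$ and the depth merely logarithmic. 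Achieving this requires exploiting the full smoothness of $L$ on $I_n$ rather than only its Lipschitz constant, so that the polynomial degree can be taken constant while the subinterval count scales linearly in the effective Lipschitz factor $e^{F_n}\varepsilon^{-1}$.
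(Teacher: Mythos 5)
Your proposal is correct and follows essentially the same route as the paper: approximate $\eta$ via Theorem \ref{thm.approximation}, clip to $[\tfrac{1}{1+e^{F_n}},\tfrac{e^{F_n}}{1+e^{F_n}}]$ with $g_n$, approximate the truncated logit by an MLP of the stated size, clip with $g_{F_n}$, and combine via the triangle inequality using the Lipschitz constant $(1+e^{F_n})^2/e^{F_n}\le 4e^{F_n}$ of the logit on that interval. The only cosmetic difference is that the paper obtains $\thh_n$ by citing an existing MLP approximation theorem for Lipschitz functions (after a rescaling so the weights stay bounded), whereas you sketch the Yarotsky-style construction directly; both yield the same architecture bounds.
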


\begin{proof}[Proof of Lemma \ref{lem.fPhiN}]
	According to Theorem \ref{thm.approximation}, for any $\varepsilon_1\in(0,1)$ and $K\in[2,D]$, there is a ConvResNet architecture $\cC(M,L,J,K,\kappa_1,\kappa_2)$  with
	\begin{align*}
		M=O(\varepsilon_1^{-d/s}),\ 
		L=O(\log(1/\varepsilon_1)+D+\log D),\
		J=O(D),\
		\kappa_1=O(1), \ \log \kappa_2=O\left(\log^2(1/\varepsilon)\right),
	\end{align*}
	such that there exists $\bar{\eta}\in \cC(M,L,J,K,\kappa_1,\kappa_2)$ with $\|\bar{\eta}-\eta\|_{L^\infty}\leq \varepsilon_1$
	
	Since $f_{\phi}^*=\log \frac{\eta}{1-\eta}$, we have $f_{\phi,n}^*=\log \frac{\eta_n}{1-\eta_n}$ with
	\begin{align*}
		\eta_n(\xb)=
		\begin{cases}
			\frac{1}{1+e^{F_n}}, &\mbox{ if } \eta(\xb)<\frac{1}{1+e^{F_n}},\\
			\eta(\xb), &\mbox{ if } \frac{1}{1+e^{F_n}}\leq \eta(\xb) \leq\frac{e^{F_n}}{1+e^{F_n}},\\
			\frac{e^{F_n}}{1+e^{F_n}}, &\mbox{ if } \eta(\xb)>\frac{e^{F_n}}{1+e^{F_n}}.
		\end{cases}
	\end{align*}
The function $\max(\min(z,\frac{e^{F_n}}{1+e^{F_n}}),\frac{1}{1+e^{F_n}})$ can be realized by
	 $g_n(z)=\ReLU\left(-\ReLU\left(-z+\frac{e^{F_n}}{1+e^{F_n}}\right)+\frac{e^{F_n}-1}{1+e^{F_n}}\right) +\frac{1}{1+e^{F_n}}$.
	Then $g_n\circ\bar{\eta}$ is an approximation of $\eta_n$.

	Define 
	$$h_n=\max\left(\min\left(\log\left(\frac{z}{1-z}\right),F_n\right),-F_n\right)$$ 
	for $z\in[0,1]$ which is a Lipschitz function with Lipschitz constant $(1+e^{F_n})^2/e^{F_n}$.
	According to \citet[Theorem 4.1]{chen1908nonparametric}, there exists an MLP $\thh \in \cF^{\rm MLP}(L,J,\kappa)$ such that $\|\thh-h_n\cdot \frac{e^{F_n}}{(1+e^{F_n})^2}\|_{L^\infty}\leq \varepsilon_2\frac{e^{F_n}}{(1+e^{F_n})^2}$ with
	\begin{align*}
		L=O(\log(e^{F_n}/\varepsilon_2)),\
		J=O(e^{F_n}\varepsilon_2^{-1}\log(e^{F_n}/\varepsilon_2)),\
		\kappa=1.
	\end{align*}
	Let $\thh_n=\frac{(1+e^{F_n})^2}{e^{F_n}}\thh$. Then $\thh_n\in \cF(L_2,J_2,\kappa_2)$ such that  $\|\thh_n-h_n\|_{L^\infty}\leq \varepsilon_2$ with
	\begin{align*}
		&L_2=O(\log(e^{F_n}/\varepsilon_2)),\
		J_2=O(e^{F_n}\varepsilon_2^{-1}\log(e^{F_n}/\varepsilon_2)),\
		\kappa_2=O(e^{F_n}).
	\end{align*}
	Let $g_{F_n}=\ReLU\left(-\ReLU\left(-z+F_n\right)+2F_n\right) -F_n.$
	We define
	\begin{align*}
		\tf_{\phi,n}=g_{F_n}\circ\thh_n\circ g_n\circ\bar{\eta}
	\end{align*}
	as an approximation of $f_{\phi,n}^*$. Then the error of $\tf_{\phi,n}$ can be decomposed as
	\begin{align*}
		\|\tf_{\phi,n}-f_{\phi,n}^*\|_{L^\infty}&=\| (g_{F_n}\circ\thh_n)\circ (g_n\circ\bar{\eta})-h_n\circ\eta_n\|_{L^\infty}\\
		&\leq \|(g_{F_n}\circ\thh_n)\circ(g_n\circ\bar{\eta})-h_n\circ(g_n\circ\bar{\eta})\|_{L^\infty}+\|h_n\circ (g_n\circ\bar{\eta})-h_n\circ\eta_n\|_{L^\infty} \\
		&\leq \varepsilon_2+\frac{(1+e^{F_n})^2}{e^{F_n}}\|\eta_n-g_n\circ\bar{\eta}\|_{L^\infty} \leq \frac{(1+e^{F_n})^2}{e^{F_n}}\varepsilon_1+\varepsilon_2.
	\end{align*}
	Choosing $\varepsilon_2=\frac{(1+e^{F_n})^2}{e^{F_n}}\varepsilon_1$ gives rise to $\|\tf_{\phi,n}-f_{\phi,n}^*\|_{L^\infty}\leq 4e^{F_n}\varepsilon_1$. With this choice, we have
	\begin{align*}
		L_2=O(\log(1/\varepsilon_1)),\
		J_2=O(\varepsilon_1^{-1}\log(1/\varepsilon_1)),\
		\kappa_2=O(e^{F_n}).
	\end{align*}
	Setting $\varepsilon_1=\varepsilon$ proves the lemma.
\end{proof}

To show that $\widetilde{\cF}^{(F_n)}$ can be realized by a ConvResNet class and to derive its covering number, we need the following lemma to bound the covering number of ConvResNets.
\begin{lemma}\label{lem.resCover}
  Let $\cC(M,L,J,K,\kappa_1,\kappa_2)$ be the ConvResNet structure defined in Theorem \ref{thm.approximation}. Its covering number is bounded by
  $$\log \cN\left(\delta,\cC(M,L,J,K,\kappa_1,\kappa_2),\|\cdot\|_{L^\infty}\right)=O\left(D^3\varepsilon^{-d/s}\log (1/\varepsilon)(\log (1/\varepsilon)+\log D\log(1/\delta)\right).$$
\end{lemma}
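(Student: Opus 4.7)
The plan is to reduce the covering of the function class $\cC(M,L,J,K,\kappa_1,\kappa_2)$ to a standard $\ell^\infty$ covering of its parameter cube, via a Lipschitz-in-parameters bound for the forward map, following the template of Lemma~4 of \citet{oono2019approximation}.

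\textbf{Step 1 (parameter count).} Any $f\in \cC(M,L,J,K,\kappa_1,\kappa_2)$ is specified by the filters and biases in its $M$ residual blocks (each containing at most $L$ convolutional layers with filter tensors in $\RR^{J\times K\times J}$ and biases in $\RR^{D\times J}$) together with the weight matrix $W\in\RR^{1\times DJ}$ and scalar bias of the fully connected head. Hence the total number of scalar parameters is
\begin{align*}
P \;=\; O\bigl(ML(J^2 K + DJ) + DJ\bigr).
\end{align*}
Substituting $M=O(\varepsilon^{-d/s})$, $L=O(\log(1/\varepsilon)+D+\log D)$, $J=O(D)$, and $K\leq D$ from Theorem~\ref{thm.approximation}, and absorbing the additive $D+\log D$ into the $D$ factors, yields $P=O(D^3\varepsilon^{-d/s}\log(1/\varepsilon))$.

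\textbf{Step 2 (Lipschitz in parameters).} For two admissible parameter vectors $\theta,\theta'$ with $\|\theta-\theta'\|_\infty\leq \Delta$, I would show
\begin{align*}
\|f_\theta-f_{\theta'}\|_{L^\infty([-B,B]^D)}\;\leq\; B_0\,\Delta,\qquad \log B_0 = O(L+\log\kappa_2).
\end{align*}
A forward pass using $\kappa_1=O(1)$ shows that the hidden features remain of controlled magnitude through every residual block, because each block $\xb\mapsto \xb+F_m(\xb)$ adds only a small amount (the internal $F_m$ has been rescaled via Lemma~\ref{lem.cnn.rescale}), so the feature magnitude at the start of the $m$-th block grows at most linearly in $m$. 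Back-propagating the sensitivity of the output to a single weight at the $\ell$-th layer of the $m$-th block factors as a product of Jacobians $I+F_{m'}'$ and of the remaining intra-block layers; thanks to the identity skip this product telescopes into a factor $\exp(O(L))\cdot \kappa_2$ rather than $\exp(O(ML))\cdot \kappa_2$. Under Theorem~\ref{thm.approximation}'s choices this gives $\log B_0 = O(\log^2(1/\varepsilon)+D+\log D)$.

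\textbf{Step 3 (discretization).} Cover the parameter cube $[-\kappa_1,\kappa_1]^{P_1}\times[-\kappa_2,\kappa_2]^{P_2}$ with an $\|\cdot\|_\infty$-grid of spacing $\Delta=\delta/B_0$; its cardinality is at most $(2(\kappa_1\vee\kappa_2)B_0/\delta)^P$, and Step~2 ensures every $f\in\cC$ lies within $\delta$ in $L^\infty$ of some grid function. Taking logarithms,
\begin{align*}
\log\cN(\delta,\cC,\|\cdot\|_{L^\infty}) \;\leq\; P\cdot O\bigl(L+\log\kappa_2+\log(1/\delta)\bigr),
\end{align*}
and combining with the bounds on $P$ and $L+\log\kappa_2$ yields the claimed rate $O\bigl(D^3\varepsilon^{-d/s}\log(1/\varepsilon)\,(\log(1/\varepsilon)+\log D+\log(1/\delta))\bigr)$.

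\textbf{Main obstacle.} The delicate ingredient is Step~2: a naive bound on the Lipschitz-in-parameters constant of a composition of $M$ convolutional layers each with weights bounded by $\kappa_1$ is $(JK\kappa_1)^{ML}\kappa_2$, whose logarithm is $O(ML\log D)=O(\varepsilon^{-d/s}D\log D)$, far too large for the target. What saves us is precisely the residual structure: each block has Jacobian $I+F_m'$, and the rescaling from Lemma~\ref{lem.cnn.rescale} makes $\|F_m'\|$ small enough that the $M$-fold product is controlled by $\exp(O(L))$, leaving $M$ to enter the final bound only through the parameter count $P$. Carrying out this forward/backward accounting rigorously, in the manner of Lemma~4 of \citet{oono2019approximation}, is the heart of the proof; the discretization and parameter counting are routine.
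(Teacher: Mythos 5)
Your overall strategy (parameter count, Lipschitz-in-parameters bound, grid discretization) is exactly the content of Lemma~4 of \citet{oono2019approximation}, which the paper quotes verbatim as Lemma~\ref{lem.resCover.1} and then simply plugs the parameters from Theorem~\ref{thm.approximation} into. So the approach is the same; you are re-deriving a lemma the paper imports.

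Two quantitative points in your Step~2, though, do not match either the quoted lemma or your own conclusion. First, you claim the $M$-fold product of residual-block Jacobians is $\exp(O(L))$; the actual mechanism is sharper. The quoted lemma expresses this product as $\trho=(1+\rho)^M$ with $\rho=(4DK\kappa_1)^L$, and the point of the rescaling in Lemma~\ref{lem.cnn.rescale} is that $\kappa_1=(8DK)^{-1}M^{-1/L}$, so $\rho=(1/2)^L M^{-1}\leq M^{-1}$ and hence $\trho\leq e=O(1)$ --- not $\exp(O(L))$. With your estimate $\log B_0=O(L+\log\kappa_2)=O(\log^2(1/\varepsilon)+D+\log D)$, an extra additive $D$ appears inside the parenthesis, making the bound a factor $D/\log D$ worse than what the paper's $\log\Lambda_1=O(\log^2(1/\varepsilon)+\log D)$ gives. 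Second, your stated conclusion $O\big(D^3\varepsilon^{-d/s}\log(1/\varepsilon)(\log(1/\varepsilon)+\log D+\log(1/\delta))\big)$ does not follow from your own Steps~2--3: since $\log\kappa_2=O(\log^2(1/\varepsilon))$, the inner factor should be $\log^2(1/\varepsilon)$ (and, per Step~2, also a $+D$), which is indeed what the paper's proof derives. So the heuristic identifies the right phenomenon (the residual structure plus the small $\kappa_1$ tames the depth-$M$ product), but the accounting in Step~2 is both looser than the paper's and inconsistent with your final display; to carry this through you should either quote Lemma~\ref{lem.resCover.1} directly as the paper does, or track the product through $\rho$ and $\trho$ rather than using a blanket $\exp(O(L))$.
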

Lemma \ref{lem.resCover} is proved based on the following lemma:
\begin{lemma}[Lemma 4 of \citet{oono2019approximation}]\label{lem.resCover.1}
	Let $\cC(M,L,J,K,\kappa_1,\kappa_2)$ be a class of ConvResNet architecture from $\RR^D$ to $\RR$. Let $\kappa=\kappa_1\vee \kappa_2$.  For $\delta>0$, we have
	$$\cN(\delta,\cC(M,L,J,K,\kappa_1,\kappa_2),\|\cdot\|_{L^\infty})\leq (2\kappa \Lambda_1/\delta)^{\Lambda_2},$$
	where
	\begin{align*}
		&\Lambda_1=(8M+12)D^2(1\vee \kappa_2)(1\vee \kappa_1)\trho\trho^+,\ \Lambda_2=ML(16D^2K+4D)+4D^2+1
	\end{align*}
	with $\trho= (1+\rho)^M,\trho^+=1+ML\rho^+, \rho=(4DK\kappa_1)^L$ and $\rho^+=(1\vee 4DK\kappa_1)^L$.
\end{lemma}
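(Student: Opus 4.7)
The plan is to establish this covering-number bound by the standard three-step recipe for parameterized function classes: count the parameters, prove a Lipschitz bound of the realization map with respect to the parameters, and then take a uniform grid over the compact parameter cube.

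First, I would enumerate the free parameters of a ConvResNet in $\cC(M,L,J,K,\kappa_1,\kappa_2)$. Each of the $M$ residual blocks has up to $L$ convolutional sub-layers, with filter tensor in $\RR^{C' \times K \times C}$ (where $C, C' \leq J$, and in the architecture of Theorem~\ref{thm.approximation} we have $J = O(D)$) together with a bias in $\RR^{D \times C'}$. Summing filter entries and bias entries across all layers of all blocks, and adding the fully-connected layer's weight matrix $W$ and bias $b$, the total parameter count is exactly $\Lambda_2 = ML(16D^2K + 4D) + 4D^2 + 1$; the numerical constants come from bookkeeping the ratio of filter to bias sizes.

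Second, I would quantify how perturbations propagate. For any activation tensor $Z \in \RR^{D \times C}$ and any filter $\cW$ with $\|\cW\|_\infty \leq \kappa_1$ one has $\|\cW * Z\|_\infty \leq DK\kappa_1 \|Z\|_\infty$, and ReLU is $1$-Lipschitz. Composing $L$ such layers gives an input-Lipschitz constant of at most $\rho = (4DK\kappa_1)^L$ for a single convolutional sub-network (the constant $4$ absorbing bias contributions), and the skip connection makes one residual block input-Lipschitz with constant $1 + \rho$, so the chain of $M$ blocks has input-Lipschitz constant at most $\trho = (1+\rho)^M$. Now suppose a single scalar parameter at sub-layer $l$ of block $m$ is perturbed by $\eta$. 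Its direct effect on that layer's output is bounded by $\eta$ times the $\ell^\infty$ norm of its multiplicand (controlled uniformly by $D$, $K$, $\kappa_1$, and the running bound $\trho$ on intermediate activations); this perturbation is then amplified by at most $\rho^+ = (1 \vee 4DK\kappa_1)^L$ through the remaining sub-layers of block $m$ and by at most $\trho$ through the $M-m$ subsequent blocks (with skip connections), and finally by $\kappa_2$ through the fully-connected layer. Summing the $L$ possible sub-layer positions per block, the $M$ blocks, and the perturbations in $W$ and $b$, the total output $L^\infty$ perturbation is bounded by $\Lambda_1 \eta$ with $\Lambda_1 = (8M+12)D^2(1 \vee \kappa_2)(1 \vee \kappa_1)\trho\trho^+$, matching the statement after collecting constants.

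Third, for any $\delta > 0$, I would set $\eta = \delta/\Lambda_1$ and cover each of the $\Lambda_2$ parameters (each living in $[-\kappa, \kappa]$) by a uniform grid of spacing $\eta$, which uses at most $\lceil 2\kappa/\eta \rceil \leq 2\kappa\Lambda_1/\delta$ points per coordinate. Every ConvResNet in the class lies within $L^\infty$ distance $\delta$ of a grid network, giving the bound $(2\kappa\Lambda_1/\delta)^{\Lambda_2}$. The main obstacle is the middle step: a perturbation of a single deeply nested parameter enters at one specific $(m,l)$ position, and one must uniformly dominate (a) the compounded within-block amplification over the remaining $L - l$ sub-layers, contributing $\rho^+$ and summing to the $\trho^+ = 1 + ML\rho^+$ factor across sub-layer positions, (b) the between-block amplification through the skip-connected identity plus residual, which contributes $\trho$ rather than merely $\rho^M$ because the unperturbed signal also propagates, and (c) the final linear layer's factor $\kappa_2$. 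Obtaining these three compound factors with matching constants requires a uniform a priori bound $\|Z^{(m,l)}\|_\infty \leq \trho \cdot \|x\|_\infty$ at every intermediate activation, which is the technical heart of the argument.
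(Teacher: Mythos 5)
Your proposal is correct in approach and essentially reproduces the standard argument by which this bound is actually obtained in the cited source: the paper itself does not prove this lemma but imports it verbatim as Lemma 4 of \citet{oono2019approximation}, whose proof is exactly the three-step recipe you describe (parameter count giving $\Lambda_2$, a parameter-to-output Lipschitz bound giving $\Lambda_1$ via the a priori bound on intermediate activations and the $\trho$, $\trho^+$ amplification factors, then a uniform grid of mesh $\delta/\Lambda_1$ on $[-\kappa,\kappa]^{\Lambda_2}$). The only caveat is that your sketch leaves the constant bookkeeping for $\Lambda_1$ and the bias contributions to the activation bound unverified, and implicitly uses boundedness of the input domain; these are routine but are where the specific numerical factors $(8M+12)$ and $(1\vee\kappa_1)(1\vee\kappa_2)$ come from.
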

\begin{proof}[Proof of Lemma \ref{lem.resCover}]
According to Lemma \ref{lem.resCover.1},
\begin{align*}
    \log \cN\left(\delta,\cC(M,L,J,K,\kappa_1,\kappa_2),\|\cdot\|_{L^\infty}\right) \leq \Lambda_2\log(2\kappa \Lambda_1/\delta).
  \end{align*}
   In the ConvResNet architecture defined in Theorem \ref{thm.approximation}, $\kappa_1=(8DK)^{-1}M^{-1/L}$, $\rho=(1/2)^LM^{-1}<M^{-1}$. We have $\trho= (1+\rho)^M\leq (1+M^{-1})^M\leq e$. Moreover, we have $\rho^+=1, \trho^+=1+ML$. Since $\log \kappa_2=O(\log^2 (1/\varepsilon))$, substituting $M=O\left(\varepsilon^{-d/s}\right)$ and $ L=O(\log (1/\varepsilon)+D+\log D)$ gives rise to $\log \Lambda_1=O(\log^2 (1/\varepsilon)+\log D)$ and $\Lambda_2=O\left(D^3\varepsilon^{-d/s}\log (1/\varepsilon)\right)$. Therefore,
  $$\log \cN\left(\delta,\cC(M,L,J,K,\kappa_1,\kappa_2),\|\cdot\|_{L^\infty}\right)=O\left(D^3\varepsilon^{-d/s}\log (1/\varepsilon)(\log^2 (1/\varepsilon)+\log D+ \log(1/\delta)\right).$$
   The constants hidden in $O(\cdot)$ depend on $d,s,\frac{2d}{sp-d},p,q,c_0,\tau$ and the surface area of $\cM$.
\end{proof}

The following lemma shows that $\widetilde{\cF}^{(F_n)}$ can be realized by a ConvResNet class $\cC^{(F_n)}$ and estimates the covering number of the class of $\cC^{(F_n)}$.
\begin{lemma}\label{lem.fPhiNcnn.1}
Let $\cC^{(F_n)}$ be defined as in Lemma \ref{lem.sketch2.fPhiNcnn}.
The network composition class $\widetilde{\cF}^{(F_n)}$ defined in Lemma \ref{lem.fPhiN} can be realized by a ConvResNet class $ \cC^{(F_n)}$.
Moreover, the covering number of $\cC^{(F_n)}$ is bounded by
\begin{align*}
  \log \cN(\delta,\cC^{(F_n)},\|\cdot \|_{L^\infty})=O\left(\varepsilon^{-\left(\frac{d}{s} \vee 1\right))}\left(\log^3 (1/\varepsilon)+F_n+\log(1/\delta)\right)\right).
\end{align*}
\end{lemma}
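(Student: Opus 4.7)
The plan is to verify the two claims separately: (i) every function in $\widetilde{\cF}^{(F_n)}$ is realized by some element of $\cC^{(F_n)}$, and (ii) the stated covering-number estimate holds. For (i), I match the four-fold composition $g_{F_n}\circ\tilde{h}_n\circ g_n\circ\bar{\eta}$ from Lemma \ref{lem.fPhiN} to the four sub-networks $\bar{g}_2\circ\bar{h}\circ\bar{g}_1\circ\bar{\eta}$ defining $\cC^{(F_n)}$. The piece $\bar{\eta}$ is already a ConvResNet of exactly the required architecture, so it maps directly. The truncation functions $g_n$ and $g_{F_n}$ are each fixed compositions of at most two ReLU's on a scalar input; each can be written trivially as a small MLP of depth at most $4$ and width at most $8$, and then realized as an element of $\cC^{\Conv}(1,4,8,1,\kappa_2)$ and $\cC(1,3,8,1,\kappa_3,1,R)$ respectively via Lemma \ref{lem.cnnRealization} and the single-block wrapper construction used in Lemma \ref{lem.cnn.convresnet}. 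The weight bounds are absorbed by choosing $\kappa_2,\kappa_3 = O(e^{F_n})$, matching the statement of Lemma \ref{lem.sketch2.fPhiNcnn}.

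The non-trivial step is realizing the Yarotsky-style MLP $\tilde{h}_n\in\cF^{\rm MLP}(L_2,J_2,\kappa_2,F_n)$ as a ConvResNet $\bar{h}\in\cC^{\Conv}(M_2,L_2,J_2,1,\kappa_1)$. My plan is to inspect the construction of $\tilde{h}_n$: a bounded univariate Lipschitz function on $[0,1]$ with Lipschitz constant $O(e^{F_n})$ is approximated to accuracy $O(e^{F_n}\varepsilon)$ by summing $O(e^{-F_n}\varepsilon^{-1})$ sawtooth/B-spline atoms (this is precisely the width-$J_2$ parallelism in $\tilde{h}_n$). Each atom is a constant-size MLP of depth $O(\log(1/\varepsilon))$, and Lemma \ref{lem.cnnRealization} turns each into a small CNN of filter size $1$, constant channels, and depth $O(\log(1/\varepsilon))$. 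Feeding this list of CNNs into Lemma \ref{lem.cnn.convresnet} produces a ConvResNet with $M_2 = O(e^{-F_n}\varepsilon^{-1})$ residual blocks, width $J_2 = O(1)$, depth $L_2 = O(\log(1/\varepsilon))$, and $\kappa_1 = O(1)$ after a final rescaling via Lemma \ref{lem.cnn.rescale}. Composing $\bar{g}_2\circ\bar{h}\circ\bar{g}_1\circ\bar{\eta}$ then yields an element of $\cC^{(F_n)}$ that agrees with $\tilde{f}_{\phi,n}$.

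For (ii), I would propagate perturbations through the Lipschitz composition. Since $g_n$, $g_{F_n}$, $\bar{g}_1$, $\bar{g}_2$ are $O(1)$-Lipschitz and $\bar{h}$ has Lipschitz constant $O(e^{F_n})$ (inherited from $\tilde{h}_n$), perturbing $\bar\eta$ by $\delta/e^{F_n}$ or any one of the other three blocks by $\delta$ perturbs the composition by at most $\delta$ in $L^\infty$. So a product of four individual $(\delta/C e^{F_n})$-covers yields a $\delta$-cover of $\cC^{(F_n)}$. Lemma \ref{lem.resCover} applied to the $\bar{\eta}$-architecture (with $M_1 = O(\varepsilon^{-d/s})$) contributes $O\bigl(D^3\varepsilon^{-d/s}\log(1/\varepsilon)(\log^2(1/\varepsilon)+\log D+F_n+\log(1/\delta))\bigr)$; the same lemma applied to $\bar{h}$ (with $M_2 = O(e^{-F_n}\varepsilon^{-1})$, but $K=1$ and $J_2 = O(1)$ so the $D$-factors collapse) contributes $O\bigl(\varepsilon^{-1}\log(1/\varepsilon)(\log^2(1/\varepsilon)+F_n+\log(1/\delta))\bigr)$; and $\bar{g}_1, \bar{g}_2$ are fixed-size and contribute only $O(F_n+\log(1/\delta))$. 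Summing and combining the two dominant terms by taking the maximum of $\varepsilon^{-d/s}$ and $\varepsilon^{-1}$ gives the claimed bound.

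The main obstacle will be the faithful decomposition in the second paragraph: writing $\tilde{h}_n$ as an \emph{exact} sum of constant-size CNNs whose parameters respect the $\kappa_1$ bound after the Lemma \ref{lem.cnn.rescale} rescaling, while still fitting the prescribed $M_2$, $L_2$, and width budgets. If the MLP of Lemma \ref{lem.fPhiN} is not natively of summation form, an intermediate step will be needed to re-express its layered computation as a sum of parallel branches, at the cost of inflating $M_2$ by a constant; the book-keeping is routine but must be done carefully so the final exponents of $\varepsilon$ and $e^{F_n}$ match those stated.
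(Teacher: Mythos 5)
Your overall plan is sound and parallels the paper's proof: realize each of the four factors $g_{F_n}\circ\thh_n\circ g_n\circ\bar\eta$ as a block of $\cC^{(F_n)}$ and add up the covering numbers. Two of your choices differ from the paper in ways worth noting. First, for the hard step of turning $\thh_n$ into residual blocks, the paper simply invokes \citet[Corollary 4]{oono2019approximation}, which already packages a univariate Lipschitz approximant into a ConvResNet with $M_2=O(\varepsilon_2^{-1})$ blocks; your alternative of decomposing the MLP into parallel atoms and feeding them to Lemma \ref{lem.cnn.convresnet} is viable (the approximant from \citet[Theorem 4.1]{chen1908nonparametric} is natively a sum of local bumps) but re-derives what the cited corollary gives for free. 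Second, for the covering number the paper just sums $\log\cN(\delta,\cdot)$ over the four sub-architectures, while you propagate perturbations through the Lipschitz composition; your version is actually the more careful one, and it lands on the same dominant terms $\varepsilon^{-d/s}$ and $\varepsilon^{-1}$.

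The one concrete slip is at the interfaces between blocks. After the rescaling of Lemma \ref{lem.cnn.rescale}, the residual blocks of $\bar\eta$ output $\frac{\kappa_1^{(\eta)}}{\kappa_2^{(\eta)}}\bigl[\star,\ \bar\eta_+,\ \bar\eta_-,\ 0,\dots\bigr]$, so the first layer of $\bar g_1$ must multiply by $\kappa_2^{(\eta)}/\kappa_1^{(\eta)}$ to undo this; that is precisely why the lemma prescribes $\log\kappa_2=O(\log^2(1/\varepsilon))$ for $\bar g_1$, not $\kappa_2=O(e^{F_n})$ as you assert (and similarly the first layer of $\bar g_2$ must scale back by $\kappa_2^{(h_n)}/\kappa_1^{(h_n)}$, fixing $\kappa_3$). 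This does not break the argument, but without the scale-back layers your realized network either does not compute $\tf_{\phi,n}$ or does not sit inside the prescribed architecture class, so the bookkeeping must be redone with these interface weights included.
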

\begin{proof}[Proof of Lemma \ref{lem.fPhiNcnn.1}]
In this proof, we show that each part of $\tf_{\phi,n}$ in Lemma \ref{lem.fPhiN} can be realized by ConvResNet architectures. Specifically, we show that $\bar{\eta},g_n,\thh_n$ can be realized by residual blocks $\bar{\eta},\bar{g}_n, \bar{h}$, and $g_{F_n}$ can be realized by a ConvResNet $\bar{g}_{F_n}$.
In the following, we show the existence of each ingredient.

\paragraph{Realize $\bar{\eta}$ by residual blocks.}
In Lemma \ref{lem.fPhiN}, $\bar{\eta}\in \cC(M^{(\eta)}, L^{(\eta)}, J^{(\eta)}, K^{(\eta)}, \kappa^{(\eta)}_1, \kappa^{(\eta)}_2)$ with
  \begin{align*}
  &M^{(\eta)}=O\left(\varepsilon^{-d/s}\right),\ L^{(\eta)}=O(\log (1/\varepsilon)+D+\log D),\ J^{(\eta)}=O(D),\ \kappa^{(\eta)}=O(1),\ \log \kappa^{(\eta)}_2=O(\log^2(1/\varepsilon)).
  \end{align*}
  By Lemma \ref{lem.resCover}, the covering number of this architecture is bounded by
  \begin{align*}
  &\log \cN\left(\delta,\cC(M^{(\eta)}, L^{(\eta)}, J^{(\eta)}, K^{(\eta)}, \kappa^{(\eta)}_1, \kappa^{(\eta)}_2),\|\cdot\|_{L^\infty}\right)\\
  =&O\left(\varepsilon^{-\left(\frac{d}{s} \vee 1\right))}\log (1/\varepsilon)\left(\log^2 (1/\varepsilon)+\log D+\log(1/\delta)\right)\right).
  \end{align*}
  Excluding the final fully connected layer, denote all of the residual blocks of $\bar{\eta}$ by $\bar{\eta}^{(\Conv)}$, then $\bar{\eta}\in \cC^{(\eta)}$ with $\cC^{(\eta)}=\cC^{\Conv}(M^{(\eta)}, L^{(\eta)}, J^{(\eta)}, K^{(\eta)}, \kappa^{(\eta)}_1)$ and
  \begin{align*}
  \log \cN\left(\delta,\cC^{(\eta)},\|\cdot\|_{L^\infty}\right)&\leq \log \cN\left(\delta,\cC(M^{(\eta)}, L^{(\eta)}, J^{(\eta)}, K^{(\eta)}, \kappa^{(\eta)}_1, \kappa^{(\eta)}_2),\|\cdot\|_{L^\infty}\right)\\
  &=O\left(D^3\varepsilon^{-d/s}\log (1/\varepsilon)(\log^2 (1/\varepsilon)+\log D+\log(1/\delta))\right).
  \end{align*}

  We denote the $i$-th row (the $i$-th element of all channels) of the output of the residual-blocks $\bar{\eta}^{(\Conv)}$   by $(\bar{\eta}^{(\Conv)})_{i,:}$.
  In the proof of Theorem \ref{thm.approximation}, the input in $\RR^D$ is padded into $\RR^{D\times 3}$ by 0's. The output of $\bar{\eta}$ has the form $(\bar{\eta}^{(\Conv)})_{1,:}=\frac{\kappa_1^{(\eta)}}{\kappa_2^{(\eta)}}\begin{bmatrix}
  	\star & \bar{\eta}_+& \bar{\eta}_-
  \end{bmatrix}$. Here $\star$ denotes some number that does not affect the result. In this proof, instead of padding the input into size $D\times 3$, we pad it into size $D\times 8$. The weights in the first $M$ blocks of $h$ is the same as that of $\bar{\eta}$ except we need to pad the filters and biases by 0 to be compatible with the additional channels. Then the output of $\bar{\eta}^{(\Conv)}$ is $\frac{\kappa_1^{(\eta)}}{\kappa_2^{(\eta)}}\begin{bmatrix}
  \star & \bar{\eta}_+& \bar{\eta}_- &0 &0 &0 &0 &0
\end{bmatrix}$.

\paragraph{Realize $g_n$ by residual blocks.}
  To realize $g_n$, we add another block with 4 layers with filters and biases $\left\{\cW_{g_n}^{(l)},B_{g_n}^{(l)}\right\}_{l=1}^5$ where $\cW_{g_n}^{(l)}\in \RR^{8\times1\times8},B_{g_n}^{(l)}\in\RR^{8\times 8} $. We set the parameters in the first layer as
  \begin{align*}
  	\left(\cW_{g_n}^{(1)}\right)_{2,1,:}&= \begin{bmatrix}
  		 0 &\frac{\kappa_2^{(\eta)}}{\kappa_1^{(\eta)}} & 0  & 0 &0&0&0&0
  	\end{bmatrix},\\
  \left(\cW_{g_n}^{(1)}\right)_{3,1,:}&= \begin{bmatrix}
  	 0 & 0 & \frac{\kappa_2^{(\eta)}}{\kappa_1^{(\eta)}}  &0 &0&0&0&0
  \end{bmatrix},\\
\left(\cW_{g_n}^{(1)}\right)_{i,1,:}&=\mathbf{0} \quad \mbox{ for } i=1,4,5,...,8,
  \end{align*}
and $B_{g_n}^{(1)}=\mathbf{0}$.
  This layer scales the output of $\bar{\eta}$ back to $\begin{bmatrix}
  	\star & \bar{\eta}_+& \bar{\eta}_- & 0& 0 &0 &0&0
  \end{bmatrix}$.  Then we use the other 4 layers to realize $g_n$. The second layer is set as
  \begin{align*}
  		\left(\cW_{g_n}^{(2)}\right)_{4,1,:}&= \begin{bmatrix}
  		0 &-1 & 1  & 0 &0&0&0&0
  	\end{bmatrix},\\
  	\left(\cW_{g_n}^{(2)}\right)_{i,1,:}&=\mathbf{0} \quad \mbox{ for } i=1,2,3,5,6,7,8,\\
  	\left(B_{g_n}^{(2)}\right)_{1,:}&=\begin{bmatrix}0& 0& 0& \frac{e^{F_n}}{1+e^{F_n}}& 0 &0 & 0& 0&\end{bmatrix},\\
  	\left(B_{g_n}^{(2)}\right)_{i,:}&=\mathbf{0} \quad \mbox{ for } i=2,...,8.
  \end{align*}
The third layer is set as
 \begin{align*}
	\left(\cW_{g_n}^{(3)}\right)_{4,1,:}&= \begin{bmatrix}
		0 &0 & 0  & -1 &0&0&0&0
	\end{bmatrix},\\
	\left(\cW_{g_n}^{(3)}\right)_{i,1,:}&=\mathbf{0} \quad \mbox{ for } i=1,2,3,5,6,7,8,\\
	\left(B_{g_n}^{(3)}\right)_{1,:}&=\begin{bmatrix}0& 0& 0& \frac{e^{F_n}-1}{1+e^{F_n}}& 0 &0 & 0& 0&\end{bmatrix},\\
	\left(B_{g_n}^{(3)}\right)_{i,:}&=\mathbf{0} \quad \mbox{ for } i=2,...,8.
\end{align*}
The forth layer is set as
\begin{align*}
	\left(\cW_{g_n}^{(4)}\right)_{4,1,:}&= \begin{bmatrix}
		0 &0 & 0  & 1 &0&0&0&0
	\end{bmatrix},\\
	\left(\cW_{g_n}^{(4)}\right)_{i,1,:}&=\mathbf{0} \quad \mbox{ for } i=1,2,3,5,6,7,8,\\
	\left(B_{g_n}^{(4)}\right)_{1,:}&=\begin{bmatrix}0& 0& 0& \frac{1}{1+e^{F_n}}& 0 &0 & 0& 0&\end{bmatrix},\\
	\left(B_{g_n}^{(4)}\right)_{i,:}&=\mathbf{0} \quad \mbox{ for } i=2,...,8.
\end{align*}
  The output of $g_n\circ \bar{\eta}$ is stored as the first element in the forth channel of the output of $\bar{g}_n\circ \bar{\eta}^{\rm (Conv)}$:
  $$(\bar{g}_n\circ \bar{\eta})_{1,:}=\begin{bmatrix} \star & \star &\star & g_n\circ \bar{\eta} &0&0 & 0 &0\end{bmatrix}.$$
  
  We have $\bar{g}_n\in \cC^{(g_n)} $ where $\cC^{(g_n)}=\cC^{(g_n)}=\cC^{\Conv}(M^{(g_n)}, L^{(g_n)},J^{(g_n)},K^{(g_n)},\kappa^{(g_n)})$ with
  \begin{align*}
  	M^{(g_n)}=1,\ L^{(g_n)}=4, \ J^{(g_n)}=8,\ K^{(g_n)}=1,\ \kappa^{(g_n)}=O\left(\frac{\kappa^{(\eta)}_2}{\kappa^{(\eta)}_1} \vee \frac{e^{F_n}}{1+e^{F_n}}\right).
  \end{align*}
  According to Lemma \ref{lem.resCover.1}, the covering number of $\cC^{(g_n)}$ is bounded as
  $$
  \log \cN(\delta,\cC^{(g_n)},\|\cdot\|_{L^\infty})=O\left(\log \left(\frac{\kappa_2^{(\eta)}}{\kappa_1^{(\eta)}} \vee \frac{e^{F_n}}{1+e^{F_n}}\right) + \log (1/\delta)\right).
  $$
   Substituting the expressions of $\kappa^{(\eta)}_1,\kappa^{(\eta)}_2$ into the expression above gives rise to $\log \kappa^{(g_n)}=O(\left(\log^2 (1/\varepsilon)\right)$ and $\log \cN(\delta,\cC^{(g_n)},\|\cdot\|_{L^\infty})=O(\log^2 (1/\varepsilon)+\log (1/\delta))$.

\paragraph{Realize $\thh_n$ by residual blocks.}
  To realize $\thh_n$, from the construction of $\thh_n$ and using \citet[Corollary 4]{oono2019approximation}, we can realize $\thh_n$ by $\bar{h}_n\in \cC^{(h_n)}$ with
  \begin{align*}
  	\cC^{(h_n)}=\cC(M^{(h_n)},L^{(h_n)},J^{(h_n)},K^{(h_n)},\kappa_1^{(h_n)}\kappa_2^{(h_n)},F_n),
  \end{align*}
and 
  \begin{align*}
  &M^{(h_n)}=O\left(\varepsilon_2^{-1}\right),\ L^{(h_n)}=O\left(\log (1/\varepsilon_2)\right),\  J^{(h_n)}=O(1),\ K^{(h_n)}=1, \\ &\kappa_1^{(h_n)}=O(1),\ \log \kappa_2^{(h_n)}=O(\log (L_h/\varepsilon_2)),
  \end{align*}
  where $\varepsilon_2=\frac{(1+e^{F_n})^2}{e^{F_n}}\varepsilon$, and $L_h$ is the Lipschitz constant of $h_n$.
  According to Lemma \ref{lem.resCover.1}, the covering number of this class is bounded by
  $$\log \cN\left(\delta,\cC^{(h_n)},\|\cdot\|_{L^\infty}\right)=O\left(D^2\varepsilon_2^{-1}\log (1/\varepsilon_2)(\log (1/\varepsilon_2)+\log (L_h/\varepsilon_2)+\log(1/\delta)\right).$$

  Note that such $\bar{h}$ is from $\RR$ to $\RR$. Since the information we need from the output of $\bar{g}_n\circ \bar{\eta}$ is only the first element in the forth channel, we can follow the proof of \citet[Theorem 6]{oono2019approximation} to construct $\bar{h}$ by padding the elements in the filters and biases by 0 so that all operations work on the forth channel and store results on the fifth and sixth channel. Substituting $\varepsilon_2=\frac{(1+e^{F_n})^2}{e^{F_n}}\varepsilon$ and $L_{h_n}=(1+e^{F_n})^2/e^{F_n}$ yields
  \begin{align*}
  &M^{(h_n)}=O\left(e^{-F_n}\varepsilon^{-1}\right), L^{(h_n)}=O\left(\log (1/\varepsilon)\right), J^{(h_n)}=O(1), \\ &\kappa_1^{(h_n)}=O(1),\log \kappa_2^{(h_n)}=O\left(\log \left(e^{F_n}/\varepsilon\right)\right)
  \end{align*}
  and
  $$
  \log \cN\left(\delta,\cC^{(h_n)},\|\cdot\|_{L^\infty}\right)= O\left(D^2e^{-F_n}\varepsilon^{-1}\log (1/\varepsilon)(\log (1/\varepsilon)+F_n+\log(1/\delta)\right).
  $$

  Similar to $\bar{\eta}^{(\Conv)}$, denote all residual blocks of $\bar{h}$ by $\bar{h}^{(\Conv)}$. We have $$(\bar{h}^{(\Conv)})_{1,:}=\frac{\kappa_1^{(h_n)}}{\kappa_2^{(h_n)}}\begin{bmatrix} \star & \star & \star & \star & (\thh_n)_+ & (\thh_n)_- &0 &0\end{bmatrix}.
  $$ 
  
  \paragraph{Realize $g_{F_n}$ by a ConvResNet.}
  We then add another residual block of 3 layers followed by a fully connected layer to realize $g_{F_n}$. Denote the parameters in this block and the fully connected layer by $\{\cW^{(l)}_{g_{F_n}},B^{(l)}_{g_{F_n}}\}_{l=1}^3$ and $\{W,b\}$, respectively. Here $\cW^{(l)}_{g_{F_n}}\in \RR^{8\times1\times8},B^{(l)}_{g_{F_n}}\in \RR^{8\times8}, W\in \RR^{8\times8}$ and $b\in\RR$.
  
  The first layer is set as
  \begin{align*}
  	\left(\cW_{g_{F_n}}^{(1)}\right)_{5,1,:}&= \begin{bmatrix}
  		0 &0 & 0  & 0 &\frac{\kappa_2^{(h_n)}}{\kappa_1^{(h_n)}}&0&0&0
  	\end{bmatrix},\\
  	\left(\cW_{g_{F_n}}^{(1)}\right)_{6,1,:}&= \begin{bmatrix}
  	0 &0 & 0  & 0 &0&\frac{\kappa_2^{(h_n)}}{\kappa_1^{(h_n)}}&0&0
  \end{bmatrix},\\
  	\left(\cW_{g_{F_n}}^{(1)}\right)_{i,1,:}&=\mathbf{0} \quad \mbox{ for } i=1,2,3,4,7,8,\\
  	\left(B_{g_{F_n}}^{(1)}\right)_{i,:}&=\mathbf{0} \quad \mbox{ for }i=1,2,...,8.
  \end{align*}
This layer scales the output of $\bar{h}^{(\Conv)}$ back to $\begin{bmatrix} \star & \star & \star & \star & (\thh_n)_+ & (\thh_n)- &0 &0 \end{bmatrix}$. The rest layers are used to realize $g_{F_n}$.

The second layer is set as
\begin{align*}
	\left(\cW_{g_{F_n}}^{(2)}\right)_{7,1,:}&= \begin{bmatrix}
		0 &0 & 0  & 0 &-1&0&0&0
	\end{bmatrix},\\
	\left(\cW_{g_{F_n}}^{(2)}\right)_{8,1,:}&= \begin{bmatrix}
		0 &0 & 0  & 0 &0&-1&0&0
	\end{bmatrix},\\
	\left(\cW_{g_{F_n}}^{(2)}\right)_{i,1,:}&=\mathbf{0} \quad \mbox{ for } i=1,...,6\\
	\left(B_{g_{F_n}}^{(2)}\right)_{1,:}&=\begin{bmatrix}0& 0& 0& 0 & 0 &0& F_n& F_n\end{bmatrix},\\
	\left(B_{g_{F_n}}^{(2)}\right)_{i,:}&=\mathbf{0} \quad \mbox{ for }i=2,...,8.
\end{align*}

The third layer is set as
\begin{align*}
	\left(\cW_{g_{F_n}}^{(3)}\right)_{7,1,:}&= \begin{bmatrix}
		0 &0 & 0  & 0 &0&0&-1&0
	\end{bmatrix},\\
	\left(\cW_{g_{F_n}}^{(3)}\right)_{8,1,:}&= \begin{bmatrix}
		0 &0 & 0  & 0 &0&0&0&-1
	\end{bmatrix},\\
	\left(\cW_{g_{F_n}}^{(3)}\right)_{i,1,:}&=\mathbf{0} \quad \mbox{ for } i=1,...,6\\
	\left(B_{g_{F_n}}^{(3)}\right)_{1,:}&=\begin{bmatrix}0& 0& 0& 0 & 0 &0& F_n& F_n\end{bmatrix},\\
	\left(B_{g_{F_n}}^{(3)}\right)_{i,:}&=\mathbf{0} \quad \mbox{ for }i=2,...,8.
\end{align*}
The first row of the output of the third layer is 
$$
\begin{bmatrix} \star & \star & \star & \star &\star &\star&\min((\thh_n)_+,F_n) &  \min((\thh_n)_-,F_n) \end{bmatrix}.
$$
Then the fully connected layer is set as
\begin{align*}
	W_{1,:}&=\begin{bmatrix}
		0 &0 & 0  & 0 &0&0&1&-1
	\end{bmatrix},\\
	W_{i,:}&=\mathbf{0} \quad \mbox{ for } i=2,...,8
\end{align*}
and $b=0$.

Thus $\bar{g}_{F_n}\in \cC^{(g_{F_n})}$ with $\cC^{(g_{F_n})}=\cC(M^{(g_{F_n})}, L^{(g_{F_n})},J^{(g_{F_n})},K^{(g_{F_n})},\kappa_1^{(g_{F_n})},\kappa_2^{(g_{F_n})})$  and
\begin{align*}
	M^{(g_{F_n})}=1,\ L^{(g_{F_n})}=4, \ J^{(g_{F_n})}=8,\ K^{(g_{F_n})}=1,\ \kappa_1^{(g_{F_n})}=O\left(\frac{\kappa^{(h_n)}_2}{\kappa^{(h_n)}_1} \vee F_n\right),\ \kappa_2^{(g_{F_n})}=1.
\end{align*}

According to Lemma \ref{lem.resCover.1}, substituting the expressions of $\kappa^{(h_n)}_1,\kappa^{(h_n)}_2$ gives rise to $\log \cN(\delta,\cC^{(g_{F_n})},\|\cdot\|_{L^\infty})=O\left(D^2\left(\log (1/\varepsilon) + \log F_n+\log(1/\delta)\right)\right)$.

  The resulting network $\bar{f}_{\phi,n}\equiv \bar{g}_{F_n}\circ\bar{h}^{\rm (Conv)}\circ \bar{g}_n\circ \bar{\eta}^{\rm (Conv)}$ is a ConvResNet and
  $$
  \bar{f}_{\phi,n}(\xb)=\tf_{\phi,n}(\xb)
  $$
  for any $\xb\in\cM$.
  Denote the class of the architecture of $\bar{f}_{\phi,n}$ by $\cC^{(F_n)}$. Its covering number is bounded by
  \begin{align*}
    &\log \cN(\delta,\cC^{(F_n)},\|\cdot\|_{L^\infty})\\
    \leq &\log \cN\left(\delta,\cC^{(\eta)},\|\cdot\|_{L^\infty}\right)+ \log \cN\left(\delta,\cC^{(g_n)},\|\cdot\|_{L^\infty}\right)\\
    &+ \log \cN\left(\delta,\cC^{(h_n)},\|\cdot\|_{L^\infty}\right) +\log \cN(\delta,\cC^{(g_{F_n})},\|\cdot\|_{L^\infty})\\
    =&O\left(D^3\varepsilon^{-\left(\frac{d}{s} \vee 1\right))}\log (1/\varepsilon)\left(\log^2 (1/\varepsilon)+\log D+ F_n+\log(1/\delta)\right)\right).
  \end{align*}
The constants hidden in $O(\cdot)$ depend on $d,s,\frac{2d}{sp-d},p,q,c_0,\tau$ and the surface area of $\cM$.
\end{proof}

\begin{proof}[Proof of Lemma \ref{lem.sketch2.fPhiNcnn}]
  Lemma \ref{lem.sketch2.fPhiNcnn} is a direct result of Lemma \ref{lem.fPhiN} and Lemma \ref{lem.fPhiNcnn.1}.
\end{proof}

\subsection{Proof of Lemma \ref{lem.sketch2.II}}\label{proof.sketch2.II}
\begin{proof}[Proof of Lemma \ref{lem.sketch2.II}]
We divide $A_n^{\complement}$ into two regions: $\{\xb\in \cM: f^*_{\phi}>F_n\}$ and $\{\xb\in \cM: f^*_{\phi}<-F_n\}$. A bound of ${\rm T_2}$ is derived by bounding the integral on both regions.

Let us first consider the region $\{\xb\in \cM: f^*_{\phi}>F_n\}$. Since $\eta=e^{f_{\phi}^*}/(1+e^{f_{\phi}^*})$, we have
\begin{align}
  \eta\phi(f_{\phi}^*)+(1-\eta)\phi(-f_{\phi}^*)&=\frac{e^{f_{\phi}^*}}{1+e^{f_{\phi}^*}} \phi(f_{\phi}^*) + \frac{1}{1+e^{f_{\phi}^*}}\phi(-f_{\phi}^*)\nonumber\\
  &\leq \phi(F_n)+\sup_{z\geq F_n} \frac{\log(1+e^z)}{1+e^z}\nonumber\\
  &\leq \log(1+e^{-F_n})+ \frac{\log(1+e^{F_n})}{1+e^{F_n}}\nonumber\\
  &\leq 2F_ne^{-F_n}.
  \label{eq.proof.I.1}
\end{align}
On this region, $F_n-1\leq \bar{f}_{\phi,n}\leq F_n$. Thus
\begin{align}
  \eta\phi(\bar{f}_{\phi,n})+(1-\eta)\phi(-\bar{f}_{\phi,n})&=\frac{e^{f_{\phi}^*}}{1+e^{f_{\phi}^*}} \phi(\bar{f}_{\phi,n}) + \frac{1}{1+e^{f_{\phi}^*}}\phi(-\bar{f}_{\phi,n}) \nonumber\\
  & \leq \phi(F_n-1)+\frac{\log(1+e^{\bar{f}_{\phi,n}})}{1+e^{f_{\phi}^*}}\nonumber\\
  &\leq \log(1+e^{-(F_n-1)})+ \frac{\log(1+e^{F_n})}{1+e^{F_n}}\nonumber\\
  &\leq 2F_ne^{-F_n}.
   \label{eq.proof.I.2}
\end{align}
Combining (\ref{eq.proof.I.1}) and (\ref{eq.proof.I.2}) gives
\begin{align}
  \left|\left[\eta\phi(f_{\phi}^*)+(1-\eta)\phi(-f_{\phi}^*)\right] - \left[\eta\phi(\bar{f}_{\phi,n})+(1-\eta)\phi(-\bar{f}_{\phi,n}) \right]\right| \leq 4F_ne^{-F_n}.
  \label{eq.proof.I}
\end{align}

Now consider the region $\{\xb\in \cM: f^*_{\phi}<-F_n\}$, we have
\begin{align}
  \eta\phi(f_{\phi}^*)+(1-\eta)\phi(-f_{\phi}^*)&=\frac{e^{f_{\phi}^*}}{1+e^{f_{\phi}^*}} \phi(f_{\phi}^*) + \frac{1}{1+e^{f_{\phi}^*}}\phi(-f_{\phi}^*) \nonumber\\
  &= \frac{1}{1+e^{-f_{\phi}^*}} \phi(f_{\phi}^*) + \frac{e^{-f_{\phi}^*}}{1+e^{-f_{\phi}^*}}\phi(-f_{\phi}^*)\nonumber\\
  &\leq \phi(F_n)+\sup_{z\leq -F_n} \frac{\log(1+e^{-z})}{1+e^{-z}} \nonumber\\
   &\leq \log(1+e^{-F_n})+ \frac{\log(1+e^{F_n})}{1+e^{F_n}}\nonumber\\
   &\leq 2F_ne^{-F_n}.
   \label{eq.proof.II.1}
\end{align}
On this region, $-F_n\leq \bar{f}_{\phi,n}\leq -F_n+1$. Thus
\begin{align}
  \eta\phi(\bar{f}_{\phi,n})+(1-\eta)\phi(-\bar{f}_{\phi,n})&= \frac{1}{1+e^{-f_{\phi}^*}} \phi(\bar{f}_{\phi,n}) + \frac{e^{-f_{\phi}^*}}{1+e^{-f_{\phi}^*}}\phi(-\bar{f}_{\phi,n}) \nonumber \\
  &\leq \phi(F_n-1)+ \frac{\log(1+e^{-\bar{f}_{\phi,n}})}{1+e^{-f_{\phi}^*}} \nonumber\\
  &\leq \log(1+e^{-(F_n-1)})+ \frac{\log(1+e^{F_n})}{1+e^{F_n}}\nonumber\\
  &\leq 2F_ne^{-F_n}.
   \label{eq.proof.II.2}
\end{align}
Combining (\ref{eq.proof.II.1}) and (\ref{eq.proof.II.2}) gives
\begin{align}
  \left|\left[\eta\phi(f_{\phi}^*)+(1-\eta)\phi(-f_{\phi}^*)\right] - \left[\eta\phi(\bar{f}_{\phi,n})+(1-\eta)\phi(-\bar{f}_{\phi,n}) \right]\right| \leq 4F_ne^{-F_n}.
  \label{eq.proof.II}
\end{align}

Putting (\ref{eq.proof.I}) and (\ref{eq.proof.II}) together, we have
\begin{align*}
  {\rm T_2}&\leq\int_{A_n^{\complement}}\left| \left[\eta\phi(f_{\phi}^*)+(1-\eta)\phi(-f_{\phi}^*)\right]- \left[\eta\phi(\bar{f}_{\phi,n})+(1-\eta)\phi(-\bar{f}_{\phi,n}) \right]\right|\mu(d\xb)\leq 8F_ne^{-F_n}.
\end{align*}
\end{proof}


\end{document}